\newcommand{\cmark}{\ding{52}}%
\newcommand{\xmark}{\ding{56}}%
\setlist{leftmargin=*}
\let\footnote=\endnote
\newcommand{\rad}{\mathsf{rad}\text{-}}
\newcommand{\AAA}{\mathcal{A}}
\renewcommand{\H}{\mathcal{H}}
\newcommand{\SSS}{\mathcal{S}}
\renewcommand{\Re}{\mathbb{R}}
\newcommand{\dev}{\mathsf{var}}
\newcommand{\reward}{\textsf{R}}
\newcommand{\pse}{\text{pseudo}}
\newcommand{\ie}{\textit{i.e.}}
\newcommand{\eg}{\textit{e.g.}}
\newcommand{\sw}{\texttt{SWUCRL2-CW} algorithm}
\newcommand{\borl}{\texttt{BORL} algorithm}
\newcommand{\E}{\mathbb{E}}
\definecolor{darkolivegreen}{rgb}{0.33, 0.42, 0.18}
\DeclareSymbolFont{extraup}{U}{zavm}{m}{n}
\DeclareMathSymbol{\varheart}{\mathalpha}{extraup}{86}
\DeclareMathSymbol{\vardiamond}{\mathalpha}{extraup}{87}
\newtheorem{property}{Property}
\begin{document}


\RUNAUTHOR{Cheung, Simchi-Levi, and Zhu}

\RUNTITLE{Drifting Reinforcement Learning}

\TITLE{Non-Stationary Reinforcement Learning: The Blessing of (More) Optimism}

\ARTICLEAUTHORS{
\AUTHOR{Wang Chi Cheung}
\AFF{Department of Industrial Systems Engineering and Management, National University of Singapore \\
	\EMAIL{isecwc@nus.edu.sg}}
\AUTHOR{David Simchi-Levi}
\AFF{Institute for Data, Systems, and Society, Massachusetts Institute of Technology, Cambridge, MA 02139\\ \EMAIL{dslevi@mit.edu}}
\AUTHOR{Ruihao Zhu}
\AFF{Institute for Data, Systems, and Society, Massachusetts Institute of Technology, Cambridge, MA 02139\\
	 \EMAIL{rzhu@mit.edu}} 
} 

\ABSTRACT{%
		We consider un-discounted reinforcement learning (RL) in  Markov decision processes (MDPs) under temporal drifts, \ie, both the reward and state transition distributions are allowed to evolve over time, as long as their respective total variations, quantified by suitable metrics, do not exceed certain \emph{variation budgets}. This setting captures endogeneity, exogeneity, uncertainty, and partial feedback in sequential decision-making scenarios, and finds applications in various online marketplaces, epidemic control, and transportation. We first develop the Sliding Window Upper-Confidence bound for Reinforcement Learning with Confidence Widening (\texttt{SWUCRL2-CW}) algorithm, and establish its \emph{dynamic regret} bound when the variation budgets are known. In addition, we propose the Bandit-over-Reinforcement Learning (\texttt{BORL}) algorithm to adaptively tune the \sw~to achieve the same dynamic regret bound, but  in a \emph{parameter-free} manner, \ie, without knowing the variation budgets. Finally, we conduct numerical experiments to show that our proposed algorithms achieve superior empirical performance compared with existing algorithms.
		
		Notably, the interplay between endogeneity and exogeneity presents a unique challenge, absent in existing (stationary and non-stationary) stochastic online learning settings, when one applies the conventional Optimism in Face of Uncertainty (OFU) principle to design algorithms with provably low dynamic regret for RL in non-stationary MDPs. We overcome this challenge by a novel confidence widening technique that incorporates additional optimism into our learning algorithms to ensure low dynamic regret bounds. To extend our theoretical findings, we demonstrate, in the context of single item inventory control with fixed cost, how one can leverage special structures on the state transition distributions to bypass the difficulty of exploring time-varying environments.
}%


\KEYWORDS{reinforcement learning, data-driven decision making, revenue management, confidence widening} 

\maketitle

%
\section{Introduction}
\label{sec:intro}
Consider a general sequential decision-making framework, where a decision-maker (DM) interacts with an initially unknown environment iteratively. At each time step, the DM first observes the current state of the environment, and then chooses an available action. After that, she receives an instantaneous random reward, and the environment transitions to the next state. The DM aims to design a policy that maximizes its cumulative rewards, while facing the following challenges:
\begin{itemize}
	\item \textbf{Endogeneity:} At each time step, the reward follows a reward distribution, and the subsequent state follows a state transition distribution. Both distributions depend (solely) on the current state and action, which are influenced by the policy. Hence, the environment can be fully characterized by a discrete time Markov decision process (MDP).
	\item \textbf{Exogeneity:}  The reward and state transition distributions vary (independently of the policy) across time steps, but the total variations are bounded by the respective variation budgets.
	\item \textbf{Uncertainty:} Both the reward and state transition distributions are initially unknown to the DM. 
	\item \textbf{Bandit/Partial Feedback:} The DM can only observe the reward and state transition resulted by the current state and action in each time step.
\end{itemize}
\subsection{Motivating Examples}
It turns out that many applications, such as vehicle remarketing in used-car sales and real-time bidding in advertisement (ad) auctions, can be captured by this framework. 
\begin{example}[\textbf{Vehicle Remarketing in Used-Cars Sales}] An automobile company disposes of continually arriving off-lease vehicles (\ie, leasing vehicles that have reached the end of their fixed term) via daily wholesale vehicle auctions \citep{Manheim,VR}. At the beginning of each auction, the company observes the number of on-hand vehicles (the ``state"), and decides the number of off-lease vehicles to be listed (the ``action"). Then, the car dealers bid for the purchases via a first-price auction. The sales of vehicles generate revenue to the company while unsold vehicles incur holding cost to the company (the ``reward" and ``state transition"). The company aims at maximizing profit by designing a policy that dynamically decides the vehicles to be listed in each auction. However, the dealers' bidding behaviors are affected by many unpredictable (and thus exogenous) factors (\eg, real-time customer demands, vehicles' depreciation, and inter-dealer competitions) in addition to the company's decisions (\ie,  the vehicles listed), and can vary across time.
\end{example}
\begin{example}[\textbf{Real-Time Bidding in Ads Auctions}] Advertisers repeatedly competes for ad display impressions via real-time online auctions \citep{Google11,CaiRZ17,FlajoletJ17,BalseiroG19,GuoHXZ19}. Each advertiser begins with a budget. Upon the arrival of a user, an impression is generated, and the advertisers submit bids (the ``action") for it subject to her remaining budget (the ``state"). The winning advertiser acquires the impression to display her ad to the user, and observes the user click or no-click behavior (the ``reward").  For each slot won, the advertiser has to make the payment (determined by the auction mechanism) using her remaining budget, and the budget is periodically refilled (the ``state transition"). Each advertiser wants to maximize the number of clicks on her advertisement subject to her own (continuously evolving) budget constraint. Nevertheless, the competitiveness of each auction exhibits exogeneity as the participating advertisers and the arriving users are different from time to time. Moreover, the popularity of an ad can change due to endogenous reasons. For instance, displaying the same ad too frequently in a short period of time might reduce its freshness, and results in a tentatively low number of clicks (\ie, we can incorporate both the remaining budget and the number of times that the ad is shown within a given window size into the state of the MDP to model endogenous dynamics).
\end{example} 
Besides, this framework can be used to model sequential decision-making problems in transportation \citep{ZhangW18,QinTY19}, wireless network \citep{ZhouB15,ZhouGB16}, consumer choice modeling \citep{XuY20}, healthcare operations \citep{ShortreedLLSPM10}, epidemic control \citep{NowzariPP16,KissMS17},  and inventory control \citep{HuhR09,B17,ZhangCS18,AgrawalJ19,ChenSD19}.
\subsection{Different Scenarios of Sequential Decision-Making}\label{sec:diff_scenario}
There exists numerous works in sequential decision-making that considered part of the four challenges (Please refer to Table \ref{table:challenge} for a summary and comparison). The traditional stream of research \citep{ABF02,BC12,LS18} on stochastic \emph{multi-armed bandits} (MAB) focused on the interplay between uncertainty and bandit feedback (\ie, challenges 3 and 4), and \citep{ABF02} proposed the classical \emph{Upper Confidence Bound} (UCB) algorithm. Starting from \citep{BurnetasK97,TewariB08,JakschOA10}, a volume of works (see Section \ref{sec:related}) have been devoted to \emph{reinforcement learning} (RL) in MDPs \citep{SuttonB18}, which further involves endogeneity. RL in MDPs incorporate challenges 1,3,4, and stochastic MAB is a special case of MDPs when there is only one state. In the absence of exogeneity, the reward and state transition distributions are invariant across time, and these three challenges can be jointly solved by the \emph{Upper Confidence bound for Reinforcement Learning} (UCRL2) algorithm \citep{JakschOA10}. 

The UCB and UCRL2 algorithms leverage the \emph{optimism in face of uncertainty} (OFU) principle to select actions iteratively based on the entire collections of historical data. However, both algorithms quickly deteriorate when exogeneity emerge since the environment can change over time, and the historical data becomes obsolete. To address the challenge of exogeneity,  \citep{GarivierM11} considered the \emph{piecewise-stationary} MAB environment where the reward distributions remain unaltered over certain time periods and change at unknown time steps. Later on, there is a line of research initiated by \citep{BGZ14} that studied the general \emph{non-stationary} MAB environment \citep{BGZ14,CSLZ19,CSLZ19a}, in which the reward distributions can change arbitrarily over time, but the total changes (quantified by a suitable metric) is upper bounded by a \emph{variation budget} \citep{BGZ14}. The aim is to minimize the \emph{dynamic regret}, the optimality gap compared to the cumulative rewards of the sequence of optimal actions. Both the (relatively restrictive) piecewise-stationary MAB and the general non-stationary MAB settings consider the challenges of exogeneity, uncertainty, and partial feedback (\ie, challenges 2, 3, 4), but endogeneity (challenge 1) are not present. 

In this paper, to address all four above-mentioned challenges, we consider RL in \emph{non-stationary} MDPs where bot the reward and state transition distributions can change over time, but the total changes (quantified by suitable metrics) are upper bounded by the respective variation budgets. We note that in \citep{JakschOA10}, the authors also consider the intermediate RL in \emph{piecewise-stationary} MDPs. Nevertheless, we first demonstrate in Section \ref{sec:challenge}, and then rigorously show in Section \ref{sec:cw_disc} that simply adopting the techniques for non-stationary MAB \citep{BGZ14,CSLZ19,CSLZ19a} or RL in piecewise-stationary MDPs \citep{JakschOA10} to RL in non-stationary MDPs may result in poor dynamic regret bounds.
\begin{table}[t]
	\begin{center}
		\renewcommand{\arraystretch}{2}
		\begin{tabular}{|c|c|c|c|c|} 
			\hline
			&Endogeneity&Exogeneity&Uncertainty&Bandit feedback\\
			\hline
			Stationary MAB &\xmark&\xmark&\cmark&\cmark\\
			\hline			
			RL in stationary MDPs &\cmark&\xmark&\cmark&\cmark\\
			\hline 
			Non-stationary MAB&\xmark&\cmark&\cmark&\cmark\\
			\hline			
			RL in non-stationary MDPs&\cmark&\cmark&\cmark&\cmark\\
			\hline 
		\end{tabular}
		\caption{Summary of different sequential decision-making settings. Among them, RL in non-stationary MDPs is the only setting that addresses all four challenges.}\label{table:challenge}
	\end{center}
\end{table}
\subsection{Summary of Main Contributions}\label{sec:contribution} Assuming that, during the $T$ time steps, the total variations of the reward and state transition distributions are bounded (under suitable metrics) by the variation budgets $B_r~(>0)$ and $B_p~(>0),$ respectively, we design and analyze novel algorithms for RL in non-stationary MDPs. Let $D_{\max},$ $S,$ and $A$ be respectively the maximum diameter (a complexity measure to be defined in Section \ref{sec:model}), number of states, and number of actions in the MDP. Our main contributions are:
\begin{itemize}
	\item We develop the Sliding Window UCRL2 with Confidence Widening (\texttt{SWUCRL2-CW}) algorithm. When the variation budgets are known, we prove it attains a $\tilde{O}\left( D_{\max} (B_r + B_p)^{1/4} S^{2/3}A^{1/2}T^{3/4}\right)$ dynamic regret bound via a budget-aware analysis.
	\item We propose the Bandit-over-Reinforcement Learning (\texttt{BORL}) algorithm that tunes the \sw~adaptively, and retains the same $\tilde{O}\left(D_{\max}(B_r+B_p)^{{1}/{4}}S^{{2}/{3}}A^{{1}/{2}}T^{{3}/{4}}\right)$ dynamic regret bound without knowing the variation budgets.
	\item We identify an unprecedented challenge for RL in non-stationary MDPs with conventional optimistic exploration techniques: existing algorithmic frameworks for non-stationary online learning (including non-stationary bandit and RL in piecewise-stationary MDPs) \citep{JakschOA10,GarivierM11,CSLZ19} typically estimate unknown parameters by averaging historical data in a ``forgetting" fashion, and construct the \emph{tightest} possible confidence regions/intervals accordingly. They then optimistically search for the most favorable model within the confidence regions, and execute the corresponding optimal policy. However, we first demonstrate in Section \ref{sec:challenge}, and then rigorously show in Section \ref{sec:cw_disc} that in the context of RL in non-stationary MDPs, the diameters induced by the MDPs in the confidence regions constructed in this manner can grow wildly, and may result in unfavorable dynamic regret bound. We overcome this with our novel proposal of extra optimism via the confidence widening technique. A summary of the algorithmic frameworks for stationary and non-stationary online learning settings are provided in Table \ref{table:summary}.
	\begin{table}[!ht]
			\renewcommand{\arraystretch}{2}
		\begin{tabular}{|c|c|c|} 
			\hline
			&Stationary&Non-stationary\\
			\hline
			MAB&OFU \citep{ABF02}&OFU + Forgetting \citep{BGZ14,CSLZ19a}\\
			\hline			
			RL&OFU \citep{JakschOA10}&\textbf{Extra optimism + Forgetting (This paper)}\\
			\hline 
		\end{tabular}
	\caption{Summary of algorithmic frameworks of stationary and non-stationary online learning settings.}\label{table:summary}
	\end{table}
	\item As a complement to this finding, suppose for any pair of initial state and target state, there always exists an action such that the probability of transiting from the initial state to the target state by taking this action is lower bounded uniformly over the entire time horizon, the DM can attain low dynamic regret without widening the confidence regions. We demonstrate that in the context of single item inventory control with fixed cost \citep{YuanLS19}, a mild condition on the demand distribution is sufficient for this extra assumption to hold.
\end{itemize} 
\subsection{Paper Organization}
The rest of the paper is organized as follows: in Section \ref{sec:model}, we describe the non-stationary MDP model of interest. In Section \ref{sec:related}, we review related works in non-stationary online learning and reinforcement learning. In Section \ref{sec:algo}, we introduce the \sw, and analyze its performance in terms of dynamic regret. In Section \ref{sec:borl}, we design the \borl~that can attain the same dynamic regret bound as the \sw~without knowing the total variations. In Section \ref{sec:cw_disc}, we discuss the challenges in designing learning algorithms for reinforcement learning under drift, and manifest how the novel confidence widening technique can mitigate this issue. In Section \ref{sec:alternative}, we discuss the alternative approach without widening the confidence regions in inventory control problems. In Section \ref{sec:numerical}, we conduct numerical experiments to show the superior empirical performances of our algorithms. In Section \ref{sec:conclusion}, we conclude our paper.

\section{Problem Formulation}\label{sec:model}
In this section, we introduce the notations to be used throughout paper, and introduce the learning protocol for our problem of RL in non-stationary MDPs.
\subsection{Notation}
Throughout the paper, all vectors are column vectors, unless specified otherwise. We define $[n]$ to be the set $\{1,2,\ldots,n\}$ for any positive integer $n.$ We denote $\mathbf{1}[\cdot]$ as the indicator function. For $p\in [1, \infty]$, we use $\|x\|_p$ to denote the $p$-norm of a vector $ x\in\Re^d.$ We denote $x\vee y$ and $x\wedge y$ as the maximum and minimum between $x,y\in\Re,$ respectively. We adopt the asymptotic notations $O(\cdot),\Omega(\cdot),$ and $\Theta(\cdot)$ \citep{CLRS09}. When logarithmic factors are omitted, we use $\tilde{O}(\cdot),\tilde{\Omega}(\cdot),$ $\tilde{\Theta}(\cdot),$ respectively. With some abuse, these notations are used when we try to avoid the clutter of writing out constants explicitly.
\subsection{Learning Protocol}
\textbf{Model Primitives:} An instance of non-stationary MDP is specified by the tuple $(\SSS, \AAA, T, r, p)$. The set $\SSS$ is a finite set of states. 
The collection $\AAA = \{\AAA_s\}_{s\in \SSS}$ contains a finite action set $\AAA_s$ for each state $s\in \SSS$.  We say that $(s, a)$ is a state-action pair if $s\in \SSS, a\in \AAA_s$. We denote $S = |\SSS |$, $A =(\sum_{s\in \SSS} |\AAA_s|)/S.$ We denote $T$ as the total number of time steps, and denote $r = \{r_t\}^T_{t=1}$ as the sequence of mean rewards. For each $t$, we have $r_t = \{r_t(s, a)\}_{s\in \SSS, a\in \AAA_s}$, and $r_t(s, a)\in [0, 1]$ for each state-action pair $(s, a)$.
In addition, we denote $p = \{p_t\}^T_{t=1}$ as the sequence of state transition distributions. For each $t$, we have $p_t = \{p_t(\cdot | s, a)\}_{s\in \SSS, a\in \AAA_s}$, where $p_t(\cdot | s, a)$ is a probability distribution over $\SSS$ for each state-action pair $(s, a)$. 

\noindent\textbf{Exogeneity:} The quantities $r_t$'s and $p_t$'s vary across different $t$'s in general. Following \citep{BGZ14}, we quantify the variations on $r_t$'s and $p_t$'s in terms of their respective \emph{variation budgets} $B_r, B_p~(>0)$:
\begin{align}
B_r &= \sum^{T - 1}_{t=1} B_{r, t}\text{, where }B_{r, t} = \max_{s\in \SSS, a\in \AAA_s}\left| r_{t + 1}(s, a) - r_t(s, a)  \right|,\nonumber\\
B_p &= \sum^{T - 1}_{t=1} B_{p, t}\text{, where }B_{p, t} = \max_{s\in \SSS, a\in \AAA_s }\left\| p_{t + 1}(\cdot | s, a) - p_t ( \cdot  |s, a)  \right\|_1.\label{eq:variation_budgets}
\end{align}
We emphasize although $B_r$ and $B_p$ might be used as inputs by the DM, individual $B_{r,t}$'s and $B_{p,t}$'s are unknown to the DM throughout the current paper. We also refer to Remark \ref{remark:norm} for the choice of infinity-norm and 1-norm in eqn. \eqref{eq:variation_budgets}.

\noindent\textbf{Endogeneity:} The DM faces a non-stationary MDP instance $(\SSS,\AAA, T, r, p)$. She knows $\SSS, \AAA, T$, but not $r, p$. The DM starts at an arbitrary state $s_1\in \SSS$. At time $t$, three events happen. First, the DM observes its current state $s_t$. Second, she takes an action $a_t\in \AAA_{s_t}$. Third, given $s_t, a_t$, she stochastically transits to another state $s_{t+1}$ which is distributed as $p_t(\cdot | s_t, a_t)$, and receives a stochastic reward $R_t(s_t,a_t)$, which is 1-sub-Gaussian with mean $r_t(s_t, a_t)$. In the second event, the choice of $a_t$ is based on a \emph{non-anticipatory} policy $\Pi$. That is, the choice only depends on the current state $s_t$ and the previous observations $\H_{t-1} := \{s_q, a_q, R_q(s_q, a_q)\}^{t-1}_{q =1}$.

\noindent\textbf{Dynamic Regret:} The DM aims to maximize the cumulative expected reward $\E [\sum^T_{t=1} r_t(s_t, a_t) ] $, despite the model uncertainty on $r, p$ and the dynamics of the learning environment. To measure the convergence to optimality, we consider an equivalent objective of minimizing the \emph{dynamic regret} \citep{BGZ14,JakschOA10}
\begin{equation}\label{eq:dyn_reg}
\text{Dyn-Reg}_T(\Pi)= \sum^T_{t = 1}\left\{\rho^*_t - \mathbb{E}[r_t(s_t, a_t) ] \right\}.
\end{equation}
In the oracle $\sum^T_{t = 1}\rho^*_t $, the summand $\rho^*_t$ is the optimal long-term average reward of the stationary MDP with state transition distribution $p_t$ and mean reward $r_t.$  The optimum $\rho^*_t$ can be computed by solving linear program \eqref{eq:primal} provided in Section \ref{sec:mdp_lp}. We note that the same oracle is used for RL in piecewise-stationary MDPs \citep{JakschOA10}.
\begin{remark}[\textbf{Comparisons with Non-Stationary MAB}]
	When $S=1$, eqn. (\ref{eq:dyn_reg}) reduces to the definition \citep{BGZ14} of dynamic regret for non-stationary $K$-armed bandit. Different from the bandit case, however, the oracle $\sum^T_{t = 1}\rho^*_t $ does not equal to the expected optimum for the non-stationary MDP problem in general. Nevertheless, we justify this choice in Proposition \ref{prop:benchmark}.
\end{remark}
\begin{remark}[\textbf{Definition of Variation Budgets}]\label{remark:norm}
	For brevity of exposition, we choose to define the variation budgets (see eqn. \eqref{eq:variation_budgets} ) for reward and state transition distributions with the infinity norm and 1 norm, respectively. One can also define them with respect to other commonly used metrics, such as the 2 norm \citep{CSLZ19}, and the this would only affect the dependence on $S$ and $A$ for the established dynamic regret bounds in the subsequent sections.
\end{remark}
Next, we review relevant concepts on MDPs, in order to stipulate an assumption that ensures learnability and justifies our oracle. 
\begin{definition}[\textbf{Communicating MDPs and Diameter} \citep{JakschOA10}]\label{def:diameter}
	Consider a set of states $\SSS$, a collection $\AAA= \{\AAA_s\}_{s\in \SSS}$ of action sets, and a state transition distribution $\bar{p} = \{\bar{p}(\cdot | s, a)\}_{s\in \SSS, a\in \AAA_s}$. For any $s, s'\in \SSS$ and stationary policy $\pi$, the hitting time from $s$ to $s'$ under $\pi$ is the random variable $\Lambda(s' | \pi, s) := \min\left\{ t : s_{t+1} = s' , s_1 = s, s_{\tau + 1} \sim \bar{p}(\cdot | s_\tau, \pi(s_\tau)) \text{ $\forall\tau$}\right\},$ which can be infinite. We say that $(\SSS, \AAA, \bar{p})$ is a \emph{communicating MDP} iff  $D := \max_{s, s'\in \SSS}\min_{\text{stationary }\pi} \mathbb{E}\left[\Lambda(s' | \pi, s) \right]$ is finite. The quantity $D$ is the \emph{diameter} associated with $(\SSS, \AAA, \bar{p})$.
\end{definition}
\begin{remark}[\textbf{Diameter and RL in MDPs}]
	As shown in \citep{JakschOA10}, ``diameter" plays a fundamental role in characterizing the complexity of RL in MDPs. Intuitively, in order to make informative decisions, the DM has to have accurate estimates of the quantities $r_t(s,a)$'s and $p_t(\cdot|s,a).$'s. In other words, she must visit every state $s\in\SSS$ and choose each of its available actions $a\in\AAA_s$ frequently enough to collect relevant samples. Consequently, the harder to transition from a state $s$ to another state $s',$ the more the DM would suffer during the learning process, and the diameter of a MDP captures the ``hardness" of transitioning between states in this MDP.
\end{remark}
With the above remark, we make the following assumption.
\begin{assumption}[Bounded Diameters]\label{ass:communicating}
	For each $ t \in[T]$, the tuple $(\SSS, \AAA, p_t) $ constitutes a communicating MDP with diameter at most $D_t$. We denote the \emph{maximum diameter} as $D_\text{max} = \max_{t\in \{1, \ldots, T\}} D_t.$
\end{assumption}

The following proposition justifies our choice of oracle $\sum^T_{t=1}\rho^*_t$.
\begin{proposition}\label{prop:benchmark}
	Consider an instance $(\SSS, \AAA, T, p, r)$ that satisfies Assumption \ref{ass:communicating} with maximum diameter $D_\text{max}$, and has variation budgets $B_r, B_p$ for the rewards and transition distributions respectively. In addition, suppose that $T \geq B_r + 2 D_\text{max} B_p > 0,$ then it holds that
	$$
	\sum^T_{t=1}\rho^*_t \geq \max_{\Pi}\left\{\mathbb{E}\left[\sum^T_{t=1} r_t(s^\Pi_t, a^\Pi_t) \right]\right\} - 4(D_\text{max} + 1)\sqrt{(B_r + 2 D_\text{max}B_p)T}.
	$$
	The maximum is taken over all non-anticipatory policies $\Pi$'s. We denote $\{(s^\Pi_t, a^\Pi_t)\}^T_{t=1}$ as the trajectory under policy $\Pi$, where $a^\Pi_t\in \AAA_{s^\Pi_t}$ is determined based on $\Pi$ and $\H_{t-1} \cup \{s^\Pi_t\}$, and $s^\Pi_{t+1}\sim p_t (\cdot | s^\Pi_t, a^\Pi_t)$ for each $t$.
\end{proposition}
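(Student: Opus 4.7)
The plan is to partition $[T]$ into $W$ equal-length blocks $B_1,\ldots,B_W$ of length $\tau \approx T/W$ (with starting indices $t_1 < \cdots < t_W$) and to analyze each block by anchoring it to the stationary MDP $(\SSS,\AAA,p_{t_i},r_{t_i})$ at its first time step. Specifically, I aim to establish, for every non-anticipatory policy $\Pi$, the block-decomposed inequality
\[
\mathbb{E}\left[\sum_{t=1}^T r_t(s^\Pi_t, a^\Pi_t)\right] - \sum_{t=1}^T \rho^*_t \;\le\; \frac{T D_{\max}}{\tau} + 2\tau(B_r + D_{\max} B_p),
\]
after which the proposition follows by balancing $\tau$. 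Throughout, write $B_r^{(i)}, B_p^{(i)}$ for the total reward and transition variations accumulated inside block $B_i$ (so $\sum_i B_r^{(i)} \le B_r$ and $\sum_i B_p^{(i)} \le B_p$), and let $h^*_{t_i}, \rho^*_{t_i}$ denote the bias function and optimal long-run average of the anchor MDP, normalized to $h^*_{t_i} \in [0, D_{\max}]$; this is valid because $\mathrm{sp}(h^*_{t_i}) \le D_{t_i} \le D_{\max}$ under Assumption~\ref{ass:communicating}.

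The first step is a per-block upper bound on the dynamic oracle. Starting from the Bellman inequality $r_{t_i}(s,a) \le \rho^*_{t_i} + h^*_{t_i}(s) - \sum_{s'} p_{t_i}(s'|s,a)\,h^*_{t_i}(s')$, replacing $p_{t_i}$ by $p_t$ costs at most $\|p_t(\cdot|s,a) - p_{t_i}(\cdot|s,a)\|_1 \cdot \|h^*_{t_i}\|_\infty \le D_{\max} B_p^{(i)}$ and replacing $r_{t_i}$ by $r_t$ costs at most $B_r^{(i)}$. Taking expectation under $\Pi$ (with $s^\Pi_{t+1} \sim p_t(\cdot \mid s^\Pi_t, a^\Pi_t)$) and summing over $t \in B_i$ collapses the bias terms telescopically into $h^*_{t_i}(s^\Pi_{t_i}) - h^*_{t_i}(s^\Pi_{t_{i+1}}) \in [-D_{\max}, D_{\max}]$, yielding
\[
\mathbb{E}\left[\sum_{t\in B_i} r_t(s^\Pi_t, a^\Pi_t)\right] \;\le\; \tau\rho^*_{t_i} + D_{\max} + \tau\left(B_r^{(i)} + D_{\max} B_p^{(i)}\right).
\]

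The second step is the matching lower bound on $\sum_{t \in B_i}\rho^*_t$, via the sensitivity estimate $|\rho^*_t - \rho^*_{t_i}| \le B_r^{(i)} + D_{\max} B_p^{(i)}$ for every $t \in B_i$. I would derive this by plugging the optimal stationary policy of the MDP at time $t$ into the Bellman inequality of the MDP at time $t_i$, subtracting the Bellman equality for the MDP at time $t$, and averaging against a stationary distribution of the induced chain while using $\mathrm{sp}(h^*) \le D_{\max}$; reversing the roles of $t$ and $t_i$ gives the other direction. It follows that $\sum_{t \in B_i}\rho^*_t \ge \tau\rho^*_{t_i} - \tau(B_r^{(i)} + D_{\max} B_p^{(i)})$, and combining with the previous display and summing over $i$ gives the stated block-decomposed inequality.

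Finally, optimizing $\tau = \lceil\sqrt{T D_{\max}/(2(B_r + D_{\max} B_p))}\rceil$ (the hypothesis $T \ge B_r + 2 D_{\max} B_p > 0$ keeps $1 \le \tau \le T$ and makes the ceiling harmless) drives the right-hand side to at most $2\sqrt{2 T D_{\max}(B_r + D_{\max} B_p)} \le 4(D_{\max}+1)\sqrt{T(B_r + 2 D_{\max} B_p)}$ (using $\sqrt{2 D_{\max}} \le 2(D_{\max}+1)$); taking the supremum over $\Pi$ completes the proof. The main technical hurdle I anticipate is keeping the per-block drift contribution linear in $\tau$: a naive state-trajectory coupling between the drifting block and its anchor would accumulate transition variation at rate $\tau^2 B_p^{(i)}$ and would ruin the $\sqrt{T}$ rate, so the argument must route through Bellman-with-bias in order to absorb all state-distribution drift into the single span term $D_{\max}$. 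A secondary subtlety is proving the sensitivity bound for $\rho^*$ without assuming that the optimal policy of one MDP induces a unichain Markov chain when transplanted into the perturbed MDP.
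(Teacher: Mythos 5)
Your proposal follows essentially the same route as the paper's proof: partition the horizon into blocks of length roughly $\sqrt{T/(B_r+2D_{\max}B_p)}$, anchor each block at its first time step, control the policy's cumulative reward via a drift-corrected Bellman inequality whose bias terms telescope to a single $O(D_{\max})$ span term, lower-bound $\sum_{t\in B_i}\rho^*_t$ through a sensitivity estimate on $\rho^*$ in terms of the in-block variation, and balance the block length. The one subtlety you flag --- proving the sensitivity bound on $\rho^*$ without unichain assumptions on transplanted policies --- is resolved in the paper by a cleaner mechanism you may want to adopt: the shifted optimal dual pair $\left(\rho^*_\tau + \sum_{q}(B_{r,q}+2D_{\max}B_{p,q}),\,\gamma^*_\tau\right)$ is verified entrywise to be feasible for the dual linear program $\textsf{D}(r_t,p_t)$, so weak duality immediately yields the bound with no reference to stationary distributions or to the chain structure induced by one MDP's optimal policy in another.
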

The Proposition is proved in section \ref{app:pfprofbenchmark} of the appendix. In fact, our dynamic regret bounds (see the forthcoming Theorems \ref{thm:main1} and \ref{thm:borl}) are larger than the error term $4(D_\text{max} + 1)\sqrt{(B_r + 2 D_\text{max}B_p)T}$, thus justifying the choice of $\sum^T_{t=1}\rho^*_t$ as the oracle. It turns out that the oracle $\sum^T_{t=1}\rho^*_t$ is more convenient for analysis than the expected optimum, since the former can be decomposed to summations across different intervals, unlike the latter where the summands are intertwined due to endogenous dynamics, \ie, $s^\Pi_{t+1} \sim p_t(\cdot | s^\Pi_t, a^\Pi_t)$. 

\section{Related Works}\label{sec:related}
\subsection{RL in Stationary MDPs}
RL in stationary (discounted and un-discounted reward) MDPs has been widely studied in 
\citep{BurnetasK97,BartlettT09,JakschOA10,AgrawalJ17,FruitPL18,FruitPLO18,SidfordWWY18,SidfordWWYY18,Wang19,ZhangJ19,FruitPL19,WeiJLSJ19}. For the discounted reward setting, the authors of \citep{SidfordWWY18,Wang19,SidfordWWYY18} proposed (nearly) optimal algorithms in terms of sample complexity. For the un-discounted reward setting, the authors of \citep{JakschOA10} established a minimax lower bound $\Omega(\sqrt{D_{\max}SAT})$ on the regret when both the reward and state transition distributions are time-invariant. They also designed the UCRL2 algorithm and showed that it attains a regret bound $\tilde{O}(D_{\max}S\sqrt{AT}).$ The authors of \citep{FruitPL19} proposed the UCRL2B algorithm, which is an improved version of the UCRL2 algorithm. The regret bound of the UCRL2B algorithm is $\tilde{O}(S\sqrt{D_{\max}AT}+D_{\max}^2S^2A).$ The minimax optimal algorithm is provided in \citep{ZhangJ19} although it is not computationally efficient.
\subsection{RL in Non-Stationary MDPs}
In a parallel work \citep{GajaneOA19v3}, the authors considered a similar setting to ours by applying the ``forgetting principle" from non-stationary bandit settings \citep{GarivierM11,CSLZ19a} to design a learning algorithm. To achieve its dynamic regret bound, the algorithm by \citep{GajaneOA19v3} partitions the entire time horizon $[T]$ into time intervals ${\cal I}=\{I_k\}^K_{k=1},$ and crucially requires the access to $\sum_{t=\min I_k}^{\max I_k-1}B_{r, t}$ and $\sum_{t=\min I_k}^{\max I_k-1}B_{p, t},$ \ie, the variations in both reward and state transition distributions of each interval $I_k\in {\cal I}$ (see Theorem 3 in \citep{GajaneOA19v3}). In contrast, the \sw~and the \borl~require significantly less information on the variations. Specifically, the \sw~does not need any additional knowledge on the variations except for $B_r$ and $B_p,$ \ie, the variation budgets over the entire time horizon as defined in eqn. \eqref{eq:variation_budgets}, to achieve its dynamic regret bound (see Theorem \ref{thm:main1}). This is similar to algorithms for the non-stationary bandit settings, which only require the access to $B_r$ \citep{BGZ14}. More importantly, the \borl~(built upon the \sw) enjoys the same dynamic regret bound even without knowing either of $B_r$ or $B_p$ (see Theorem \ref{thm:borl}). 

There also exists some settings that are closely related to, but different than our setting (in terms of exogeneity and feedback). \citep{JakschOA10,GajaneOA18} proposed solutions for the RL in piecewise-stationary MDPs setting. But as discussed in Section \ref{sec:diff_scenario}, simply applying their techniques to the general RL in non-stationary MDPs may result in undesirable dynamic regret bounds (see Section \ref{sec:cw_disc} for more details). In \citep{YuMS09,NueAAS10,AroraDT12,DickGC14,JinJLSY19,CardosoWX19}, the authors considered RL in MDPs with changing reward distributions but fixed transition distributions. The authors of \citep{Even-DarKM05,YuM09,NeuGS12,Abbasi-YadkoriBKSS13,RosenbergM19a,LiZQL19} considered RL in non-stationary MDPs with full information feedback.
\subsection{Non-Stationary MAB}
For online learning and bandit problems where there is only one state, the works by \citep{ABFS02,GarivierM11,BGZ14,KZ16} proposed several ``forgetting" strategies for different non-stationary MAB settings. More recently, the works by \citep{KA16,LWAL18,CSLZ19,CSLZ19a,CLLW19} designed parameter-free algorithms for non-stationary MAB problems. Another related but different setting is the Markovian bandit \citep{KimL16,Ma18}, in which the state of the chosen action evolve according to an independent time-invariant Markov chain while the states of the remaining actions stay unchanged. In \citep{ZhouCGX20}, the authors also considered the case when the states of all the actions are governed by the same (uncontrollable) Markov chain.

\section{Sliding Window UCRL2 with Confidence Widening Algorithm}\label{sec:algo}
In this section, we first describe a unique challenge in RL in non-stationary MDPs, and then present the \sw, which incorporates our novel confidence widening technique and sliding window estimates \citep{GarivierM11} into UCRL2 \citep{JakschOA10}. 
\subsection{Design Challenge: Failure of Naive Sliding Window UCRL2 Algorithm}\label{sec:challenge}
For stationary MAB problems, the UCB algorithm \citep{ABF02} suggests the DM should iteratively execute the following two steps in each time step: 
\begin{enumerate}
	\item Estimate the mean reward of each action by taking the time average of \emph{all} observed samples.
	\item Pick the action with the highest estimated mean reward plus the confidence radius, where the radius scales inversely proportional with the number of observations \citep{ABF02}. 
\end{enumerate}
The UCB algorithm has been proved to attain optimal regret bounds for various stationary MAB settings \citep{ABF02,KWAS15}. For non-stationary problems, \citep{GarivierM11,KZ16,CSLZ19a} shown that the DM could further leverage the forgetting principle by incorporating the sliding-window estimator \citep{GarivierM11} into the UCB algorithms \citep{ABF02,KWAS15} to achieve optimal dynamic regret bounds for a wide variety of non-stationary MAB settings. The sliding window UCB algorithm with a window size $W\in\Re_{+}$ is similar to the UCB algorithm except that the estimated mean rewards are computed by taking the time average of the $W$ \emph{most recent} observed samples.

As noted in Section \ref{sec:intro}, \citep{JakschOA10} proposed the UCRL2 algorithm, which is a UCB-alike algorithm with nearly optimal regret for RL in stationary MDPs. It is thus tempting to think that one could also integrate the forgetting principle into the UCRL2 algorithm to attain low dynamic regret bound for RL in non-stationary MDPs. In particular, one could easily design a naive sliding-window UCRL2 algorithm that follows exactly the same steps as the UCRL2 algorithm with the exception that it uses only the $W$ most recent observed samples instead of all observed samples to estimate the mean rewards and the state transition distributions, and to compute the respective confidence radius.

Under non-stationarity and bandit feedback, however, we show in Proposition \ref{lemma:peril1} of the forthcoming Section \ref{sec:cw_disc} that the diameter of the estimated MDP produced by the naive sliding-window UCRL2 algorithm with window size $W$ can be as large as $\Theta(W)$, which is orders of magnitude larger than $D_{\max}$, the maximum diameter of each individual MDP encountered by the DM. Consequently, the naive sliding-window UCRL2 algorithm may result in undesirable dynamic regret bound. In what follows, we discuss in more details how our novel confidence widening technique can mitigate this issue. 
\subsection{Design Overview}
The \sw~first specifies a sliding window parameter $W\in \mathbb{N}$ and a confidence widening parameter $\eta\geq 0$. Parameter $W$ specifies the number of previous time steps to look at. Parameter $\eta$ quantifies the amount of additional optimistic exploration, on top of the conventional optimistic exploration using upper confidence bounds. The latter turns out to be helpful for handling the temporal drifts in the state transition distributions (see Section \ref{sec:cw_disc}).  

The algorithm runs in a sequence of episodes that partitions the $T$ time steps. Episode $m$ starts at time $\tau(m)$ (in particular $\tau(1) = 1$), and ends at the end of time step $\tau(m + 1) - 1$. Throughout an episode $m,$ the DM follows a certain stationary policy $\tilde{\pi}_{\tau(m)}.$ The DM ceases the $m^{\text{th}}$ episode if at least one of the following two criteria is met: 
\begin{itemize}
	\item The time index $t$ is a multiple of $W.$ Consequently, each episode last for at most $W$ time steps. The criterion ensures that the DM switches the stationary policy $\tilde{\pi}_{\tau(m)}$ frequently enough, in order to adapt to the exogenous dynamics.
	\item There exists some state-action pair $(s, a)$ such that $\nu_{\tau(m)}(s,a),$ the number of time step $t$'s with $(s_t, a_t) = (s, a)$ within episode $m,$ is at least as many as the total number of counts for it within the $W$ time steps prior to $\tau(m),$ \ie, from $(\tau(m)-W)\vee1$ to $(\tau(m)-1).$ This is similar to the doubling criterion in \citep{JakschOA10}, which ensures that each episode is sufficiently long so that the DM can focus on learning. 
\end{itemize} 
The combined effect of these two criteria allows the DM to learn a low dynamic regret policy with historical data from an appropriately sized time window and confidence widening parameter. One important piece of ingredient is the construction of the policy $\tilde{\pi}_{\tau(m)},$ for each episode $m$. To allow learning under endogenous and exogenous dynamics, the \sw~computes the policy $\tilde{\pi}_m$ 
based on the history in the $W$ time steps prior to the current episode $m,$ \ie, from round $(\tau(m) - W) \vee 1$ to round $\tau(m) - 1$. The construction of $\tilde{\pi}_{\tau(m)}$ involves the Extended Value Iteration (EVI) \citep{JakschOA10}, which requires the confidence regions $H_{r, \tau(m)}, H_{p, \tau(m)}(\eta) $ for rewards and state transition distributions as the inputs, in addition to an precision parameter $\epsilon$. The confidence widening parameter $\eta\geq 0$ 
is capable of ensuring the MDP output by the EVI has a bounded diameter most of the time.
\subsection{Policy Construction}
To describe \sw, we first define for each state-action pair $(s, a)$ and each time $t$ in episode $m,$
\begin{equation}\label{eq:N_sa}
N_{t} (s, a) = \sum^{t-1}_{q = (\tau(m) - W) \vee 1} \mathbf{1}((s_q, a_q) = (s, a)), \quad N^+_t  (s, a) = \max\{1, N_{t} (s,a) \}.
\end{equation}
\subsubsection{Confidence Region for Rewards} For each state-action pair $(s,a)$ and each time $t$ in episode $m$, 
we consider the empirical mean estimator
$$\hat{r}_t (s, a) = \frac{1}{N^+_t (s, a)}\left(\sum^{t-1}_{q = (\tau(m) - W) \vee 1 }  R_{q}\left(s , a \right)\mathbf{1}(s_q = s, a_q = a)\right),$$
which serves to estimate the average reward $$\bar{r}_t (s, a)  = \frac{1}{N^+_t(s, a) }\left(\sum^{t-1}_{q = (\tau(m) - W) \vee 1 } r_q(s, a)\mathbf{1}(s_q = s, a_q = a)\right).$$ The confidence region $H_{r, t} = \{H_{r, t}(s, a )\}_{s\in \SSS, a\in \AAA_s}$ is defined as
\begin{equation}\label{eq:Hr}
H_{r, t} (s, a) = \left\{ \dot{r} \in [0, 1] : \left| \dot{r} - \hat{r}_t (s, a) \right| \leq \rad_{r, t}(s, a) \right\},
\end{equation} 
with confidence radius $\rad_{r, t}(s, a) = 2\sqrt{2\log (SA T / \delta ) /N^+_t (s, a)}.$
\begin{algorithm}[!ht]
	\SingleSpacedXI
	\caption{\sw}\label{alg:swr-ucrl2}
	\begin{algorithmic}[1]
		\State \textbf{Input:} Time horizon $T$, state space $\SSS,$ and action space $\AAA,$ window size $W$, confidence widening parameter $\eta.$ 
		\State \textbf{Initialize} $t \leftarrow1,$ initial state $s_1.$
		\For{episode $m = 1, 2, \ldots$}
		\State Set $\tau(m)\leftarrow t$, $\nu_{\tau(m)}(s, a) \leftarrow 0,$ and $N_{\tau(m)}(s, a)$ according to eqn (\ref{eq:N_sa}), for all $s, a$. \; \label{alg:swr-ucrl2-tau}
		\State Compute the confidence regions $H_{r, \tau(m)}$, $H_{p, \tau (m)}(\eta)$ according to eqns (\ref{eq:Hr}, \ref{eq:Hp}).\;
		\State Compute a $(1/\sqrt{\tau(m)})$-optimal optimistic policy $\tilde{\pi}_{\tau(m)}$: \;
		\begin{equation*}
			\textsf{EVI}(H_{r, \tau(m)}, H_{p,\tau( m) }(\eta) ; 1/\sqrt{\tau(m)})\rightarrow(\tilde{\pi}_{\tau(m)},\tilde{r}_{\tau(m)}, \tilde{p}_{\tau(m)}, \tilde{\rho}_{\tau(m)}, \tilde{\gamma}_{\tau(m)}).
		\end{equation*}
		\While{$t$ is not a multiple of $W$ and $\nu_m(s_t, \tilde{\pi}_{\tau(m)}(s_t)) < N^+_{\tau(m)}(s_t, \tilde{\pi}_{\tau(m)}(s_t))$} \label{alg:swr-ucrl2-while}
		\State Choose action $a_t = \tilde{\pi}_{\tau(m)}(s_t),$ observe reward $R_t(s_t, a_t)$ and the next state $s_{t+1}.$ \;\label{alg:swr-ucrl2-action}\label{alg:swr-ucrl2-receive}
		\State Update $\nu_{\tau(m)}(s_t, a_t) \leftarrow \nu_{\tau(m)}(s_t, a_t) + 1,t \leftarrow t+1$.\; \label{alg:swr-ucrl2-end}
		\If{$t> T$}
		\State The algorithm is terminated. \label{alg:swr-ucrl2-break}
		\EndIf
		\EndWhile
		\EndFor
	\end{algorithmic}
\end{algorithm}
\subsubsection{Confidence Widening for State Transition Distributions.} For each state-action pair $s,a$ and each time step $t$ in episode $m$, we consider the empirical mean estimator
$$\hat{p}_t(s'|s, a) =\frac{1}{N^+_t (s, a)}\left(\sum^{t -1}_{q =  (\tau(m) - W) \vee 1 } \mathbf{1}(s_q = s, a_q = a, s_{q + 1} = s')\right),$$ 
which serves to estimate the average transition probability
\begin{align}\label{eq:p_bar}
	\bar{p}_t(s'|s, a) = \frac{1}{N^+_t(s, a)} \sum^{t-1}_{q =  (\tau(m) - W) \vee 1 } p_q(s' | s, a) \mathbf{1}(s_q = s, a_q = a).
\end{align}
Different from the case of estimating reward, the confidence region $H_{p, t}(\eta) = \{H_{p, t} (s, a ; \eta)\}_{s\in \SSS, a\in \AAA_s}$ for the transition probability involves a widening parameter $\eta \geq 0$:
\begin{equation} \label{eq:Hp}
H_{p, t}(s, a; \eta) = \left\{ \dot{p} \in \Delta^\SSS :\left\| \dot{p}(\cdot|s,a) - \hat{p}_t (\cdot | s, a) \right\|_1 \leq \rad_{p,t}(s, a) + \eta\right\},
\end{equation}
with confidence radius $\rad_{p,t}(s, a) = 2\sqrt{2S\log\left(SAT/\delta\right)/N^+_t(s,a)}.$
With $\eta > 0$, the DM can explore state transition distributions that deviate from the sample average, and the exploration is crucial for learning MDPs under endogenous and exogenous dynamics. In a nutshell, the incorporation of $\eta$ provides an additional source of optimism. We treat $\eta$ as a hyper-parameter at the moment, and provide a suitable choice of $\eta$ when we discuss our main results (see Theorem \ref{thm:main1}).

\subsubsection{Extended Value Iteration (EVI) \citep{JakschOA10}.}\label{sec:evi} The \sw~relies on the EVI, which solves MDPs with optimistic exploration to near-optimality. We extract and rephrase a description of EVI in Section \ref{app:evi} of the appendix. EVI inputs the confidence regions $H_r, H_p$ for the rewards and the state transition distributions. The algorithm outputs an ``optimistic MDP model'', which consists of reward vector $\tilde{r}$ and state transition distribution $\tilde{p}$ under which the optimal average gain $\tilde{\rho}$ is the largest among all $\dot{r}\in H_r, \dot{p}\in H_p$:
\begin{itemize}
	\item  \textbf{Input:} Confidence regions $H_r$ for $r$, $H_p$ for $p,$ and an error parameter $\epsilon >0.$ 
	\item \textbf{Output:} The returned policy $\tilde{\pi}$ and the auxiliary output $(\tilde{r}, \tilde{p}, \tilde{\rho}, \tilde{\gamma}).$ In the latter, $\tilde{r},$ $\tilde{p},$ and $\tilde{\rho}$ are the selected ``optimistic" reward vector, state transition distribution, and the corresponding long term average reward. The output $\tilde{\gamma}\in \Re_{+}^{\SSS}$ is a \emph{bias vector} \citep{JakschOA10}. For each $s\in \SSS$, the quantity $\tilde{\gamma}(s)$ is indicative of the short term reward when the DM starts at state $s$ and follows the optimal policy. By the design of EVI, for the output $\tilde{\gamma}$, there exists $s\in \SSS$ such that $\tilde{\gamma}(s) = 0$. Altogether, we express $$\text{EVI}(H_r, H_p ; \epsilon) \rightarrow (\tilde{\pi}, \tilde{r}, \tilde{p}, \tilde{\rho}, \tilde{\gamma}).$$ 
\end{itemize}
Combining the three components, a formal description of the \sw~is shown in Algorithm \ref{alg:swr-ucrl2}.

\subsection{Performance Analysis: The Blessing of More Optimism}
We now analyze the performance of the \sw. First, we introduce two events ${\cal E}_r, {\cal E}_p,$ which state that the estimated reward and state transition distributions lie in the respective (un-widened) confidence regions.
$${\cal E}_r = \{ \bar{r}_t(s, a) \in H_{r, t}(s, a) \ \forall s, a, t  \},\quad{\cal E}_p = \{ \bar{p}_t(\cdot | s, a) \in H_{p, t}(s, a;0) \ \forall s, a, t  \}.$$
We prove that ${\cal E}_r, {\cal E}_p$ hold with high probability.
\begin{lemma}
	\label{lemma:estimation}
	We have $\Pr[{\cal E}_r] \geq 1 - \delta / 2$, $\Pr[{\cal E}_p ] \geq 1 - \delta / 2$.
\end{lemma}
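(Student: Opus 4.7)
My plan is to prove the two concentration statements in parallel by (i) fixing a triple $(s,a,t)$, (ii) conditioning on the random set of visit times $\mathcal{Q}_t(s,a) := \{q : (\tau(m)-W)\vee 1 \leq q \leq t-1,\ (s_q,a_q)=(s,a)\}$ (which determines $N_t(s,a)$), (iii) applying a standard tail bound for sums of conditionally independent centered terms, and (iv) closing with a union bound over all $(s,a,t)$. The key observation justifying this conditioning is that even though $N_t(s,a)$ and the visit times are random and the per-time distributions drift, conditional on $\mathcal{Q}_t(s,a) = \{q_1 < \cdots < q_n\}$ the summands appearing in $\hat{r}_t - \bar{r}_t$ and in $\hat{p}_t - \bar{p}_t$ are genuinely independent with exactly the right conditional means, which is where the non-stationarity is absorbed.

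For $\mathcal{E}_r$, fix $(s,a,t)$ with $N_t(s,a) = n \geq 1$; then $\hat{r}_t(s,a) - \bar{r}_t(s,a) = \frac{1}{n}\sum_{q \in \mathcal{Q}_t(s,a)}(R_q(s,a) - r_q(s,a))$. The summands are $1$-sub-Gaussian with mean zero, conditionally independent given $\mathcal{Q}_t(s,a)$, so a Hoeffding tail bound gives
\[
\Pr\!\bigl[\,|\hat{r}_t(s,a)-\bar{r}_t(s,a)| > \mathsf{rad}_{r,t}(s,a)\ \big|\ N_t(s,a)=n\bigr] \leq 2\exp\!\bigl(-n\,\mathsf{rad}_{r,t}(s,a)^2/2\bigr) = 2(SAT/\delta)^{-4},
\]
uniformly in $n$ (the case $n=0$ being trivial). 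Removing the conditioning and union-bounding over the $\leq SAT$ triples $(s,a,t)$ yields $\Pr[\mathcal{E}_r^c] \leq 2SAT\,(SAT/\delta)^{-4} \leq \delta/2$. For $\mathcal{E}_p$, the same conditioning shows that the vectors $\{e_{s_{q+1}}\}_{q \in \mathcal{Q}_t(s,a)}$ are conditionally independent with expectations $p_q(\cdot|s,a)$, averaging to $\bar{p}_t(\cdot|s,a)$. An $\ell_1$-concentration bound, either Weissman's inequality or the identity $\|v\|_1 = 2\sup_{A\subseteq\SSS}\langle\mathbf{1}_A, v\rangle$ combined with Hoeffding and a union bound over the $2^S$ subsets, yields
\[
\Pr\!\bigl[\,\|\hat{p}_t(\cdot|s,a)-\bar{p}_t(\cdot|s,a)\|_1 > \mathsf{rad}_{p,t}(s,a)\ \big|\ N_t(s,a)=n\bigr] \leq 2^{S+1}\exp\!\bigl(-n\,\mathsf{rad}_{p,t}(s,a)^2/2\bigr) = 2^{S+1}(SAT/\delta)^{-4S},
\]
so that $\Pr[\mathcal{E}_p^c] \leq SAT \cdot 2^{S+1}(SAT/\delta)^{-4S} \leq \delta/2$ with room to spare — the factor $S$ baked into $\mathsf{rad}_{p,t}$ is precisely what absorbs the $2^S$ union-bound prefactor.

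The main (and only) subtle point is that $\bar{r}_t$ and $\bar{p}_t$ are themselves random, as they are randomly-weighted averages of drifting means, and the number of samples $N_t(s,a)$ is adapted to past play rather than fixed. The conditioning-on-$\mathcal{Q}_t(s,a)$ device — equivalently, working with the natural filtration and viewing $R_q - r_q$ and $e_{s_{q+1}} - p_q(\cdot|s,a)$ as martingale differences to which Azuma/Hoeffding applies — is what cleanly sidesteps this; the non-stationarity never enters the concentration step, only the definitions of $\bar{r}_t$ and $\bar{p}_t$. Once this is in place the remainder is bookkeeping with the explicit constants in the definitions of $\mathsf{rad}_{r,t}$ and $\mathsf{rad}_{p,t}$.
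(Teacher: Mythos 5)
Your overall architecture---fix a triple $(s,a,t)$, concentrate, then union bound over the $SAT$ triples and, for the transition part, over the $2^S$ sign vectors/subsets of $\SSS$ (with the factor $S$ inside $\rad_{p,t}$ absorbing the $2^S$ prefactor)---matches the paper, and your constant bookkeeping is fine. But the central concentration step as you state it has a genuine gap: conditioning on the visit set $\mathcal{Q}_t(s,a)$ does \emph{not} leave the summands independent with the correct conditional means. The policy is non-anticipatory but adaptive: whether $q'\in\mathcal{Q}_t(s,a)$ for some $q'>q$ may depend on $R_q$ and on $s_{q+1}$, so the event $\{\mathcal{Q}_t(s,a)=Q\}$ carries information about the very quantities you are trying to center. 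The transition case makes this vivid: if $q$ and $q+1$ both lie in $\mathcal{Q}_t(s,a)$, then necessarily $s_{q+1}=s$, so conditionally on the visit set the vector $e_{s_{q+1}}$ is degenerate rather than distributed as $p_q(\cdot\,|\,s,a)$. This is the classical adaptive-sampling pitfall in bandit analysis, and it is precisely why one cannot simply ``condition on $N_t(s,a)=n$ and apply Hoeffding.''

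Your parenthetical remark that this is ``equivalently'' a martingale-difference argument is not in fact equivalent to the conditioning device---it is the correct route, but it still must be carried out. A vanilla Azuma bound at the deterministic horizon $t-1$ does not by itself produce a deviation of order $\sqrt{\log(\cdot)/N_t(s,a)}$ with the random, data-dependent $N_t(s,a)$ in the denominator. You need either (i) an extra union bound over the possible values $n\in\{1,\dots,W\}$ of $N_t(s,a)$ (the stated radii have enough slack in their constants to absorb the additional logarithmic cost), or (ii) a self-normalized martingale inequality. The paper takes route (ii): it invokes the self-normalized concentration bound of Abbasi-Yadkori et al.\ (their Theorem~1 with $d=1$), taking $\xi_q$ to be the centered reward, respectively the centered $\theta$-weighted transition indicator, and $Y_q=\mathbf{1}(s_q=s,\,a_q=a,\,(t-W)\vee 1\le q\le t-1)$ a predictable $\{0,1\}$-valued process; this yields the $1/\sqrt{N^+_t(s,a)}$ radius uniformly over $t$ in one stroke. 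If you replace your conditioning step with either of these two devices, the remainder of your argument (the expansion of the $\ell_1$ norm over sign vectors or subsets, and the union bounds) goes through as written.
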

The proof of Lemma \ref{lemma:estimation} is provided in Section \ref{sec:lemma:estimation} of the appendix. In defining ${\cal E}_p$, the widening parameter $\eta$ is set to be 0, since we are only concerned with the estimation error on $p$. Next, we bound the dynamic regret of each time step, under certain assumptions on $H_{p,t}(\eta)$. To facilitate our discussion, we define the following variation measure for each $t$ in an episode $m$: 
$$\dev_{r, t} = \sum^{t-1}_{q = (\tau(m) - W)\vee 1}B_{r, q},\quad \dev_{p, t} =  \sum^{t-1}_{q = (\tau(m) - W)\vee1} B_{p, q}.$$
\begin{proposition}\label{prop:error}
	Consider an episode $m$. Condition on events ${\cal E}_r, {\cal E}_p,$ and suppose that there exists a state transition distribution $p$ satisfying two properties: (1) $\forall s\in\SSS~\forall a\in\AAA_s,$ we have $p(\cdot|s,a)\in H_{p,\tau(m)}(s,a ; \eta)$, and (2) the diameter of $(\SSS,\AAA,p)$ at most $D$. Then, for every $t\in \{\tau(m), \ldots, \tau(m+1) - 1\}$ in episode $m$, we have
	\begin{align}
	\rho^*_t - r_t(s_t, a_t) \leq & \left[ \sum_{s' \in \SSS} p_t(s' | s_t, a_t) \tilde{\gamma}_{\tau(m)}(s') \right] - \tilde{\gamma}_{\tau(m)}(s_t)  \label{eq:prop_error_1}\\
	+ \frac{1}{\sqrt{\tau (m)}} + & \left[2 \dev_{r, t} + 4 D( \dev_{p, t} + \eta)\right] +  \left[ 2\rad_{r,\tau(m)}(s_t, a_t)  + 4 D \cdot \rad_{p,\tau (m)}(s, a) \right] \label{eq:prop_error_2}   .
	\end{align}
\end{proposition}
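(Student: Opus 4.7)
\medskip

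\noindent\textbf{Proof plan.} My plan is to split the one-step regret as $\rho^*_t - r_t(s_t,a_t) = [\rho^*_t - \tilde\rho_{\tau(m)}] + [\tilde\rho_{\tau(m)} - r_t(s_t,a_t)]$ and to handle the two pieces separately, both driven by the approximate Bellman guarantee that EVI is known to return. For the second piece, calling EVI with tolerance $1/\sqrt{\tau(m)}$ gives, at the visited state and the action $a_t = \tilde\pi_{\tau(m)}(s_t)$,
\[
\tilde\rho_{\tau(m)} + \tilde\gamma_{\tau(m)}(s_t) \leq \tilde r_{\tau(m)}(s_t,a_t) + \sum_{s'\in\SSS} \tilde p_{\tau(m)}(s'|s_t,a_t)\,\tilde\gamma_{\tau(m)}(s') + 1/\sqrt{\tau(m)}.
\]
Adding and subtracting $r_t(s_t,a_t)$ and $p_t(\cdot|s_t,a_t)$ inside produces (i) a reward discrepancy $\tilde r_{\tau(m)} - r_t$, (ii) a transition inner product $\langle \tilde p_{\tau(m)}(\cdot|s_t,a_t) - p_t(\cdot|s_t,a_t), \tilde\gamma_{\tau(m)}\rangle$, and (iii) the bias telescope $\sum_{s'} p_t(s'|s_t,a_t)\,\tilde\gamma_{\tau(m)}(s') - \tilde\gamma_{\tau(m)}(s_t)$, which is precisely \eqref{eq:prop_error_1}.

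I would next control the reward and transition discrepancies by triangle-inequality chains through the sliding-window averages $\hat r_{\tau(m)}, \bar r_{\tau(m)}$ and $\hat p_{\tau(m)}, \bar p_{\tau(m)}$. On $\mathcal E_r$, three legs suffice: $|\tilde r_{\tau(m)} - \hat r_{\tau(m)}| \leq \rad_{r,\tau(m)}$ (since $\tilde r_{\tau(m)} \in H_{r,\tau(m)}$), $|\hat r_{\tau(m)} - \bar r_{\tau(m)}| \leq \rad_{r,\tau(m)}$ (the definition of $\mathcal E_r$), and $|\bar r_{\tau(m)} - r_t| \leq \dev_{r,t}$ (since every time index contributing to $\bar r_{\tau(m)}$ sits inside the current window and the variation-budget definition \eqref{eq:variation_budgets} bounds the reward drift over that window by $\dev_{r,t}$). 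The analogous transition chain, exploiting that $H_{p,\tau(m)}(\eta)$ is widened by $\eta$, gives $\|\tilde p_{\tau(m)} - p_t\|_1 \leq 2\rad_{p,\tau(m)} + \eta + \dev_{p,t}$, and H\"older's inequality then bounds the transition inner product by $\|\tilde p_{\tau(m)} - p_t\|_1 \cdot \|\tilde\gamma_{\tau(m)}\|_\infty$.

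The main obstacle---and the step where the widening $\eta$ earns its keep---is the control $\|\tilde\gamma_{\tau(m)}\|_\infty \leq D$. Here I would invoke the structural property of EVI (adapted from the classical span lemma of \citep{JakschOA10}): the span of the bias vector EVI returns is bounded by the diameter of \emph{any} MDP sitting inside the confidence region fed to it. The standing hypothesis that some $p \in H_{p,\tau(m)}(\eta)$ has diameter at most $D$ is precisely what activates that lemma, and combining it with EVI's normalization $\min_{s\in\SSS}\tilde\gamma_{\tau(m)}(s)=0$ upgrades the span bound to the required $\ell_\infty$ bound. Without confidence widening, no bounded-diameter MDP need lie in the region at all (cf.\ Proposition \ref{lemma:peril1}), so this step would simply break---this is what our discussion in Section \ref{sec:challenge} is pointing to.

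Finally, for the first piece $\rho^*_t - \tilde\rho_{\tau(m)}$, I would appeal to optimism of EVI: on $\mathcal E_r \cap \mathcal E_p$ both $\bar r_{\tau(m)} \in H_{r,\tau(m)}$ and $\bar p_{\tau(m)} \in H_{p,\tau(m)}(0) \subseteq H_{p,\tau(m)}(\eta)$, so $\tilde\rho_{\tau(m)} \geq \rho^*(\bar r_{\tau(m)}, \bar p_{\tau(m)}) - 1/\sqrt{\tau(m)}$. A standard perturbation argument---long-term average reward is $1$-Lipschitz in $r$ in $\ell_\infty$ and Lipschitz in $p$ in $\ell_1$ with constant scaling like the bias span already controlled by $D$---then gives $\rho^*(\bar r_{\tau(m)}, \bar p_{\tau(m)}) \geq \rho^*_t - \dev_{r,t} - D\,\dev_{p,t}$ up to lower-order $\rad$ slack. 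Adding the two pieces, consolidating constants with a little slack, and regrouping yields the additive budget $2\dev_{r,t} + 4D(\dev_{p,t}+\eta) + 2\rad_{r,\tau(m)} + 4D\,\rad_{p,\tau(m)}$ displayed in \eqref{eq:prop_error_2}.
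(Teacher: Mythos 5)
Your decomposition, your use of the two EVI properties, your triangle-inequality chains through $\hat r_{\tau(m)},\bar r_{\tau(m)}$ and $\hat p_{\tau(m)},\bar p_{\tau(m)}$, and your identification of where the hypothesis on $p\in H_{p,\tau(m)}(\eta)$ enters (bounding the span of $\tilde\gamma_{\tau(m)}$) all match the paper's proof, which runs through Lemmas \ref{lemma:dev_r_p}, \ref{lemma:dev_rho} and \ref{lemma:cross_p_error}. Two issues, one minor and one genuine.

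Minor: the span bound you get from Lemma \ref{lemma:mc} for a \emph{feasible} (not optimal) dual solution is $2D$, not $D$, so the correct conclusion is $0\le\tilde\gamma_{\tau(m)}(s)\le 2D$; your constants still close because of the factor-$4D$ slack in \eqref{eq:prop_error_2}, but the statement as you wrote it is off by a factor of two.

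The genuine gap is in your treatment of the first piece $\rho^*_t-\tilde\rho_{\tau(m)}$. You propose optimism, $\tilde\rho_{\tau(m)}\ge\rho^*(\bar r_{\tau(m)},\bar p_{\tau(m)})-1/\sqrt{\tau(m)}$, followed by a ``standard perturbation argument'' giving $\rho^*(\bar r_{\tau(m)},\bar p_{\tau(m)})\ge\rho^*_t-\dev_{r,t}-D\,\dev_{p,t}$. That second step does not go through. The perturbation inequality in the direction you need is obtained by taking a feasible dual solution of $\textsf{D}(\bar r_{\tau(m)},\bar p_{\tau(m)})$, shifting its objective by $\dev_{r,t}+\mathrm{sp}(\gamma)\,\dev_{p,t}$, and checking feasibility for $\textsf{D}(r_t,p_t)$; the Lipschitz constant is therefore the bias span of the \emph{averaged} MDP $(\bar r_{\tau(m)},\bar p_{\tau(m)})$ (or of whatever dual solution you perturb), not the $D$ of the hypothesis. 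Proposition \ref{lemma:peril1} is precisely the statement that this span can be $\Theta(W)$, so the ``standard'' argument yields a constant that the entire paper is built to avoid. The paper's Lemma \ref{lemma:dev_rho} sidesteps this by never passing through $\rho^*(\bar r_{\tau(m)},\bar p_{\tau(m)})$: it perturbs the specific EVI dual pair $(\tilde\rho_{\tau(m)},\tilde\gamma_{\tau(m)})$ --- which is feasible for $\textsf{D}(\bar r_{\tau(m)},\bar p_{\tau(m)})$ by Property \ref{property:1} and the events ${\cal E}_r,{\cal E}_p$, and whose span is at most $2D$ by the widening hypothesis --- into a feasible solution of $\textsf{D}(r_t,p_t)$, and then invokes the fact that any feasible objective of the minimization $\textsf{D}(r_t,p_t)$ dominates $\rho^*_t$. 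You need to replace your perturbation-of-optimal-values step by this perturbation-of-the-EVI-dual-solution step; everything else in your outline then assembles into the stated bound.
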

The proof of Proposition \ref{prop:error} is provided in Section \ref{app:aux_prove_prop_error} of the appendix. Next, we state our first main result, which provides a dynamic regret bound assuming the knowledge of $B_r, B_p$ to set $W, \eta$:
\begin{theorem}\label{thm:main1}
	Assuming $S>1,$ the \sw~with window size $W,$ confidence widening parameter $\eta >0,$ and $\delta=T^{-1}$ satisfies the dynamic regret bound
	\begin{equation*}
	\tilde{O}\left(\frac{B_pW}{\eta}+B_r W +\frac{\sqrt{SA}T}{\sqrt{W}}+ D_\text{max}\left[ B_p W + \frac{S\sqrt{A}T}{\sqrt{W}} + T\eta + \frac{SAT}{W}  + \sqrt{T}\right]\right).	\end{equation*}
	If we further put $W=W^*=S^{2/3}A^{1/2}T^{1/2}(B_r + B_p)^{-1/2}$ and $\eta=\eta^*: = \sqrt{B_pW^*T^{-1}},$ this is $\tilde{O}\left( D_\text{max} (B_r + B_p)^{1/4} S^{2/3}A^{1/2}T^{3/4}\right).$
\end{theorem}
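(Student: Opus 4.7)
The plan is to condition on the high-probability event $\mathcal{E}_r \cap \mathcal{E}_p$ (which holds with probability at least $1 - \delta = 1 - 1/T$ by Lemma \ref{lemma:estimation}; its complement contributes only an $O(1)$ additive error to dynamic regret since per-step rewards lie in $[0,1]$), then apply Proposition \ref{prop:error} to every time step and sum the resulting per-step bounds. The crux is choosing the witness distribution $p$ and diameter bound $D$ that enter Proposition \ref{prop:error}. For each episode $m$ I would use $p = p_{\tau(m)-1}$: under $\mathcal{E}_p$ a triangle inequality combined with the fact that $\bar p_{\tau(m)}$ is a convex combination of actual $p_q$'s gives $\|p_{\tau(m)-1}(\cdot|s,a) - \hat p_{\tau(m)}(\cdot|s,a)\|_1 \leq \rad_{p,\tau(m)}(s,a) + \dev_{p,\tau(m)}$, so whenever $\eta \geq \dev_{p,\tau(m)}$ (call these ``good'' episodes), the witness $p_{\tau(m)-1}$ lies in $H_{p,\tau(m)}(s,a;\eta)$ and Assumption \ref{ass:communicating} supplies $D = D_{\max}$.

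For good episodes I would group the per-step contributions from Proposition \ref{prop:error} into four pieces. First, the bias differences $\sum_{s'} p_t(s'|s_t,a_t)\tilde\gamma_{\tau(m)}(s') - \tilde\gamma_{\tau(m)}(s_t)$ telescope within each episode up to an Azuma--Hoeffding martingale residual; together with $\|\tilde\gamma_{\tau(m)}\|_\infty \leq D_{\max}$ (an EVI-standard consequence of having a bounded-diameter optimistic model) and the episode count of $\tilde O(SA + T/W)$ from the dual doubling/window stopping criteria, this produces the $D_{\max} SAT/W$, $D_{\max}\sqrt T$, and $\sqrt T$ terms. Second, the drift pieces $\sum_t 2\dev_{r,t}$ and $\sum_t 4 D_{\max}\dev_{p,t}$ contribute $\tilde O(B_r W)$ and $\tilde O(D_{\max} B_p W)$ respectively, via the sliding-window pigeonhole argument that each $B_{r,q}$ (resp.\ $B_{p,q}$) appears in at most $O(W)$ summands. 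Third, the per-step widening $4 D_{\max}\eta$ sums to $D_{\max} T \eta$. Fourth, the confidence radii $\rad_{r,\tau(m)}(s_t,a_t)$ and $\rad_{p,\tau(m)}(s_t,a_t)$, combined with the doubling property $\nu_{\tau(m)}(s,a) \leq N^+_{\tau(m)}(s,a)$ and a Cauchy--Schwarz argument applied window-by-window as in \citep{JakschOA10}, give $\tilde O(\sqrt{SA}\,T/\sqrt W)$ and $\tilde O(D_{\max} S\sqrt A\,T/\sqrt W)$ respectively.

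The main obstacle is handling ``bad'' episodes where $\dev_{p,\tau(m)} > \eta$, for which $p_{\tau(m)-1}$ cannot be used to certify $D = D_{\max}$ in Proposition \ref{prop:error}. The key observation is $\sum_m \dev_{p,\tau(m)} \leq W B_p$ by the same sliding-window pigeonhole, so fewer than $W B_p / \eta$ episodes can be bad; a careful per-step $O(1)$ fallback regret bound yields the $B_p W/\eta$ term in the theorem (notably without the $D_{\max}$ factor). Summing good- and bad-episode contributions gives the first displayed bound. For the second claim I would substitute $W^* = S^{2/3}A^{1/2}T^{1/2}(B_r+B_p)^{-1/2}$ and $\eta^* = \sqrt{B_p W^*/T}$, observing that the widening-related terms $B_p W/\eta$ and $D_{\max} T \eta$ both become $\tilde O(\sqrt{B_p W^* T})$ (with an extra $D_{\max}$ in the latter) and balance against the dominating radius term of order $\tilde O(D_{\max} S\sqrt A\,T/\sqrt{W^*})$, so that all terms are absorbed into $\tilde O(D_{\max}(B_r+B_p)^{1/4}S^{2/3}A^{1/2}T^{3/4})$ as claimed. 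The conceptual heart of the proof is the good-vs-bad episode dichotomy, which is precisely where the novel confidence widening $\eta > 0$ earns its keep: it converts the worst-case unbounded diameter of the un-widened MDP (as warned in Section \ref{sec:cw_disc}) into a controlled trade-off against $D_{\max} T \eta$.
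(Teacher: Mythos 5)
Your overall architecture matches the paper's: condition on $\mathcal{E}_r\cap\mathcal{E}_p$, feed a witness transition distribution of diameter $\leq D_\text{max}$ into Proposition \ref{prop:error} on ``good'' episodes, handle the drift terms by a sliding-window pigeonhole, handle the radii by the doubling criterion plus Cauchy--Schwarz, and treat separately the episodes where the widening $\eta$ fails to cover the drift. (Your good-episode test $\dev_{p,\tau(m)}\leq\eta$ is a sufficient condition for the paper's test $p_{\tau(m)}\in H_{p,\tau(m)}(\eta)$, and your witness $p_{\tau(m)-1}$ works just as well as the paper's $p_{\tau(m)}$.) You also correctly identify the bad episodes as the crux. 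However, your accounting for them has a genuine gap. You bound the \emph{number} of bad episodes by $\sum_m \dev_{p,\tau(m)}/\eta \leq WB_p/\eta$ and then assert that a per-step $O(1)$ fallback ``yields the $B_pW/\eta$ term.'' It does not: each bad episode can last up to $W$ time steps, so multiplying your episode count by the per-step bound gives $W^2B_p/\eta$, which is off by a factor of $W$ and destroys the theorem (with $W=W^*$ and $\eta=\eta^*$ it is of order $W^{3/2}\sqrt{B_pT}\gg T$). This is precisely the double-counting trap the paper flags explicitly: the start times of two bad episodes need not be $W$ apart, so ``count bad episodes, multiply by $W$'' is too loose, while ``count bad episodes, multiply by $1$'' is unjustified.

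The paper closes this gap with the $Q_T/\tilde{Q}_T$ construction: it greedily selects a subset $Q_T$ of bad episodes whose start times are pairwise more than $W$ apart, proves $|Q_T|\leq B_p/\eta$ (note: \emph{not} $WB_p/\eta$ --- the $W$-separation lets disjoint stretches of variation budget be charged to distinct elements of $Q_T$, as in Lemma \ref{lemma:Q_size}), shows every bad episode starts within $W$ of some member of $Q_T$ (Lemma \ref{lemma:Q_tilde}), and hence bounds the total bad \emph{time} by $2W|Q_T|\leq 2WB_p/\eta$ (Lemma \ref{lemma:Q_error}). An equivalent repair of your argument is to pigeonhole at the level of time steps rather than episodes: every time step $t$ in a bad episode witnesses variation exceeding $\eta$ inside the window $[t-2W,t-1]$, and each unit $B_{p,q}$ of variation can be so witnessed by at most $2W$ time steps, giving total bad time $\leq 2WB_p/\eta$ directly. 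Either way, some device enforcing that distinct charges against the budget $B_p$ come from (nearly) disjoint windows is indispensable; as written, your proposal is missing it. The remainder of your outline (telescoping plus Azuma for the bias terms, $\tilde O(SAT/W)$ episodes, the $B_rW$ and $D_\text{max}B_pW$ drift sums, the $T\eta$ widening sum, the radius sums, and the final parameter substitution) agrees with the paper's Lemmas \ref{lemma:episode_errror}--\ref{lemma:sto_error} and is sound.
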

The proof of Theorem \ref{thm:main1} is provided in Section \ref{sec:thm:main1} of the appendix. 
\begin{remark}[\textbf{Confidence Widening}]
	Similar to the regret analysis of the UCRL2 algorithm (Section 4 of \citep{JakschOA10}) and the UCRL2B algorithm (Lemma 3 and eqn. (10) of \citep{FruitPL19}), Proposition \ref{prop:error} states that, if the confidence region $H_{p,\tau(m)}(\eta)$ contains a state transition distribution with diameter at most $D,$ then the EVI provided with $H_{p,\tau(m)}(\eta)$ returns a policy with dynamic regret bound that grows at most linearly with $D$ during episode $m$. However, as shown in Section \ref{sec:cw_disc} later, the parameter $\eta$ has to be carefully chosen for $D$ to be small as the worst case diameter of every state transition distribution in $H_{p,\tau(m)}(0)$ (\ie, setting $\eta=0$) can grow as $\tilde{\Omega}(\sqrt{W}),$ and might result in unfavorable dynamic regret bound. Here, the parameter $\eta$ is the keystone of our novel confidence widening technique and the resulting dynamic regret bound: As $\eta$ increases, the confidence region $H_{p, \tau(m)}(s, a ;\eta)$ becomes larger for each state-action pair $(s, a)$. Consider the first time step $\tau(m)$ of each episode $m:$ if $p_{\tau(m)}(\cdot|s,a)\in H_{p,\tau(m)}(s,a;\eta)$ for all state-action pair $(s,a),$ then Proposition \ref{prop:error} can be leveraged; otherwise, the widened confidence region enforces that a considerable amount of variation budget is consumed. 
\end{remark} 
\begin{remark}[\textbf{Connections with Dynamic Regret Bounds for Non-Stationary MAB}]
	\label{remark:bandit}
	When $S = \{s\}$, our problem becomes the non-stationary bandit problem studied by \citep{BGZ14},  and we have $D_\text{max} = 0$ and $B_p=0.$ By choosing $W=W^*= A^{1/3}T^{2/3} /B_r^{2/3}$, our algorithm has dynamic regret $\tilde{O}(B^{1/3}_r A^{1/3} T^{2/3})$, matching the minimax optimal dynamic regret bound by \citep{BGZ14} when $B_r\in[A^{-1},A^{-1}T].$ 
\end{remark}
\begin{remark}[\textbf{Connections with Regret Bounds for RL in Stationary MDPs}]
	\label{remark:rl}
	When $r_1=\ldots=r_T$ and $p_1=\ldots=p_T,$ our problem becomes the RL in stationary MDPs problem studied by \citep{JakschOA10}, and the \sw~with $W=T$ and $\eta=0$ can recover the regret bound $\tilde{O}(D_{\max}S\sqrt{AT})$ of the UCRL2 algorithm studied in \citep{JakschOA10}.
\end{remark}
\begin{remark}[\textbf{Dynamic Regret Bound without Knowing $B_r$ and $B_p$}]
	\label{remark:obl}
	Similar to \citep{CSLZ19,CSLZ19a}, if $B_p, B_r$ are not known, we can set $W$ and $\eta$ obliviously as $W=S^{\frac{2}{3}}A^{\frac{1}{2}} T^{\frac{1}{2}},\quad\eta= \sqrt{{W}/{T}}=S^{\frac{2}{3}}A^{\frac{1}{2}} T^{-\frac{1}{2}}$ to obtain a dynamic regret bound $\tilde{O}\left( D_\text{max} (B_r + B_p+1) S^{2/3}A^{1/2}T^{3/4}\right).$
\end{remark}

\section{Bandit-over-Reinforcement Learning Algorithm: Towards Parameter-Free}\label{sec:borl}
As pointed out by Remark \ref{remark:obl}, in the case of unknown $B_r$ and $B_{p},$ the dynamic regret of \sw~scales linearly in $B_r$ and $B_p$. However, by Theorem \ref{thm:main1}, we are assured a fixed pair of parameters $(W^*,\eta^*)$ can ensure low dynamic regret. For the bandit setting, \citep{CSLZ19a,CSLZ19} propose the bandit-over-bandit framework that uses a separate copy of EXP3 algorithm to tune the window size. Inspired by it, we develop a novel Bandit-over-Reinforcement Learning (\texttt{BORL}) algorithm with parameter-free $\tilde{O}\left(D_{\max}(B_r+B_p+1)^{1/4}S^{2/3}A^{1/2}T^{3/4}\right)$ dynamic regret here.
\subsection{Design Overview}
\label{sec:bob_intuition}
Following a similar line of reasoning as \citep{CSLZ19a}, we make use of the \sw~as a sub-routine, and ``hedge" \citep{BC12} against the (possibly adversarial) changes of $r_t$'s and $p_t$'s to identify a reasonable fixed window size and confidence widening parameter. 

As illustrated in Fig. \ref{fig:borl_int}, the \borl~divides the whole time horizon into $\lceil T/H\rceil$ blocks of equal length $H$ rounds (the length of the last block can $\leq H$), and specifies a set $J$ from which each pair of (window size, confidence widening) parameter are drawn from. For each block $i\in\left[\lceil T/H\rceil\right]$, the \borl~first calls some master algorithm to select a pair of (window size, confidence widening) parameters $(W_i,\eta_i)~(\in J)$, and restarts the \sw~with the selected parameters as a sub-routine to choose actions for this block. Afterwards, the total reward of block $i$ is fed back to the master, and the ``posterior" of these parameters are updated accordingly. 

One immediate challenge not presented in the bandit setting \citep{CSLZ19} is that the starting state of each block is determined by previous moves of the DM. Hence, the master algorithm is not facing a simple oblivious environment as the case in \citep{CSLZ19}, and we cannot use the EXP3 \citep{ABFS02} algorithm as the master. Nevertheless, fortunately the state is observed before the starting of a block. Thus, we use the EXP3.P algorithm for multi-armed bandit against an adaptive adversary \citep{ABFS02,BC12} as the master algorithm. We follow the exposition in Section 3.2 in \citep{BC12} for adapting the EXP3.P algorithm.
\begin{figure}[!ht]
	\centering
	\includegraphics[width=13.9cm,height=4cm]{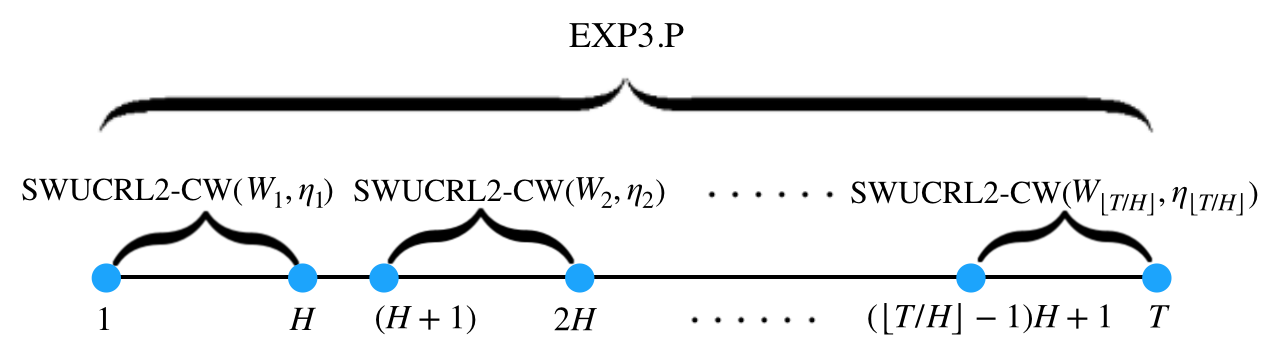}
	\caption{Structure of the \borl}
	\label{fig:borl_int}
\end{figure}

\subsection{Design Details}
We are now ready to state the details of the \borl. For some fixed choice of block length $H$ (to be determined later), we first define a couple of additional notations:
\begin{align}
\label{eq:borl_parameter}&H=\lfloor3S^{\frac{2}{3}}A^{\frac{1}{2}}T^{\frac{1}{2}}\rfloor,\Phi=\frac{1}{2T^{\frac{1}{2}}},\Delta_W=\left\lfloor\ln H\right\rfloor,\Delta_{\eta}=\left\lfloor\ln\Phi^{-1}\right\rfloor,\Delta=(\Delta_W+1)(\Delta_{\eta}+1),\\
\nonumber&J_W=\left\{H^0,\left\lfloor H^{\frac{1}{\Delta_W}}\right\rfloor,\ldots, H\right\}, J_{\eta}=S^{\frac{1}{3}}A^{\frac{1}{4}}\times\left\{\Phi^0,\Phi^{\frac{1}{\Delta_{\eta}}},\ldots, \Phi\right\},
J=\left\{(W,\eta):W\in J_W,\eta\in J_{\eta}\right\}.
\end{align}
Here, $J_W$ and $J_{\eta}$ are all possible choices of window size and confidence widening parameter, respectively, and $J$ is the Cartesian product of them with $|J|=\Delta.$ We also let $\reward_i(W,\eta,s)$ be the total rewards for running the \sw~with window size $W$ and confidence widening parameter $\eta$ for block $i$ starting from state $s,$

The EXP3.P algorithm 
treats each element of $J$ as an arm. It begins by initializing
\begin{align}
\label{eq:borl_exp3_parameter}
&\alpha=0.95\sqrt{\frac{\ln\Delta}{\Delta\lceil T/H\rceil}},\beta=\sqrt{\frac{\ln\Delta}{\Delta\lceil T/H\rceil}},\gamma=1.05\sqrt{\frac{\Delta\ln\Delta }{\lceil T/H\rceil}},q_{(j, k),1}=0~~\forall~(j,k)\in M,
\end{align}
where $M=\{(j',k'):j'\in\{0,1,\ldots,\Delta_W\},k'\in\{0,1,\ldots,\Delta_{\eta}\}\}.$
At the beginning of each block $i\in\left[\lceil T/H\rceil\right],$ the \borl~first sees the state $s_{(i-1)H+1},$ and computes
\begin{align}\label{eq:p}
\forall~(j,k)\in M,\quad u_{(j,k),i}=(1-\gamma)\frac{\exp(\alpha q_{(j,k),i})}{\sum_{(j',k')\in M}\exp(\alpha q_{(j',k'),i})}+\frac{\gamma}{\Delta}.
\end{align}
Then it sets $(j_i,k_i)=(j,k)$ with probability $u_{(j,k),i}~\forall~(j,k)\in M.$ The selected pair of parameters are thus $W_i=\left\lfloor H^{j_i/\Delta_W}\right\rfloor$ and $\eta_i=\left\lfloor \Phi^{k_i/\Delta_{\eta}}\right\rfloor.$ Afterwards, the \borl~starts from state $s_{(i-1)H+1},$ selects actions by running the \sw~with window size $W_i$ and confidence widening parameter $\eta_i$ for each round $t$ in block $i.$ At the end of the block, the \borl~observes the total rewards $\reward\left(W_i,\eta_i,s_{(i-1)H+1}\right).$ As a last step, it rescales  $\reward\left(W_i,\eta_i,s_{(i-1)H+1}\right)$ by dividing it by $H$ so that it is within $[0,1],$ and updates 
\begin{align}
\label{eq:borl_update}
\forall~(j,k)\in M~~~
q_{(j,k),i+1}=q_{(j,k),i}+\frac{\beta+\mathbf{1}_{(j,k)=(j_i,k_i)}\cdot\reward_i\left(W_i,\eta_i,s_{(i-1)H+1}\right)/H}{u_{(j,k),i}}.
\end{align}
The formal description of the \borl~(with $H$ defined in the next subsection) is shown in Algorithm \ref{alg:borl}.
\begin{algorithm}[!ht]
	\SingleSpacedXI
	\caption{\borl}
	\label{alg:borl}
	\begin{algorithmic}[1]
		\State \textbf{Input:} Time horizon $T$, state space $\SSS,$ and action space $\AAA,$ initial state $s_1.$
		\State \textbf{Initialize} $H,\Phi,\Delta_W,\Delta_{\eta},\Delta, J_W,J_\eta$ according to eqn. (\ref{eq:borl_parameter}), and $\alpha,\beta,\gamma$ according to eqn. \eqref{eq:borl_exp3_parameter}.
		\State {$M\leftarrow\{(j',k'):j'\in\{0,1,\ldots,\Delta_W\},k'\in\{0,1,\ldots,\Delta_{\eta}\}\},q_{(j,k),1}\leftarrow 0~\forall (j,k)\in M.$}		
		\For{$i=1,2,\ldots,\lceil T/H\rceil$}
		\State {Define distribution $(u_{(j, k),i})_{(j,k)\in M}$ according to eqn. \eqref{eq:p}, and set $(j_i,k_i)\leftarrow (j,k)$ with probability $u_{(j,k),i}$.}
		\State{$W_i\leftarrow\left\lfloor H^{j_i/\Delta_W}\right\rfloor,\eta_i\leftarrow\left\lfloor \Phi^{k_i/\Delta_{\eta}}\right\rfloor.$}
		\For{$t=(i-1)H+1,\ldots,i\cdot H\wedge T$}
		\State{Run the \sw~with window size $W_i$ and blow up parameter $\eta_i,$ and observe the total rewards $\reward\left(W_i,\eta_i,s_{(i-1)H+1}\right).$}
		\EndFor
		\State{Update $q_{(j,k),i+1}$ according to eqn. (\ref{eq:borl_update}).}
		\EndFor
	\end{algorithmic}
\end{algorithm}
\subsection{Performance Analysis}
\label{sec:borl_analysis}
The dynamic regret guarantee of the \borl~can be presented as follows
\begin{theorem}
	\label{thm:borl}
	Assume $S > 1,$ with probability $1-O(\delta),$ the dynamic regret bound of the \borl~is $\tilde{O}\left(D_{\max}(B_r+B_p+1)^{1/4}S^{2/3}A^{1/2}T^{3/4}\right)$ 
\end{theorem}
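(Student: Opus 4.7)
The plan is to decompose the total dynamic regret of the \borl~into (i) the regret that the \sw~would incur if it were run with some fixed ``hindsight-best'' pair $(W^\dagger,\eta^\dagger)\in J$ on every block, and (ii) the additional loss the EXP3.P master incurs while searching the grid $J$ for that pair. Letting $\reward_i(W,\eta,s)$ denote the block-$i$ total reward of running \texttt{SWUCRL2-CW}$(W,\eta)$ initialized at state $s$, we write
\begin{align*}
\text{Dyn-Reg}_T
&\leq \mathbb{E}\!\left[\sum_{t=1}^{T}\rho_t^* - \sum_{i=1}^{\lceil T/H\rceil}\reward_i(W^\dagger,\eta^\dagger,s_{(i-1)H+1})\right] \\
&\quad + \mathbb{E}\!\left[\sum_{i=1}^{\lceil T/H\rceil}\bigl(\reward_i(W^\dagger,\eta^\dagger,s_{(i-1)H+1}) - \reward_i(W_i,\eta_i,s_{(i-1)H+1})\bigr)\right],
\end{align*}
and bound each term separately.

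For the first term, I would take $(W^\dagger,\eta^\dagger)\in J$ to be the grid point closest, in logarithmic scale, to the hindsight-optimal pair $(W^*,\eta^*)$ from Theorem~\ref{thm:main1}, after clipping $W^*$ to $[1,H]$ and $\eta^*$ to the range of $J_\eta$. Because $J_W, J_\eta$ are geometric with $\Delta_W=\lfloor\ln H\rfloor$ and $\Delta_\eta=\lfloor\ln\Phi^{-1}\rfloor$ steps, consecutive grid points differ by a multiplicative factor of at most $e$, so $W^\dagger\in[W^*/e,\,eW^*]$ and similarly for $\eta^\dagger$. Inspecting the bound of Theorem~\ref{thm:main1} shows it is state-uniform and continuous in $\log W$ and $\log\eta$, so substituting $(W^\dagger,\eta^\dagger)$ only inflates each per-block summand by a constant factor. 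Applying Theorem~\ref{thm:main1} to each of the $\lceil T/H\rceil$ blocks separately (the per-block variation budgets are trivially upper bounded by the global $B_r,B_p$) and summing yields the first term being $\tilde{O}\!\left(D_{\max}(B_r+B_p+1)^{1/4}S^{2/3}A^{1/2}T^{3/4}\right)$; the ``$+1$'' absorbs the low-variation regime in which $W^*>H$ and one is forced to clip to $W^\dagger=H$.

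For the second term, the crucial observation is that the starting state $s_{(i-1)H+1}$ of block $i$ depends on all prior master choices, so the master faces an \emph{adaptive}, not oblivious, adversary; this is exactly the setting for which EXP3.P was designed. Rescaling the block rewards into $[0,1]$ by dividing by $H$ and instantiating the standard high-probability EXP3.P regret bound (Theorem 3.3 of \citep{BC12}) over $\lceil T/H\rceil$ rounds with $\Delta$ arms gives master regret $\tilde{O}\!\bigl(\sqrt{\Delta\lceil T/H\rceil}\bigr)$ after rescaling, i.e.\ $\tilde{O}\!\bigl(\sqrt{\Delta H T}\bigr)$ in the original scale, with probability $1-O(\delta)$. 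Plugging in $H=\Theta(S^{2/3}A^{1/2}T^{1/2})$ and $\Delta=O(\log^2 T)$, this contribution becomes $\tilde{O}(S^{1/3}A^{1/4}T^{3/4})$, which is dominated by the first term since $S^{1/3}A^{1/4}\leq S^{2/3}A^{1/2}$.

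The main obstacle I anticipate is the cross-block state dependence. It forces two things: Theorem~\ref{thm:main1} must be invoked with an arbitrary block-starting state (which is acceptable because its proof is via a per-step telescoping argument that is insensitive to the initial state), and the master guarantee must come from the adaptive-adversary version of EXP3.P rather than the easier oblivious one. Once both issues are handled, a union bound over the high-probability concentration events underlying Theorem~\ref{thm:main1} and the EXP3.P bound combines (i) and (ii) to give the claimed $\tilde{O}\!\left(D_{\max}(B_r+B_p+1)^{1/4}S^{2/3}A^{1/2}T^{3/4}\right)$ dynamic regret with probability $1-O(\delta)$.
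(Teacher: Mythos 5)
Your proposal follows essentially the same route as the paper: the same two-term decomposition into (i) the regret of \texttt{SWUCRL2-CW} run block-by-block with a fixed grid pair $(W^\dagger,\eta^\dagger)$ and (ii) the EXP3.P master's regret against an adaptive adversary (forced by the cross-block state dependence), the same geometric-grid approximation of $(W^*,\eta^*)$ to within a multiplicative factor of $e$, and the same $\tilde{O}(\sqrt{\Delta HT})=\tilde{O}(S^{1/3}A^{1/4}T^{3/4})$ accounting for the master. One caution on term (i): your parenthetical ``the per-block variation budgets are trivially upper bounded by the global $B_r,B_p$'' must not be used literally --- substituting the global budgets into each block's instance of Theorem \ref{thm:main1} and then summing over the $\lceil T/H\rceil$ blocks would inflate the variation-driven terms such as $B_rW^\dagger$ and $B_pW^\dagger/\eta^\dagger$ by a factor of $T/H$ and destroy the $T^{3/4}$ rate. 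What the paper actually uses is that the per-block budgets $B_r(i),B_p(i)$ are summed over the disjoint blocks, giving totals at most $B_r,B_p$, so that only the per-block terms that are linear in the block length (namely $S\sqrt{A}H/\sqrt{W^\dagger}$, $H\eta^\dagger$, $SAH/W^\dagger$, $\sqrt{H}$) pick up the factor $\lceil T/H\rceil$; with that one correction your argument matches the paper's proof.
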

The proof of Theorem \ref{thm:borl} is provided in Section \ref{sec:thm:borl} of the appendix.
\section{The Perils of Drift in Learning Markov Decision Processes}\label{sec:cw_disc}
In stochastic online learning problems, one usually estimates a latent quantity by taking the time average of observed samples, even when the sample distribution varies across time. This has been proved to work well in stationary and non-stationary bandit settings \citep{ABF02,GarivierM11,CSLZ19a,CSLZ19}. To extend to RL, it is natural to consider the sample average transition distribution $\hat{p}_t$, which uses the data in the previous $W$ rounds to estimate the time average transition distribution $\bar{p}_t$ to within an additive error $\tilde{O}(1/  \sqrt{N^+_t(s, a}))$ (see Section \ref{sec:evi} and Lemma \ref{lemma:estimation}). In the case of stationary MDPs, where $\forall~t\in[T]~p_t= p$, one has $\bar{p}_t = p.$ Thus, the un-widened confidence region $H_{p,t}(0)$ contains $p$ with high probability (see Lemma \ref{lemma:estimation}). Consequently, the UCRL2 algorithm by \citep{JakschOA10}, which optimistic explores $H_{p,t}(0)$, has a regret that scales linearly with the diameter of $p$.

The approach of optimistic exploring $H_{p, t}(0)$ is further extended to RL in \emph{piecewise-stationary} MDPs by \citep{JakschOA10,GajaneOA18}. The latter establishes a $O(\ell^{1/3} D^{2/3}_{\text{max}} S^{2/3} A^{1/3}T^{2/3})$ dynamic regret bounds, when there are at most $\ell$ changes. Their analyses involve partitioning the $T$-round horizon into $C \cdot T^{1/3}$ equal-length intervals, where $C$ is a constant dependent on $D_\text{max}, S, A, \ell$. At least $C T^{1/3} - \ell$ intervals enjoy stationary environments, and optimistic exploring $H_{p, t}(0)$ in these intervals yields a dynamic regret bound that scales linearly with $D_\text{max}$. Bounding the dynamic regret of the remaining intervals by their lengths and tuning $C$ yield the desired bound.   

In contrast to the stationary and piecewise-stationary settings, optimistic exploration on  $H_{p,t}(0)$ might lead to unfavorable dynamic regret bounds in non-stationary MDPs. In the non-stationary environment where $p_{t-W}, \ldots, p_{t-1}$ are generally distinct, we show that it is impossible to bound the diameter of $\bar{p}_t$ in terms of the maximum of the diameters of $p_{t-W}, \ldots, p_{t-1}$. More generally, we demonstrate the previous claim not only for $\bar{p}_t$, but also for every $\tilde{p}\in H_{p, t}(0)$ in the following Proposition. The Proposition showcases the unique challenge in exploring non-stationary MDPs that is absent in the piecewise-stationary MDPs, and motivates our notion of confidence widening with $\eta > 0$. To ease the notation, we put $t=W+1$ without loss of generality. 
\begin{proposition}\label{lemma:peril1}
	There exists a sequence of non-stationary MDP transition distributions $p_1, \ldots, p_W$ such that 
	\begin{itemize}
		\item The diameter of $(\SSS, \AAA, p_n)$ is $1$ for each $n\in[W].$
		\item The total variations in state transition distributions is $O(1).$
	\end{itemize}
Nevertheless, under some deterministic policy,
\begin{enumerate}
	\item The empirical MDP $(\SSS, \AAA, \hat{p}_{W+1})$ has diameter $\Theta(W)$
	\item Further,  for every $\tilde{p}\in H_{p, W+1}(0),$ the MDP $(\SSS, \AAA, \tilde{p})$ has diameter $\Omega(\sqrt{W/\log W})$
\end{enumerate}
\end{proposition}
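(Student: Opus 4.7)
The plan is to exhibit an explicit construction on two states and two actions per state, together with a deterministic (history-dependent) policy. Let $\SSS = \{s_0, s_1\}$, $\AAA_{s_0} = \{a_1, a_2\}$, and $\AAA_{s_1} = \{b_1, b_2\}$. Split $[W]$ into two regimes $E_1 = \{1, \ldots, \lfloor W/2\rfloor\}$ and $E_2 = \{\lfloor W/2\rfloor + 1, \ldots, W\}$. For $n \in E_1$, declare $a_1$ to deterministically send $s_0$ to $s_1$ and $a_2$ to deterministically stay at $s_0$, and symmetrically $b_1 : s_1 \to s_0$ (move), $b_2 : s_1 \to s_1$ (stay). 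For $n \in E_2$, swap the move/stay labels within each state, so that $a_2$ moves, $a_1$ stays, $b_2$ moves, $b_1$ stays. Every $p_n$ therefore supplies, from each state, an action reaching the other state with probability $1$, so every $(\SSS, \AAA, p_n)$ has diameter exactly $1$. The only index at which $p_n$ changes is the boundary $n = \lfloor W/2\rfloor$, where all four transition rows flip; this gives $B_{p, \lfloor W/2\rfloor} = 2$ and hence $B_p = 2 = O(1)$.

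Next, I would specify a deterministic, history-dependent policy $\pi$ that visits every state-action pair $\Theta(W)$ times while observing a transition to the \emph{other} state under each pair exactly once. Starting at $s_0$, play the $E_1$-stay action $a_2$ for roughly $W/4$ steps; then play $a_1$ once (which in $E_1$ moves to $s_1$); at $s_1$, play the $E_1$-stay action $b_2$ for about $W/4 - 1$ steps so that the regime change happens while still at $s_1$; continue with the $E_2$-stay action $b_1$ for roughly $W/4$ steps; then play $b_2$ once (which in $E_2$ moves back to $s_0$); finally play the $E_2$-stay action $a_1$ for the remaining $W/4$ steps. A direct tabulation gives $N_{W+1}(s, a) = \Theta(W)$ for each of the four pairs, with exactly one cross-state observation per pair. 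Consequently $\hat{p}_{W+1}(s'|s, a) = \Theta(1/W)$ for $s' \neq s$, and $\hat{p}_{W+1}(s|s, a) = 1 - \Theta(1/W)$.

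For the first claim, the best one-step probability of leaving either state in the empirical MDP is $\Theta(1/W)$, so the expected hitting time between $s_0$ and $s_1$ is $\Theta(W)$ in both directions, yielding empirical diameter $\Theta(W)$. For the second claim, with $\delta = 1/T$ and $T$ polynomial in $W$, the $L_1$ radius $\rad_{p, W+1}(s, a) = 2\sqrt{2S \log(SAT/\delta)/N^+_{W+1}(s, a)}$ is $\tilde{O}(1/\sqrt{W})$ uniformly over the four pairs. Hence for any $\tilde{p} \in H_{p, W+1}(0)$ and any action $a$ at $s_0$, one has $\tilde{p}(s_1|s_0, a) \leq \hat{p}_{W+1}(s_1|s_0, a) + \rad_{p, W+1}(s_0, a) = \tilde{O}(\sqrt{\log W / W})$, and symmetrically at $s_1$. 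Taking the best action at each state still leaves a cross-state probability of $\tilde{O}(\sqrt{\log W/W})$, so the expected hitting time between $s_0$ and $s_1$ in $(\SSS, \AAA, \tilde{p})$ is $\Omega(\sqrt{W/\log W})$, which lower-bounds the diameter as claimed.

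The main obstacle I anticipate is reconciling three pulling constraints simultaneously: diameter exactly $1$ (which forces every $p_n$ to offer a sure-mover action at each state), total variation $O(1)$ (which forbids this sure mover from oscillating too often across time), and a deterministic policy whose $W$-step sample has $\Theta(W)$ visit counts at every state-action yet only $O(1)$ cross-state observations under each. The resolution is to let the sure-mover swap exactly once at the $E_1/E_2$ boundary, and to engineer the policy to ride the current stay action in each regime, injecting a single move per regime solely to harvest samples at the other state. Once that balance is struck, both claims follow from routine arithmetic on the empirical distribution together with the Hoeffding-type confidence radius.
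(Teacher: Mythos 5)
Your construction is correct and follows essentially the same route as the paper's proof: a two-state, two-action deterministic MDP whose move/stay action labels swap a constant number of times, paired with a time-dependent deterministic policy that rides the current stay action so that every state--action pair accumulates $\Theta(W)$ self-loop samples and at most one cross-state sample, after which the empirical diameter is $\Theta(W)$ and the $\tilde{O}(1/\sqrt{W})$ confidence radius forces every $\tilde{p}\in H_{p,W+1}(0)$ to have cross-state transition probability $\tilde{O}(\sqrt{\log W/W})$, hence diameter $\Omega(\sqrt{W/\log W})$. (The paper uses four alternating regimes rather than your two, and in your tabulation two of the four pairs actually record zero rather than one cross-state observation --- exactly as in the paper's own counts --- but neither difference affects the conclusions.)
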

\begin{proof}{\textit{Proof.}}
	The sequence  $p_1, \ldots, p_W$ alternates between the following 2 instances $p^1, p^2$. Now, define the common state space $\SSS = \{1, 2\}$ and action collection $\AAA = \{\AAA_1, \AAA_2\}$, where $\AAA_1 = \{a_1, a_2\}$, $\{\AAA_2\} = \{b_1, b_2\}$. We assume all the state transitions are deterministic, and a graphical illustration is presented in Fig. \ref{fig:peril}. Clearly, we see that both instances have diameter $1$. 
	\begin{figure}[!ht]
		\centering
		\includegraphics[width=12.5cm,height=3.45cm]{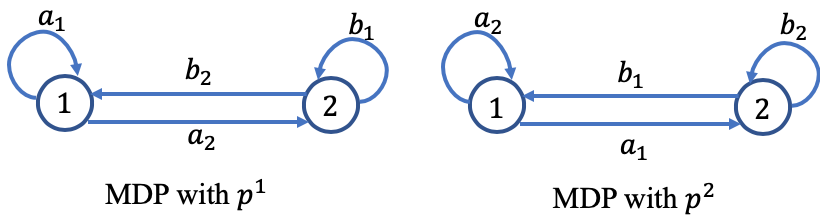}
		\caption{Example MDPs. Since the transitions are deterministic, the probabilities are omitted.}
		\label{fig:peril}
	\end{figure}

	Now, consider the following two deterministic and stationary policies $\pi^1:$ $\pi^1(1) = a_1, \pi^1(2) = b_2,$ and $\pi^2:$ $\pi^2(1) = a_2,\pi^2(2) = b_1.$ Since the MDP is deterministic, we have $\hat{p}_{W+1} = \bar{p}_{W+1}$.
	
	In the following, we construct a trajectory where the DM alternates between policies $\pi^1, \pi^2$ during time  $\{1, \ldots, W\}$  while the underlying transition distribution alternates between $p^1, p^2$. In the construction, the DM is almost always at the self-loop at state 1 (or 2) throughout the horizon, no matter what action $a_1, a_2 $ (or $b_1, b_2$) she takes. Consequently, it will trick the DM into thinking that $\hat{p}_{W+1}(1|1, a_i) \approx 1$ for each $i\in \{1, 2\}$, and likewise  $\hat{p}_{W+1}(2|2, b_i) \approx 1$ for each $i\in \{1, 2\}$. Altogether, this will lead the DM to conclude that $(\SSS, \AAA, \hat{p}_{W+1})$ constitute a high diameter MDP, since the probability of transiting from state 1 to 2 (and 2 to 1) are close to 0.
	
	The construction is detailed as follows. Let $W = 4\tau$. In addition, let the state transition distributions be
 	$$p_1 = \ldots = p_\tau = p^1,\quad p_{\tau + 1} = \ldots = p_{2 \tau} = p^2,\quad p_{2\tau + 1} = \ldots = p_{3 \tau } = p^1,\quad p_{3\tau + 1} = \ldots = p_{4 \tau} = p^2.$$
	The DM starts at state 1. She follows policy $\pi^1$ from time 1 to time $2\tau$, and then policy $\pi^2$ from $2 \tau + 1$ to $4\tau$.

	Under the specified MDP models and policies, it can be readily verified that the DM takes action $a_1$ from time $1$ to $\tau + 1$, action $b_2$ from time $\tau + 2$ to $2\tau$, action $b_1$ from time $2\tau + 1$ to $3\tau + 1$, and action $a_2$ from time $3\tau + 2$ to $4\tau$. As a result, the DM is at state $1$ from time $1$ to $\tau + 1$,  state $2$ from time $\tau + 2$ to $3\tau + 1$,  and eventually state $1$ from time $3\tau + 2$ to $4\tau $ as depicted in Fig. \ref{fig:peril1}. 
	\begin{figure}[!ht]
		\centering
		\includegraphics[width=12cm,height=3.8cm]{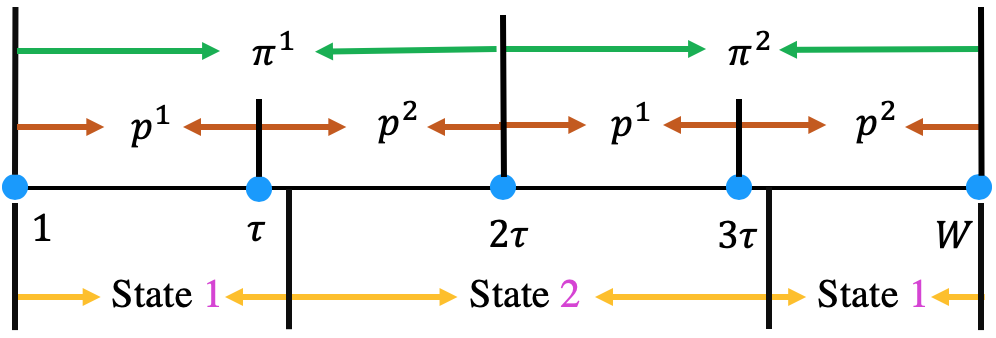}
		\caption{Illustration of the latent MDPs, policies, and state visits.}
		\label{fig:peril1}
	\end{figure}
	We thus have:
	\begin{align*}
	\hat{p}_{W+1}(1 | 1, a_1) =  \frac{\tau}{\tau + 1},\quad \hat{p}_{W+1}(2 | 1, a_1) = \frac{1}{\tau  +1}, \quad\hat{p}_{W+1}(1 | 1, a_2) =1,\quad \hat{p}_{W+1}(2 | 1, a_2) = 0 \\
	\hat{p}_{W+1}(2 | 2, b_1)  = \frac{\tau}{\tau + 1}, \quad \hat{p}_{W+1}(1 | 2, b_1) = \frac{1}{\tau + 1},
	\quad\hat{p}_{W+1}(2 | 2, b_2) =1, \quad\hat{p}_{W+1}(1 | 2, b_2) = 0,
	\end{align*}
	and It can be readily verified that the diameter of $(\SSS,\AAA,\hat{p}_{W+1})$ is $\tau+1=\Theta(W).$ Finally, for the confidence region $H_{p, W+1}(0) = \{H_{p, W+1}(s, a ; 0)\}_{s, a}$ constructed without confidence widening, for any $\tilde{p}\in H_{p, W+1}(0)$ we have 
	$$\tilde{p}(2 | 1, a_1)=\tilde{p}(1 | 2, b_1)= O\left(\sqrt{\frac{\log W}{\tau+1}}\right),\quad\tilde{p}(2 | 1, a_2)=\tilde{p}(1 | 2, b_2) = O\left(\sqrt{\frac{\log W}{\tau-1}}\right)$$
	respectively. Since the stochastic confidence radii $\Theta\left(\sqrt{\frac{\log W}{\tau+1}}\right)$ and $\Theta\left(\sqrt{\frac{\log W}{\tau-1}}\right)$ dominate the sample mean $\frac{1}{\tau + 1}$ and $0$. Therefore, for any $\tilde{p}\in H_{p, W+1}(0)$, the diameter of the MDP constructed by $(\SSS, \AAA, \tilde{p})$ is at least $\Omega\left(\sqrt{\frac{W}{\log W}}\right)$. \halmos
\end{proof}
\begin{remark}
	In Proposition \ref{lemma:peril1}, there are two reasons for the discrepancy between the individual MDPs $p_1, \ldots, p_W$ and the MDPs in the un-widened confidence region $H_{p, W+1}(0)$: 
	\begin{itemize} 
		\item First, due to the bandit feedback, the samples used to construct $\hat{p}_{W+1}$ come from different state-action pairs at different time. As a result, $\hat{p}_{W+1}$ and $\bar{p}_{W+1}$ can be very different than each of the individual state transition probability distributions $p_1, \ldots, p_W$. 
		\item Second, the number of visits to each state-action pair is roughly $W/4,$ which means we would have very ``narrow" confidence regions (of the order $\tilde{O}(1/\sqrt{W})$) if we follow standard optimistic exploration techniques based on concentration inequalities (\ie, the confidence regions shrink as the number of samples grows).
	\end{itemize}
Critically, as shown in Proposition \ref{prop:error} (as well as Section 4 of \citep{JakschOA10}) as well as Lemma 3 and eqn. (10) of \citep{FruitPL19}), the minimum diameter of the MDPs in the confidence regions play a key role in leading to low (dynamic) regret bounds. We thus believe the caveat in learning non-stationary MDPs via conventional optimistic exploration is fundamental in general. In the current paper, we leverage our novel confidence widening technique to prevent the confidence regions from becoming too narrow even if we have lots of samples.
\end{remark}
\begin{remark}
	Inspecting the prevalent OFU guided approach for stochastic MAB and RL in MDPs settings \citep{ABF02,AYPS11,JakschOA10,BC12,LS18}, one usually concludes that a tighter design of confidence region can result in a lower (dynamic) regret bound. In \citep{AbernethyAZ16}, this insights has been formalized in stochastic K-armed bandit settings via a potential function type argument. Nevertheless, Proposition \ref{lemma:peril1} (together with Theorem \ref{thm:main1}) demonstrates that using the tightest confidence region in learning algorithm design may not be enough to ensure low dynamic regret bound for RL in non-stationary MDPs.
\end{remark}
\section{Alternative for Confidence Widening with Application in Inventory Control}\label{sec:alternative}
As demonstrated in previous sections, running the proposed algorithms with the widened confidence regions can help the DM to attain provably low dynamic regret in general RL in non-stationary MDPs. Nevertheless, confidence widening is not always necessary if the state transition distributions bear a special structure. In particular, we consider the following assumption on the state transition distributions $p_1,\ldots,p_T.$
\begin{assumption}\label{ass:alternative}
	There exists a positive quantity (not necessarily known to the DM) $\zeta\in\Re_{+},$ such that for any pair of states $s,s'\in\SSS,$ there is an action $a_{(s,s')}\in\AAA_s$ that satisfies $p_t\left(s'|s,a_{(s,s')}\right)\geq\zeta$ for all $t\in[T].$
\end{assumption}

We can now analyze the dynamic regret bound of the \sw~under Assumption \ref{ass:alternative}. Here, we follow the notations introduced in Section \ref{sec:model} for consistency. In general, Assumption \ref{ass:alternative} ensures that for every time step $t\in[T],$ there exists a state transition distribution $p\in H_{p,t}(0)$ such that the induced diameter of the MDP $(\SSS,\AAA,p)$ is upper bounded by the constant $\bar{D}:=1/\zeta$ with high probability.
\begin{proposition}\label{prop:diameter}
	Under Assumption \ref{ass:alternative} and conditioned on the event $\cal{E}_p$, there exists a state transition distribution $p$ in the confidence region $H_{p,t}(0),$ such that the induced diameter of the MDP $(\SSS,\AAA,p)$ is at most $\bar{D}:={1}/{\zeta}$ for all $t\in[T].$
\end{proposition}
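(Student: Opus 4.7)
The plan is to exhibit a single transition kernel $p$ that lies in $H_{p,t}(s,a;0)$ for every state-action pair $(s,a)$ simultaneously, and whose diameter is controlled via a one-step reachability argument driven by the actions guaranteed by Assumption \ref{ass:alternative}. Once we secure $p(s' \mid s, a_{(s,s')}) \geq \zeta$ for every ordered pair $(s,s')$, the stationary policy $\pi^{s'}$ that plays $a_{(s'',s')}$ whenever the current state is $s''$ has hitting time to $s'$ stochastically dominated by a geometric random variable with success probability $\zeta$, so its expected hitting time---and hence the diameter of $(\SSS,\AAA,p)$---is at most $1/\zeta = \bar{D}$.

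First I would construct $p$ state-action pair by state-action pair. For each $(s,a)$ with $N_t(s,a) \geq 1$, set $p(\cdot \mid s,a) := \bar{p}_t(\cdot \mid s,a)$; conditioned on ${\cal E}_p$ this choice lies in $H_{p,t}(s,a;0)$ by definition of the event. For each $(s,a)$ with $N_t(s,a) = 0$, observe that $\rad_{p,t}(s,a) = 2\sqrt{2S \log(SAT/\delta)} \geq 1$ in the parameter regime of interest, while $\hat{p}_t(\cdot \mid s,a)$ is the zero vector, so the constraint $\|\dot{p}(\cdot \mid s,a) - \hat{p}_t(\cdot \mid s,a)\|_1 \leq \rad_{p,t}(s,a)$ is satisfied by every $\dot{p}(\cdot \mid s,a) \in \Delta^{\SSS}$; in this subcase set $p(\cdot \mid s,a) := p_t(\cdot \mid s,a)$, which automatically belongs to both $\Delta^{\SSS}$ and $H_{p,t}(s,a;0)$.

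Next I would verify the key lower bound $p(s' \mid s, a_{(s,s')}) \geq \zeta$ for each ordered pair $s,s' \in \SSS$. If $N_t(s, a_{(s,s')}) \geq 1$, eqn.~\eqref{eq:p_bar} gives
\[
p(s' \mid s, a_{(s,s')}) = \bar{p}_t(s' \mid s, a_{(s,s')}) = \frac{1}{N^+_t(s, a_{(s,s')})} \sum_{q} p_q(s' \mid s, a_{(s,s')})\, \mathbf{1}(s_q = s,\, a_q = a_{(s,s')}) \geq \zeta,
\]
since every surviving summand is at least $\zeta$ by Assumption \ref{ass:alternative} and the number of surviving summands equals $N^+_t(s, a_{(s,s')})$. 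If instead $N_t(s, a_{(s,s')}) = 0$, the construction yields $p(s' \mid s, a_{(s,s')}) = p_t(s' \mid s, a_{(s,s')}) \geq \zeta$ directly from Assumption \ref{ass:alternative}. In either case, the geometric hitting-time bound noted in the first paragraph closes the argument.

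The only step that requires care is the treatment of unvisited state-action pairs: one must confirm that the ``un-widened'' radius $\rad_{p,t}(s,a)$ is already large enough when $N_t(s,a)=0$ to admit the substitution of $p_t(\cdot \mid s,a)$ into the confidence region, since without this verification the constructed $p$ might fail membership in $H_{p,t}(0)$ on some coordinate. Beyond this small piece of bookkeeping, the proof is a one-shot construction plus a geometric hitting-time computation driven entirely by Assumption \ref{ass:alternative}, with no concentration or union-bound overhead beyond what is already encoded in ${\cal E}_p$.
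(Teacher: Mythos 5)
Your proposal is correct and follows essentially the same route as the paper: the paper's Lemma \ref{lemma:diameter_Hp} performs exactly your two-case construction (take $\bar{p}_t(\cdot|s,a_{(s,s')})$ when $N_t(s,a_{(s,s')})>0$, noting each summand in eqn.~\eqref{eq:p_bar} is at least $\zeta$; take $p_t(\cdot|s,a_{(s,s')})$ when $N_t(s,a_{(s,s')})=0$, using $\rad_{p,t}\geq 1$ so the un-widened region is all of $\Delta^{\SSS}$), and the paper then concludes with the same stochastic-dominance-by-a-geometric argument giving expected hitting time at most $1/\zeta$. No gaps.
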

The proof of Proposition \ref{prop:diameter} is provided in Section \ref{sec:prop:diameter} of the appendix. The proposition indicates that the DM can achieve a bounded dynamic regret by implementing the \sw~with $\eta=0.$ To analyze its dynamic regret bound, we provide a variation of
Proposition \ref{prop:error} as follows.
\begin{proposition}\label{prop:error_alternative}
	Consider an episode $m.$ Conditioning on events ${\cal E}_r, {\cal E}_p,$ then for every $t\in\{\tau(m),\ldots,\tau(m+1)-1\}$ in episode $m,$ we have
		\begin{align*}
		 \rho^*_t-r_t(s_t, a_t)  \leq & \left[ \sum_{s' \in \SSS} p_t(s' | s_t, a_t) \tilde{\gamma}_{\tau(m)}(s') \right] - \tilde{\gamma}_{\tau(m)}(s_t)  \\
		+ \frac{1}{\sqrt{\tau (m)}} + & \left[2 \dev_{r, t} + 4 \bar{D}\dev_{p, t} \right] +  \left[ 2\rad_{r,\tau(m)}(s_t, a_t)  + 4 \bar{D} \cdot \rad_{p,\tau (m)}(s, a) \right].
	\end{align*}
\end{proposition}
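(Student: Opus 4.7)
The plan is to derive Proposition~\ref{prop:error_alternative} as a direct specialization of Proposition~\ref{prop:error}, using Proposition~\ref{prop:diameter} to discharge the hypothesis. Recall that Proposition~\ref{prop:error} requires, for a given episode~$m$, the existence of some state transition distribution $p$ lying in the widened confidence region $H_{p,\tau(m)}(\eta)$ whose induced MDP $(\SSS,\AAA,p)$ has diameter at most $D$; the per-step decomposition it produces then carries the terms $4D(\dev_{p,t}+\eta)$ and $4D\cdot\rad_{p,\tau(m)}(s,a)$, alongside the reward terms and the bias-difference term.

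Under Assumption~\ref{ass:alternative} and conditional on the event $\mathcal{E}_p$, Proposition~\ref{prop:diameter} already supplies such a distribution, and crucially it lives in the \emph{un-widened} region $H_{p,t}(0)$ with induced diameter at most $\bar{D}=1/\zeta$, for every $t\in[T]$. Instantiated at $t=\tau(m)$, this is precisely the hypothesis of Proposition~\ref{prop:error} with the specific parameter choices $\eta=0$ and $D=\bar{D}$. Note that the \sw, when invoked in this regime, is run with $\eta=0$, so the call to EVI at the start of episode~$m$ takes $H_{p,\tau(m)}(0)$ as input; hence the output tuple $(\tilde{\pi}_{\tau(m)},\tilde{r}_{\tau(m)},\tilde{p}_{\tau(m)},\tilde{\rho}_{\tau(m)},\tilde{\gamma}_{\tau(m)})$ is exactly the one referenced by Proposition~\ref{prop:error} at $\eta=0$.

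I would then apply Proposition~\ref{prop:error} verbatim with these substitutions. Setting $\eta=0$ collapses $4D(\dev_{p,t}+\eta)$ to $4\bar{D}\,\dev_{p,t}$, and replacing $D$ by $\bar{D}$ in the confidence-radius term yields $4\bar{D}\cdot\rad_{p,\tau(m)}(s,a)$. The remaining quantities, namely the bias-difference term $\sum_{s'\in\SSS}p_t(s'|s_t,a_t)\tilde{\gamma}_{\tau(m)}(s')-\tilde{\gamma}_{\tau(m)}(s_t)$, the EVI precision $1/\sqrt{\tau(m)}$, and the reward variation and radius contributions, are unaffected. Reassembling these pieces reproduces the displayed inequality exactly.

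I do not anticipate a substantive obstacle: the argument is essentially a routing of Proposition~\ref{prop:diameter} into the hypothesis of Proposition~\ref{prop:error}. The only point warranting care is the implicit conditioning structure, namely that the bounded-diameter witness produced by Proposition~\ref{prop:diameter} is guaranteed only on $\mathcal{E}_p$, which is why the statement conditions on both $\mathcal{E}_r$ and $\mathcal{E}_p$; one should also remember that Assumption~\ref{ass:alternative} is in force throughout Section~\ref{sec:alternative}, even though it is not restated in the proposition.
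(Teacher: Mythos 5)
Your proposal is correct and matches the paper's own (one-line) proof, which likewise obtains the result by rerunning Proposition~\ref{prop:error} with $D$ replaced by $\bar{D}$ and $\eta$ set to $0$, with Proposition~\ref{prop:diameter} supplying the bounded-diameter witness in $H_{p,\tau(m)}(0)$ on the event $\mathcal{E}_p$. Your remarks about the conditioning on $\mathcal{E}_p$ and the standing Assumption~\ref{ass:alternative} are apt but do not change the argument.
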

The proof is similar to that of Proposition \ref{prop:error} with $D_{\tau(m)}$ replaced by $\bar{D}$ and $\eta$ set to $0,$ respectively. We are now ready to state the dynamic regret bound of the \sw~when Assumption \ref{ass:alternative} holds.
\begin{theorem}\label{thm:main_alternative}
	Under Assumption \ref{ass:alternative} and assuming $S>1,$ the \sw~with window size $W,$ confidence widening parameter $\eta=0,$ and $\delta=T^{-1}$ satisfies the dynamic regret bound
	\begin{equation*}
	\text{Dyn-Reg}_T(\texttt{SWUCRL2-CW}) = \tilde{O}\left(B_r W + \bar{D}\left[ B_p W + \frac{S\sqrt{A}T}{\sqrt{W}} + \frac{SAT}{W}  + \sqrt{T}\right]\right)
	\end{equation*}
	If we further put $W=W^*= S^{2/3}A^{1/2} T^{2/3}(B_r + B_p+1)^{-2/3},$ this dynamic regret bound is $\tilde{O}\left( \bar{D}(B_r + B_p+1)^{1/3} S^{2/3}A^{1/2}T^{2/3}\right).$
\end{theorem}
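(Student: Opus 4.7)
The plan is to mirror the proof of Theorem \ref{thm:main1} almost verbatim, but with two structural simplifications enabled by Assumption \ref{ass:alternative}: (i) the confidence widening parameter is set to $\eta=0$, which eliminates both the $B_pW/\eta$ additive error (no longer needed to cover drift in $p$) and the $T\eta$ exploration cost; and (ii) every invocation of ``maximum diameter'' $D_{\max}$ is replaced by the structural constant $\bar{D}=1/\zeta$, justified by Proposition \ref{prop:diameter}. Throughout, I condition on the intersection ${\cal E}_r\cap {\cal E}_p$, which holds with probability at least $1-\delta$ by Lemma \ref{lemma:estimation}; since $\delta=T^{-1}$ and rewards lie in $[0,1]$, the failure event contributes only an $O(1)$ additive term to the dynamic regret.

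First I would decompose $\text{Dyn-Reg}_T(\texttt{SWUCRL2-CW})=\sum_m\sum_{t=\tau(m)}^{\tau(m+1)-1}[\rho^*_t - r_t(s_t,a_t)]$ episode by episode, then apply Proposition \ref{prop:error_alternative} pointwise in $t$. This produces four summable pieces: (a) the telescoping bias terms $\sum_{s'} p_t(s'|s_t,a_t)\tilde{\gamma}_{\tau(m)}(s') - \tilde{\gamma}_{\tau(m)}(s_t)$; (b) the EVI precision error $1/\sqrt{\tau(m)}$; (c) the reward/transition variation terms $2\dev_{r,t}+4\bar{D}\dev_{p,t}$; and (d) the confidence-radius terms $2\rad_{r,\tau(m)}(s_t,a_t)+4\bar{D}\rad_{p,\tau(m)}(s_t,a_t)$. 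The bias sum (a) is handled via the standard Azuma/martingale trick for UCRL2-type analyses, using the fact that $\|\tilde{\gamma}_{\tau(m)}\|_{\infty}$ is controlled by the span of the optimistic MDP returned by EVI; here the key point is that Proposition \ref{prop:diameter} guarantees the existence of a $\bar{D}$-diameter $p\in H_{p,\tau(m)}(0)$, so EVI's bias span is $O(\bar{D})$, making (a) contribute $\tilde{O}\bigl(\bar{D}\sqrt{T}+\bar{D}\cdot|\text{episodes}|\bigr)$. Piece (b) telescopes to $O(\sqrt{T})$. For piece (c), each $B_{r,q}$ appears in at most $W$ of the $\dev_{r,t}$'s (and similarly for $B_{p,q}$), so $\sum_t\dev_{r,t}\leq W B_r$ and $\sum_t\dev_{p,t}\leq W B_p$, producing the $B_rW+\bar{D}B_pW$ contribution. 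For piece (d), the pigeon-hole argument from \citep{JakschOA10} bounds $\sum_t 1/\sqrt{N^+_t(s_t,a_t)}$ by $\tilde{O}(\sqrt{SAT^2/W})$, and the episode count (driven by the doubling rule together with the hard cap that every episode has length $\leq W$) is $\tilde{O}(SA\log(T/SA)+T/W)$, which yields the $\bar{D}S\sqrt{A}T/\sqrt{W}$ and $\bar{D}SAT/W$ contributions.

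The one point requiring a small conceptual check, rather than routine calculation, is that EVI is guaranteed to return a policy whose bias span scales with $\bar{D}$ \emph{even though the widening parameter is zero}. In Theorem \ref{thm:main1}, this property was engineered by choosing $\eta>0$ large enough to force a low-diameter MDP into the widened region; here the role of $\eta$ is played by Assumption \ref{ass:alternative}, which automatically places a $\bar{D}$-diameter MDP inside $H_{p,\tau(m)}(0)$ (Proposition \ref{prop:diameter}). Because EVI optimizes the optimistic gain and the bias span of its output is upper bounded by the diameter of any admissible MDP in the input confidence region (see Section 4 of \citep{JakschOA10}), the span bound $\|\tilde{\gamma}_{\tau(m)}\|_{\infty}\leq \bar{D}$ follows. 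This is really the only place where Assumption \ref{ass:alternative} enters; the rest is bookkeeping.

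Assembling (a)--(d) yields
\[
\text{Dyn-Reg}_T(\texttt{SWUCRL2-CW})=\tilde{O}\!\left(B_rW+\bar{D}\!\left[B_pW+\frac{S\sqrt{A}T}{\sqrt{W}}+\frac{SAT}{W}+\sqrt{T}\right]\right),
\]
matching the first claim. Finally, balancing the dominant growing-in-$W$ term $\bar{D}(B_r+B_p)W$ against the dominant shrinking-in-$W$ term $\bar{D}S\sqrt{A}T/\sqrt{W}$ (the $SAT/W$ term is lower order for the chosen $W$, since $W^{3/2}\asymp S^{1/2}\sqrt{A}T/(B_r+B_p+1)$ makes $SAT/W\lesssim S\sqrt{A}T/\sqrt{W}$) gives the tuning $W^*=S^{2/3}A^{1/2}T^{2/3}(B_r+B_p+1)^{-2/3}$, and substituting yields the parameter-tuned bound $\tilde{O}\bigl(\bar{D}(B_r+B_p+1)^{1/3}S^{2/3}A^{1/2}T^{2/3}\bigr)$, completing the proof.
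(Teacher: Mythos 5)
Your proposal is correct and follows essentially the same route as the paper, which itself omits the proof as ``similar to that of Theorem \ref{thm:main1}'': you correctly identify that Proposition \ref{prop:diameter} takes over the role of the widening parameter by placing a $\bar{D}$-diameter MDP in $H_{p,\tau(m)}(0)$ for every episode, so the $Q_T/\tilde{Q}_T$ bookkeeping (and hence the $B_pW/\eta$ and $T\eta$ terms) disappears and Lemmas \ref{lemma:episode_errror}--\ref{lemma:sto_error} apply with $D_{\max}$ replaced by $\bar{D}$. The only blemishes are cosmetic: the span bound from Lemma \ref{lemma:mc} is $2\bar{D}$ rather than $\bar{D}$, and your intermediate episode count should read $\tilde{O}(SAT/W)$ as in Lemma \ref{lemma:number_episode} (your final $\bar{D}SAT/W$ term is nonetheless the right one).
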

We omit the proof since it is similar to that of Theorem \ref{thm:main1}.
\subsection{An Application to Inventory Control}\label{sec:inv}
In this subsection, we first elaborate on Assumption \ref{ass:alternative} in the context of \emph{single non-perishable item inventory control problem with zero lead time, fixed cost, and lost sales} similar to \citep{YuanLS19}, and then demonstrate how to implement the \sw~for this problem. For each time step $t\in[T]$ of the inventory control problem (with some abuse of notations), the following sequence of events happens:
\begin{enumerate}
	\item The seller first observes her stock level $s_t$, and decides the quantity $a_t$ to order.
	\item If $a_t>0,$ a fixed cost $f$ and a $c$ per-unit ordering cost are incurred, and the order arrives instantaneously. The stock level then becomes $s_t+a_t.$
	\item The demand $X_t$ is realized, and the seller observes the censored demand $Y_t=\min\{X_t,s_t+a_t\}$. The DM faces non-stationary demands, in the sense that the demand distributions $X_1, \ldots, X_T$ at time steps $1, \ldots T$ are independent but not identically distributed.
	\item Unfulfilled demand incurs a $l$ per-unit lost sales cost, while excess inventory leads to a $h$ per-nit holding cost. The total cost for time step $t$ is
	\begin{align}
	C_t(s_t,a_t)=f\cdot\bm{1}[a_t>0]+c\cdot a_t+l\cdot[X_t-s_t-a_t]^++h\cdot[s_t+a_t-X_t]^+.
	\end{align}
	Due to demand censoring, the cost is not \emph{observable}.
\end{enumerate}  
The seller's objective is to minimize the cumulative total cost $\sum_{t=1}^TC_t(s_t,a_t).$ To map this into the non-stationary MDP model we described in Section \ref{sec:model}, we represent the level of stock at the beginning of each time step as the state. Same as \citep{YuanLS19} (and similar to \citep{HuhR09,ZhangCS18,AgrawalJ19}), we assume the DM has a limited shelf capacity, and she can hold at most $S$ units of inventory at any time. Consequently, $\SSS = \{0, \ldots, S\}$, and $\AAA_s = \{0, \ldots, S - s\}$ for each $s\in \SSS$. We also define the reward and state transition distributions for all $t\in[T],$ $s,s'\in\SSS,$ and $a\in\AAA_s$ as follows, $$R_t(s,a)=-C_t(s,a)\qquad\text{ and }\qquad p_t(s'|s,a)=\Pr\left(s+a-\min\{s+a,X_t\}=s'\right).$$  
However, it is worth emphasizing that, different than our setup in Section \ref{sec:model}, $R_t(s,a)$ is not observable as $C_t(s,a)$ is not observable. Nevertheless, we shall demonstrate in Section \ref{sec:inv_impl} that one could use the technique of pseudo-reward proposed in \citep{AgrawalJ19} to bypass this issue.

Following Assumption \ref{ass:alternative}, we make the \emph{strictly positive probability mass function (PMF) assumption} on $X_1, \ldots, X_T.$
\begin{assumption}[Strictly Positive PMF]\label{ass:pmf}
	 There is a $\zeta > 0$ such that $\Pr(X_t = s)\geq \zeta >0$ for all $t\in [T]$ and $s\in \{0, \ldots, S\}$.
\end{assumption} 
\begin{remark}
	It can be readily verified that if the demands satisfy the strictly positive PMF assumption, the underlying inventory control problem satisfies Assumption \ref{ass:alternative}. Indeed, the DM could transit from a state $s\in \SSS$ to another state $s'\in \SSS$ with probability at least $\zeta$ by ordering $S-s$ units of the item, since then $p_t\left(s'|s,S-s\right) = \Pr(X_t = S - s')\geq\zeta$. 
\end{remark}
\subsubsection{Comparisons to Existing Inventory Control Models}
We first compare our setting and existing ones on single non-perishable item inventory control problem with lost sales.

Similar to \citep{HuhR09,ZhangCS18,YuanLS19,AgrawalJ19}, the model presented in this section studies the single non-perishable item inventory control problem with lost sales. However, there are several key differences between ours and the existing works in terms of cost functions, demand distributions, and lead time:
		\begin{table}[!ht]
		\begin{center}
			\begin{tabular}{ | m{5cm} | m{4cm}|m{4.1cm} |  m{2cm}|} 
				\hline
				&Cost functions & Demand distributions & Lead time\\ 
				\hline
				\cite{HuhR09} &linear purchasing cost without fixed cost, linear lost sales and holding cost&  stationary, continuous or discrete  & zero \\ 
				\hline
				\cite{ZhangCS18,AgrawalJ19}&no purchasing cost, linear lost sales and holding cost& stationary, continuous or discrete&  positive \\ 
				\hline
				\cite{YuanLS19}& linear purchasing cost with fixed cost, linear lost sales and holding cost& stationary, continuous or discrete &zeros \\ 
				\hline
				Ours&linear purchasing cost with fixed cost, linear lost sales and holding cost&non-stationary, discrete, with strictly positive PMF&zero\\
				\hline
			\end{tabular}
			\caption{Comparisons between our inventory control model and existing works'}
			\label{table:inventory_model}
		\end{center}
	\end{table}
	\begin{itemize}
		\item\textbf{Cost Functions:}  In \citep{HuhR09}, the authors assume a linear purchasing cost function without fixed cost, linear lost sales and holding cost functions. In \citep{YuanLS19}, the authors additionally allow fixed cost. In \citep{ZhangCS18,AgrawalJ19}, the authors assume the lost sales cost function and the holding cost function are linear, and there is no purchasing cost. In our setting, our cost function is the same as that of \citep{YuanLS19}.  
		\item\textbf{Demand Distributions:} In \citep{HuhR09,ZhangCS18,YuanLS19,AgrawalJ19}, the authors assume stationary demand distributions, but they admit both continuous or discrete demand distributions. In contrast, we allow non-stationary demand distributions, but we impose that the demand distribution has to be discrete, and satisfies the strictly positive PMF assumption described above. 
		\item\textbf{Lead Time:} In \citep{ZhangCS18,AgrawalJ19}, the authors allow the lead time to be positive; while in \citep{HuhR09,YuanLS19} and our setting, we assume the lead time is zero.
	\end{itemize}
	A summary of the comparisons is provided in Table \ref{table:inventory_model}.
\subsubsection{Implementation of the \sw}\label{sec:inv_impl}
As pointed out in Section \ref{sec:inv}, different than the model we present in Section \ref{sec:model}, the reward in each time step $t$ is not directly observable due to the censored demand. Nevertheless, we can follow the pseudo-reward technique proposed in \citep{AgrawalJ19} to implement the \sw~on a sequence of suitably designed pseudo-reward distributions. 

In particular, we define the pseudo-reward following \citep{AgrawalJ19} for each time step $t\in[T],$ every state $s,$ and every action $a\in\AAA_s$ as
\begin{align*}
	R^{\pse}_t(s,a):=R_t(s,a)+l\cdot X_t=-f\cdot\bm{1}[a>0]-c\cdot a_t-h\cdot[s+a-Y_t]^++l\cdot Y_t,
\end{align*}
where we recall $Y_t=\min\{s+a,X_t\}$ is the censored demand. We note that the pseudo-reward is perfectly \emph{observable}. We also define the mean pseudo-reward or each time step $t\in[T],$ every state $s,$ and every action $a\in\AAA_s$ as
\begin{align}\label{eq:pse1}
	r_t^{\pse}(s,a):=\E\left[R^{\pse}_t(s,a)\right]=\E\left[R_t(s,a)+l\cdot X_t\right]=r_t(s,a)+l\cdot\E[X_t].
\end{align}
This indicates regardless of state and action, the mean pseudo-reward of a time step $t$ can be obtained from shifting the corresponding mean reward uniformly by $l\cdot\E[X_t].$ Without loss of generality, we assume for all $t\in[T],$ $s\in\SSS,$ and $a\in\AAA_s,$ the mean pseudo-reward is bounded, \ie, $r^{\pse}_t(s,a)\in[0,1],$ and the pseudo-reward $R^{\pse}_t(s,a)$ is 1-sub-Gaussian with mean $r_t^{\pse}(s,a).$ Defining $\rho^{*\pse}_t$ as the optimal long-term average reward of the stationary MDP with state transition distribution $p_t$ and mean reward $r^{\pse}_t=\{r^{\pse}_t(s,a)\}_{s\in\SSS,a\in\AAA_s},$ we can show that for any policy $\Pi,$ the dynamic regret of the non-stationary MDP instance specified by the tuple $\mathcal{M}=(\SSS,\AAA,T,r,p)$ and the dynamic regret of the non-stationary MDP instance specified by the tuple $\mathcal{M}^{\pse}=(\SSS,\AAA,T,r^{\pse}=\{r^{\pse}_t\}_{t=1}^T,p)$ are the same.
\begin{proposition}\label{prop:pse}
	For any policy $\Pi,$ we denote the sample path for following $\Pi$ on $\mathcal{M}$ as $\{s_t(\mathcal{M}), a_t(\mathcal{M})\}_{t=1}^T,$ and the sample path for following $\Pi$ on $\mathcal{M}^{\pse}$ as $\{s_t(\mathcal{M}^{\pse}), a_t(\mathcal{M}^{\pse})\}_{t=1}^T,$ we have
	$$\sum^T_{t = 1}\left\{\rho^*_t - \mathbb{E}[r_t(s_t(\mathcal{M}), a_t(\mathcal{M})) ] \right\}=\sum^T_{t = 1}\left\{\rho^{*\pse}_t - \mathbb{E}[r^{\pse}_t(s_t(\mathcal{M}^{\pse}), a_t(\mathcal{M}^{\pse})) ] \right\}.$$
\end{proposition}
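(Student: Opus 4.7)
The plan is to exploit the fact that the pseudo-reward is a uniform (state-action independent) additive shift of the true reward at each time step, combined with the observation that $\mathcal{M}$ and $\mathcal{M}^{\pse}$ share identical state transition distributions. I would execute this in three short steps.

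First, I would argue that the sample paths generated by any fixed non-anticipatory policy $\Pi$ have identical joint distributions on $\mathcal{M}$ and $\mathcal{M}^{\pse}$. Since $\Pi$ selects $a_t$ as a (possibly randomized) function of $s_t$ and the history $\H_{t-1}$, and since the state transitions in both MDPs are governed by the same kernel $p = \{p_t\}_{t=1}^T$, an inductive coupling argument on $t$ shows $(s_t(\mathcal{M}), a_t(\mathcal{M})) \stackrel{d}{=} (s_t(\mathcal{M}^{\pse}), a_t(\mathcal{M}^{\pse}))$ for every $t$. In particular,
\[
\mathbb{E}[r_t(s_t(\mathcal{M}), a_t(\mathcal{M}))] = \mathbb{E}[r_t(s_t(\mathcal{M}^{\pse}), a_t(\mathcal{M}^{\pse}))].
\]
Consequently, by \eqref{eq:pse1}, $\mathbb{E}[r^{\pse}_t(s_t(\mathcal{M}^{\pse}), a_t(\mathcal{M}^{\pse}))] = \mathbb{E}[r_t(s_t(\mathcal{M}), a_t(\mathcal{M}))] + l \cdot \mathbb{E}[X_t]$.

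Second, I would establish the identity $\rho^{*\pse}_t = \rho^*_t + l \cdot \mathbb{E}[X_t]$. Recall $\rho^*_t$ and $\rho^{*\pse}_t$ are the optimal long-term average rewards of the stationary MDPs $(\SSS, \AAA, p_t, r_t)$ and $(\SSS, \AAA, p_t, r^{\pse}_t)$ respectively. Because the two stationary MDPs share the same transition kernel $p_t$ and their mean rewards differ by the constant $l \cdot \mathbb{E}[X_t]$ (independent of state and action), every stationary policy $\pi$ has long-term average reward on the pseudo-MDP equal to its long-term average reward on the true MDP plus $l \cdot \mathbb{E}[X_t]$. Taking the supremum over $\pi$ on both sides preserves the shift, yielding $\rho^{*\pse}_t = \rho^*_t + l \cdot \mathbb{E}[X_t]$. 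This can be seen cleanly from the LP characterization \eqref{eq:primal} referenced in Section \ref{sec:mdp_lp}, where the objective shifts by the constant while the feasible region is unchanged.

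Third, I would combine the two identities term by term:
\[
\rho^{*\pse}_t - \mathbb{E}[r^{\pse}_t(s_t(\mathcal{M}^{\pse}), a_t(\mathcal{M}^{\pse}))] = \bigl(\rho^*_t + l\cdot\mathbb{E}[X_t]\bigr) - \bigl(\mathbb{E}[r_t(s_t(\mathcal{M}), a_t(\mathcal{M}))] + l\cdot\mathbb{E}[X_t]\bigr),
\]
so the $l\cdot\mathbb{E}[X_t]$ terms cancel and summing over $t$ gives the claim. I do not anticipate a real obstacle here; the only subtlety is making the sample-path coupling rigorous for a general (possibly randomized, history-dependent) policy $\Pi$, which is handled by a standard induction on $t$ using a common source of randomness for action selection and demand realization.
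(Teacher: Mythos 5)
Your proposal is correct and follows essentially the same route as the paper: the paper likewise derives $\rho^{*\pse}_t-\rho^*_t=l\cdot\E[X_t]$ from the LP characterization $\textsf{P}(r_t,p_t)$ versus $\textsf{P}(r^{\pse}_t,p_t)$ (using that $\sum_{s,a}x(s,a)=1$), and handles the trajectories by conditioning on the demand realizations and arguing inductively that the two sample paths coincide, which is the same coupling you describe. The only cosmetic difference is that the paper works pathwise conditional on $X_1,\dots,X_T$ and then takes expectations, whereas you state the coupling at the level of distributions; both correctly address the subtlety that the policy's observations (censored demands) are identical across the two instances.
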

The proof of Proposition \ref{prop:pse} is provided in Section \ref{sec:prop:pse} in the appendix. Together with Theorem \ref{thm:main_alternative}, we have the following dynamic regret bound guarantee for the \sw~on the the single non-perishable item inventory control problem with zero lead time, fixed cost, and lost sales.
\begin{theorem}\label{thm:main_inventory}
	For the inventory control model in Section \ref{sec:inv}, under Assumption \ref{ass:pmf} and assuming $S>1,$ the \sw~with window size $W,$ confidence widening parameter $\eta=0,$ and $\delta=T^{-1}$ satisfies the dynamic regret bound
	\begin{equation*}
		\text{Dyn-Reg}_T(\texttt{SWUCRL2-CW}) = \tilde{O}\left(B_r W + \bar{D}\left[ B_p W + \frac{S^{\frac{3}{2}}T}{\sqrt{W}} + \frac{S^2T}{W}  + \sqrt{T}\right]\right)
	\end{equation*}
	If we further put $W=W^*= ST^{2/3}(B_r + B_p+1)^{-2/3},$ this dynamic regret bound is $\tilde{O}\left( \bar{D}(B_r + B_p+1)^{1/3} ST^{2/3}\right).$
\end{theorem}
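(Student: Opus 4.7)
The plan is to reduce Theorem \ref{thm:main_inventory} to Theorem \ref{thm:main_alternative} via the pseudo-reward equivalence. By Proposition \ref{prop:pse}, for any non-anticipatory policy $\Pi$, the dynamic regret of $\Pi$ on the (partially observed) inventory MDP $\mathcal{M}=(\SSS,\AAA,T,r,p)$ equals its dynamic regret on the fully observable pseudo-reward MDP $\mathcal{M}^{\pse}=(\SSS,\AAA,T,r^{\pse},p)$. Since the pseudo-rewards are observable, the \sw~can be run on $\mathcal{M}^{\pse}$ exactly as described in Section \ref{sec:inv_impl}, so it suffices to upper bound its dynamic regret on $\mathcal{M}^{\pse}$ and then translate the result back into the variation budgets of $\mathcal{M}$.

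Verifying the hypotheses of Theorem \ref{thm:main_alternative} on $\mathcal{M}^{\pse}$ requires two specializations. First, Assumption \ref{ass:pmf} implies Assumption \ref{ass:alternative} with $\bar{D}=1/\zeta$: for any $s,s'\in\SSS=\{0,\ldots,S\}$, the action $a_{(s,s')}=S-s\in\AAA_s$ yields $p_t(s'|s,S-s)=\Pr(X_t=S-s')\geq\zeta$ uniformly in $t$. Second, the effective action-space size is $A=\sum_{s\in\SSS}|\AAA_s|/|\SSS|=\Theta(S)$ because $|\AAA_s|=S-s+1$. Substituting $A=\Theta(S)$ into the bound of Theorem \ref{thm:main_alternative} produces precisely the $\bar{D}S^{3/2}T/\sqrt{W}$ and $\bar{D}S^2T/W$ terms that appear in the statement.

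The subtlest step is the translation between the variation budget $B_{r^{\pse}}$ actually seen by the algorithm and the budget $B_r$ appearing in the theorem. From eqn. (\ref{eq:pse1}) and the triangle inequality, $B_{r^{\pse},t}\leq B_{r,t}+l\,|\E[X_{t+1}]-\E[X_t]|$, and since $X_t\in\{0,\ldots,S\}$ one has $|\E[X_{t+1}]-\E[X_t]|\leq S\,\|P_{X_{t+1}}-P_{X_t}\|_1\leq S\cdot B_{p,t}$ (take $s=0$, $a=S$ in \eqref{eq:variation_budgets} to match the demand distribution's total variation). Summing over $t$ gives $B_{r^{\pse}}=O(B_r+SB_p)$. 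Because Assumption \ref{ass:pmf} forces $\zeta\leq 1/(S+1)$ and hence $\bar{D}\geq S+1$, the extra $SB_pW$ contribution is dominated by $\bar{D}B_pW$ and is absorbed into the existing term, yielding the first displayed bound of Theorem \ref{thm:main_inventory}. Finally, the choice $W^*=ST^{2/3}(B_r+B_p+1)^{-2/3}$ balances $(B_r+B_p+1)W$ against $\bar{D}S^{3/2}T/\sqrt{W}$; under the mild regime $B_r+B_p+1\leq T$ the residual $\bar{D}S^2T/W$ and $\bar{D}\sqrt{T}$ terms are lower order, giving the claimed $\tilde{O}(\bar{D}(B_r+B_p+1)^{1/3}ST^{2/3})$ rate. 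I expect the variation-budget translation to be the main obstacle; once it is cleanly handled, the rest is a direct specialization of Theorem \ref{thm:main_alternative}.
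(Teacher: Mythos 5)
Your proposal is correct and follows essentially the same route as the paper, which offers no separate proof beyond combining Proposition \ref{prop:pse} with Theorem \ref{thm:main_alternative} under the substitutions $A=\Theta(S)$ and $\bar{D}=1/\zeta$. Your additional step relating $B_{r^{\pse}}$ to $B_r+SB_p$ and absorbing the $SB_pW$ contribution into $\bar{D}B_pW$ via $\bar{D}\geq S+1$ fills in a translation the paper leaves implicit, and it is sound provided the demand is supported on $\{0,\ldots,S\}$ so that the map $x\mapsto S-\min\{S,x\}$ is a bijection and $\|P_{X_{t+1}}-P_{X_t}\|_1$ is genuinely bounded by $B_{p,t}$ --- a condition consistent with the paper's normalization assumptions on the pseudo-rewards.
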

\begin{remark}
To interpret the dynamic regret bound of the \sw~in the context of inventory control, we note that in Theorem \ref{thm:main_inventory}, we normalize the cost functions so that the cost incurs in each time period is in $[0, 1].$ This is slightly different than the setups in \citep{HuhR09,ZhangCS18,YuanLS19,AgrawalJ19}, where the upper bound of the cost functions are of order $O(S).$ 
\end{remark}
\section{Numerical Experiments}\label{sec:numerical}
As a complement to our theoretical results, we conduct numerical experiments on synthetic datasets to compare the dynamic regret performances of our algorithms with the UCRL2 algorithm \citep{JakschOA10}, which is one of the most widely used benchmarks for RL in MDPs due to its nearly-optimal regret bound in stationary environments \citep{WeiJLSJ19}, and also the restarting UCRL2 (denoted as UCRL2.S) algorithm for RL in piecewise-stationary MDPs \citep{JakschOA10}

\noindent\textbf{Setup:} We consider a MDP with 2 states $\{s_1,s_2\}$ and 2 actions $\{a_1,a_2\}$, and set $T=5000.$ The rewards are deterministically set to
\begin{align*}
&r_t(s_1,a_1)=0.2+3\cos\left({5V_r\pi t}/{T}\right),\quad r_t(s_1,a_2)=0.2+\cos\left({5V_r\pi t}/{T}\right),\\
&r_t(s_2,a_1)=0.2-\cos\left({5V_r\pi t}/{T}\right),\quad r_t(s_2,a_2)=0.2-3\cos\left({5V_r\pi t}/{T}\right).
\end{align*}
The total variations in mean rewards is thus $B_r=15V_r=\Theta(V_r).$ An illustration of the reward process of state $s_2$ and action $a_2$ is provided in Fig. \ref{fig:sim_reward} (the mean rewards of other (state,action) pairs are similar).
\begin{figure}[!ht]
	\centering
	\includegraphics[width=8cm,height=6cm]{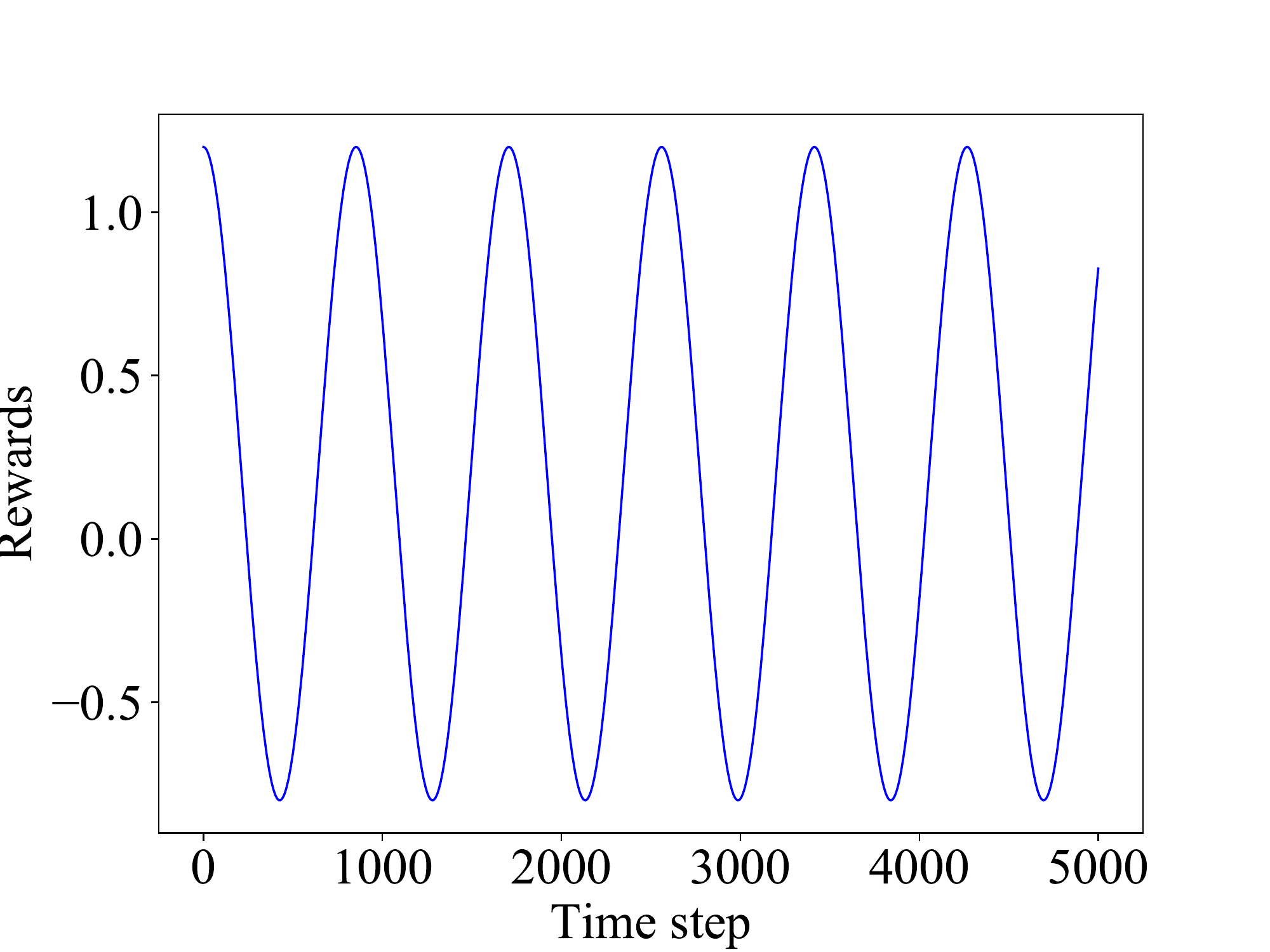}
	\caption{Illustrations of mean rewards $r_t(s_2,a_2)$ (the mean rewards of other state-action pairs are similar)}
	\label{fig:sim_reward}
\end{figure}
The state transition distributions are set to
\begin{align*}
&p_t(s_1|s_1,a_1)=1, \quad p_t(s_2|s_1,a_1)=0, \quad p_t(s_1|s_1,a_2)=1-\beta_t,\quad p_t(s_1|s_1,a_2)=\beta_t,\\
&p_t(s_1|s_2,a_1)=0, \quad p_t(s_2|s_2,a_1)=1, \quad p_t(s_1|s_2,a_2)=\beta_t,\quad p_t(s_1|s_2,a_2)=1-\beta_t.
\end{align*}
where $\beta_t$ is governed by the following process:
\begin{align*}
\beta_t=0.5+0.3\sin\left({5V_p\pi t}/{T}\right).
\end{align*}
The total variations in the state transition distributions is thus $B_p=12V_p=\Theta(V_p).$ In this simulation, we allow both $V_r$ and $V_p$ to take values from $\{T^{0.2},T^{0.5}\}$ to evaluate the performances of the algorithms in low and high variations scenarios. Here, we assume the \sw~knows the variation budgets, and the UCRL2.S algorithm restarts the UCRL2 algorithm every $\lfloor T^{2/3}\rfloor$ time steps. All the results are averaged over 50 runs.
\begin{figure}[!ht]
	\subfigure[$B_p=\Theta(T^{0.2}),\quad B_r=\Theta(T^{0.2})$]{	\label{fig:plrl}\includegraphics[width=8.8cm,height=6.2cm]{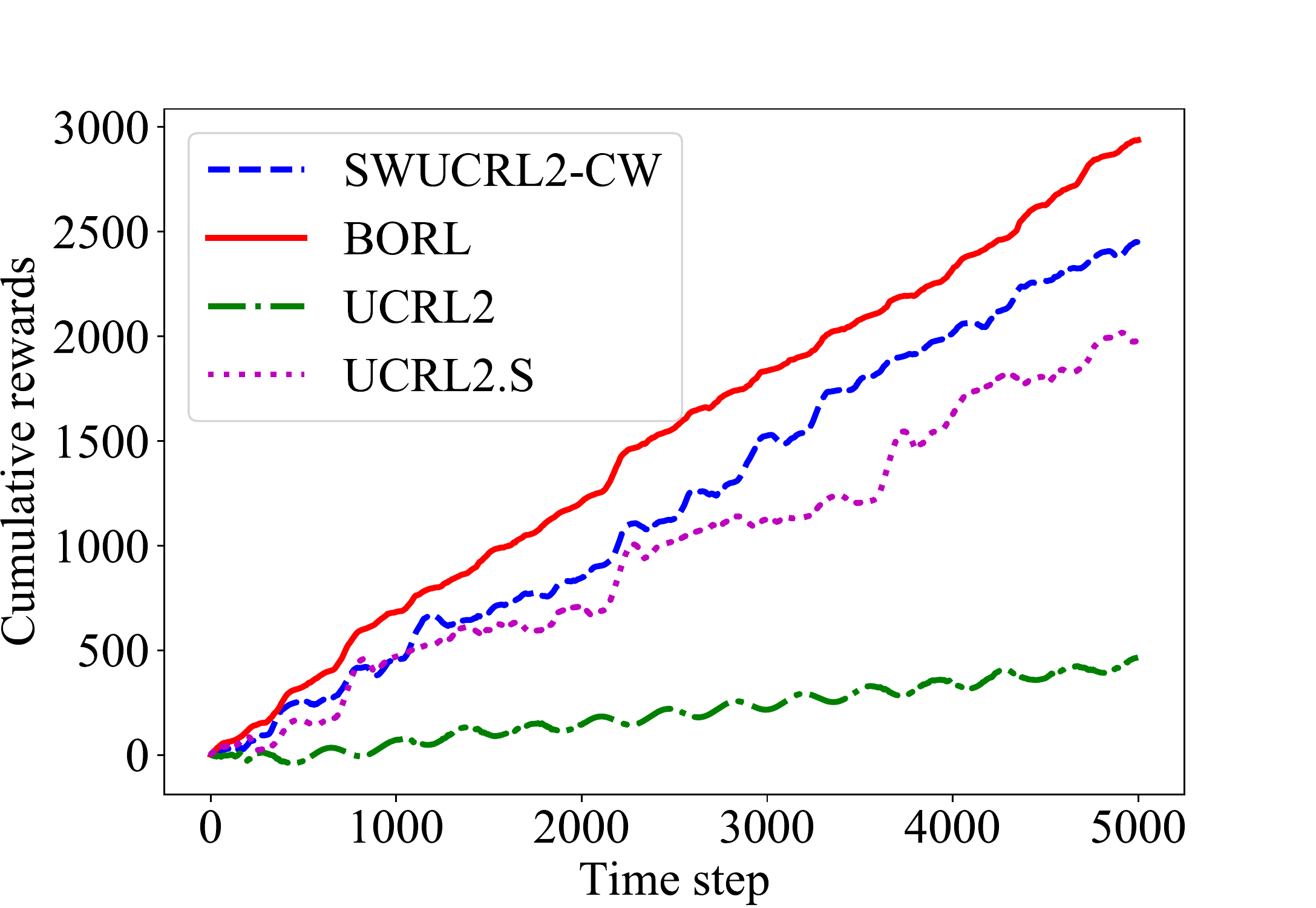}}
	\subfigure[$B_p=\Theta(T^{0.2}),\quad B_r=\Theta(T^{0.5})$]{	\label{fig:plrh}\includegraphics[width=8.8cm,height=6.2cm]{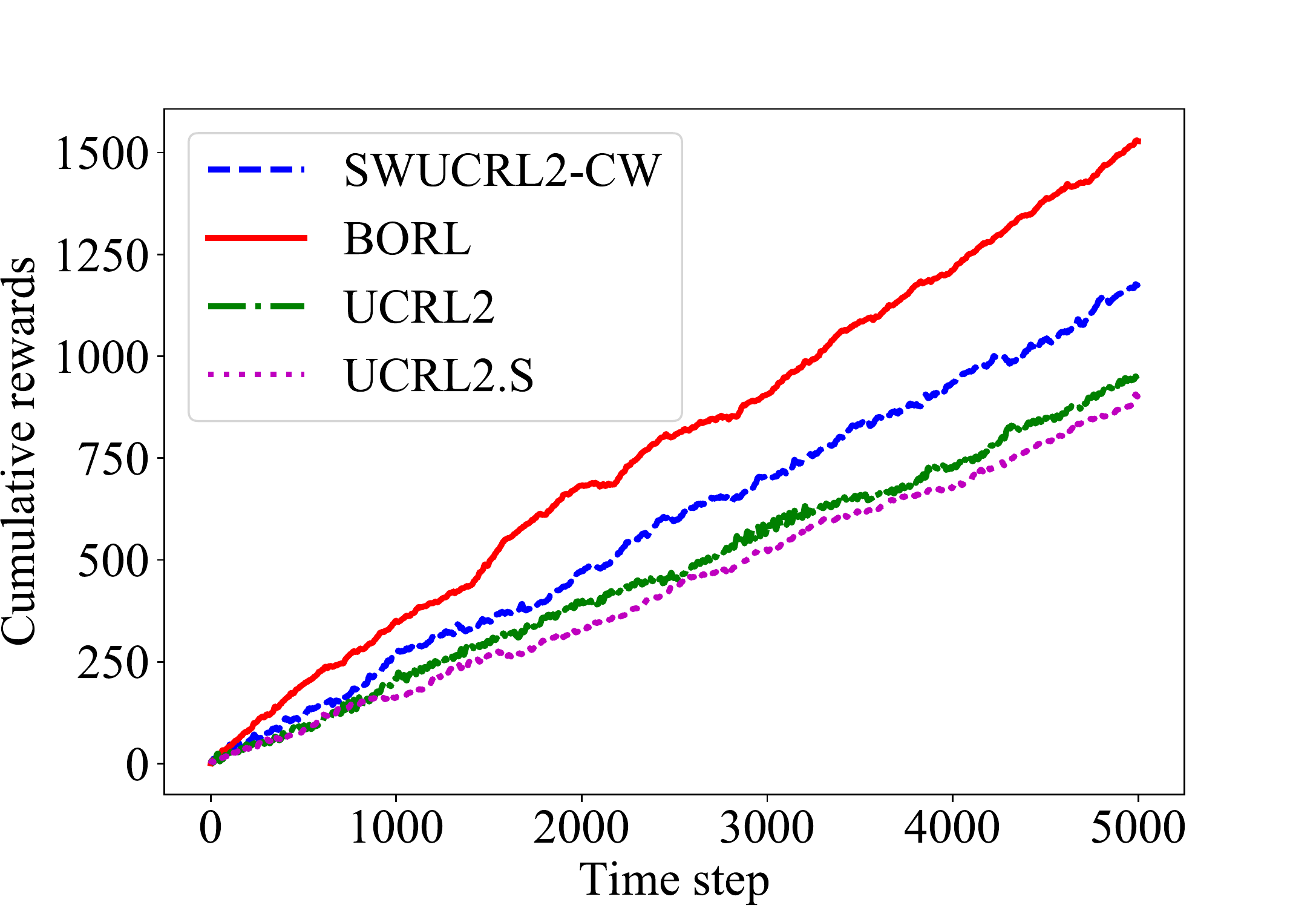}}
	\subfigure[$B_p=\Theta(T^{0.5}),\quad B_r=\Theta(T^{0.2})$]{	\label{fig:phrl}\includegraphics[width=8.8cm,height=6.2cm]{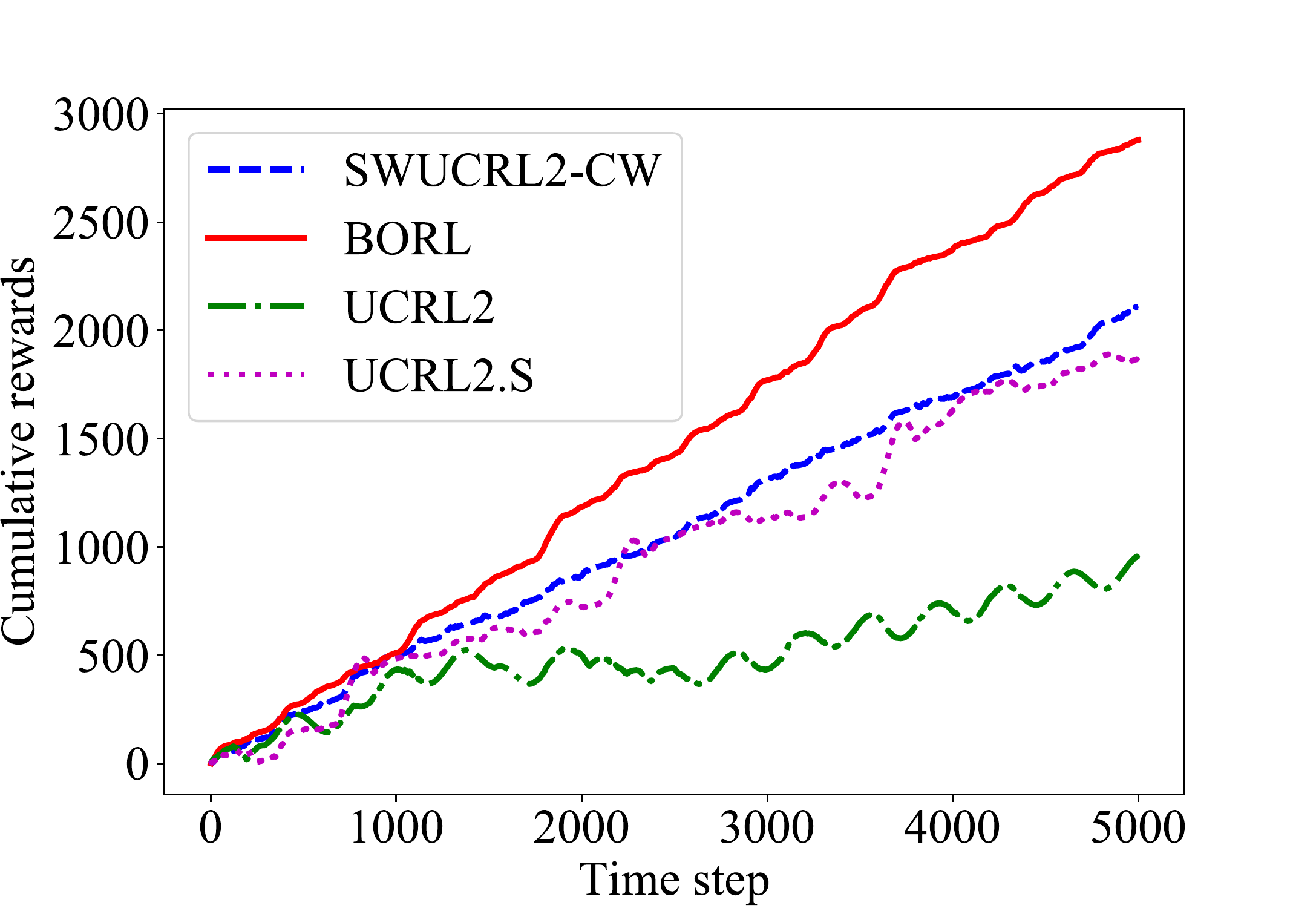}}
	\subfigure[$B_p=\Theta(T^{0.5}),\quad B_r=\Theta(T^{0.5})$]{	\label{fig:phrh}\includegraphics[width=8.8cm,height=6.2cm]{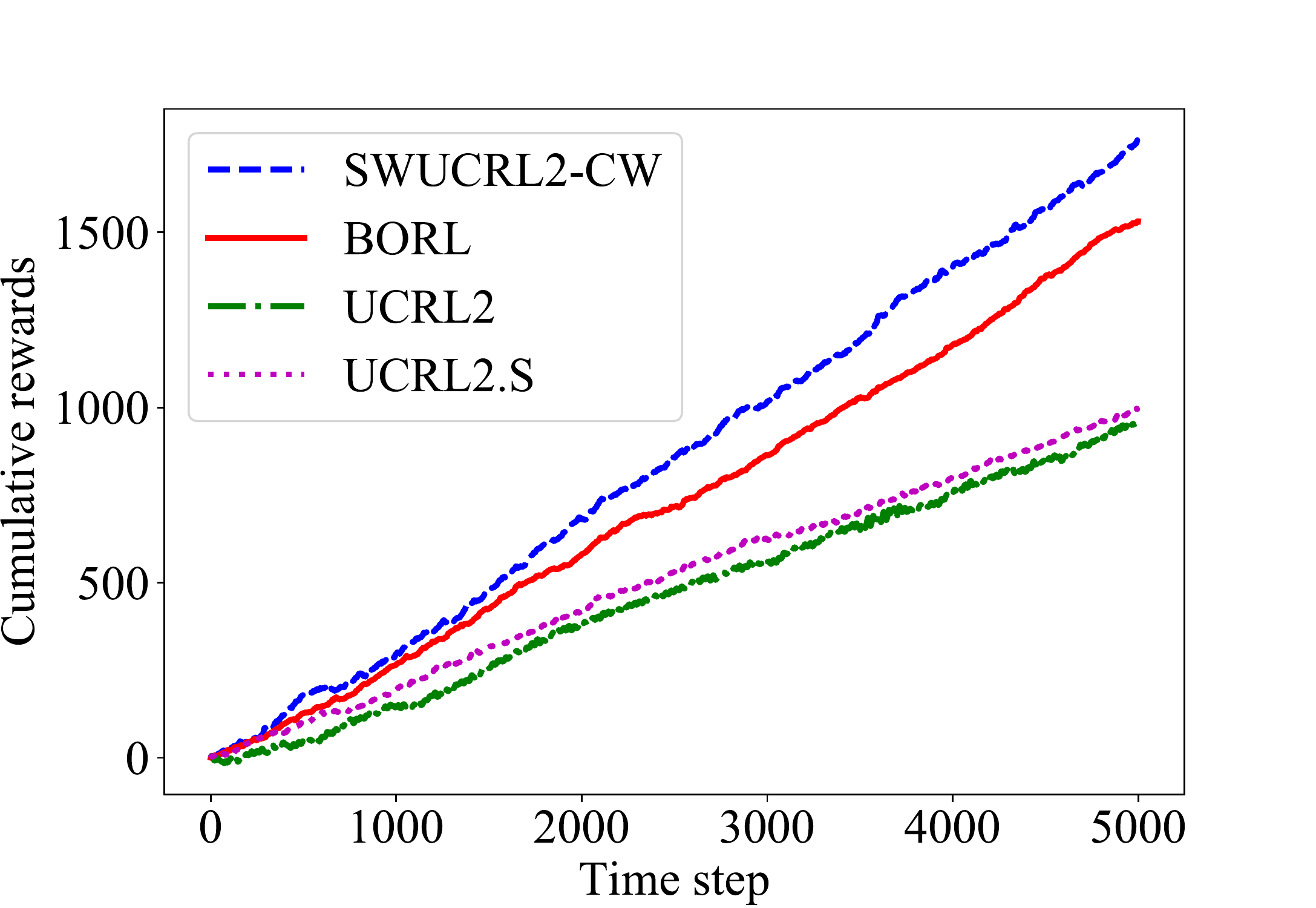}}
	\caption{Cumulative rewards of the algorithms}
	\label{fig:sim_result}
\end{figure}

\noindent\textbf{Results:} The cumulative rewards of the algorithms under various variation budgets are shown in Fig. \ref{fig:sim_result}. The results show that both the \sw~and the \borl~are able to collect at least 20\% more rewards then the UCRL2 algorithm and the UCRL2.S algorithm except for the case when $B_p=\Theta(T^{0.5})$ and $B_r=\Theta(T^{0.2}),$ the percentage improvement is $12\%.$ Comparing the results in Figs. \ref{fig:plrl}, \ref{fig:plrh}, and \ref{fig:phrl}, we can see that both the \sw~and the \borl~are more robust to variations in the state transition distributions than that in reward distributions. This demonstrate the power of our confidence widening technique. Interestingly, we can see that in Figs. \ref{fig:plrl}, \ref{fig:plrh}, and \ref{fig:phrl}, the cumulative rewards of the \borl~(does not know the variation budgets) are higher than those of the \sw~(knows the variation budgets). This indeed has no contradiction to our theoretical results. Theorems \ref{thm:main1} and \ref{thm:borl} state that the \sw~and the \borl~enjoy the same (in the sense of $\tilde{O}(\cdot)$) worst case dynamic regret bound. Nevertheless, the environments we construct in Fig. \ref{fig:sim_reward} are not the worst case scenario, and the results indicate that the adaptive master algorithm (\ie, the EXP3.P algorithm) of the \borl~is able to leverage this more benign environment to attain higher rewards.

\section{Conclusion}\label{sec:conclusion}
In this paper, we study the problem of un-discounted reinforcement learning in a gradually changing environment. In this setting, the parameters, \ie, the reward and state transition distributions, can be different from time to time as long as the total changes are bounded by some variation budgets, respectively. We first incorporate the sliding window estimator and the novel confidence widening technique into the UCRL2 algorithm to propose a \sw~with low dynamic regret when the variation budgets are known. We then design a parameter-free \borl~that allows us to enjoy the same dynamic regret bound as the \sw~without knowing the variation budgets. The main ingredient of the proposed algorithms is the novel confidence widening technique, which injects extra optimism into the design of learning algorithms, and thus ensure low dynamic regret bounds. This is in contrast to the widely held believe that optimistic exploration algorithms for (stationary and non-stationary) stochastic online learning settings should employ the lowest possible level of optimism. To extend this finding, we also use the problem of single-item inventory control with fixed cost as an example to demonstrate how one can leverage special structures in the state transition distributions to attain low dynamic regret bound without widening the confidence region.
\ACKNOWLEDGMENT{The authors would like to express sincere gratitude to Dylan Foster, Negin Golrezaei, and Mengdi Wang, as well as various seminar attendees for helpful discussions and comments.}


\bibliographystyle{ormsv080}
\bibliography{nonstatmdp_ref}
\newpage
\begin{APPENDIX}{Supplementary}
	\section{Supplementary Details about MDPs}
\label{sec:mdp_sup}
\subsection{Linear Program Formulations}\label{sec:mdp_lp}
The optimal long term reward $\rho^*_t$ is equal to the optimal value of the linear program $\textsf{P}(r_t, p_t)$ \citep{Puterman94}. For a reward vector $r$ and a transition distribution $p$, we define 
\begin{align}
\label{eq:primal}
\textsf{P}(r, p): \max    & \sum_{s\in \SSS, a\in \AAA_s}r(s, a) x(s, a) \\
\text{s.t. }   &\sum_{a \in \AAA_s} x(s, a) = \sum_{s'\in\SSS, a'\in \AAA_{s'}} p(s | s',a')x(s', a')      &\quad &\forall s \in \SSS \nonumber\\
&\sum_{s\in \SSS, a\in \AAA_s}x(s, a) = 1       &\quad & \nonumber\\
&x(s, a)\geq 0      &\quad &\forall s\in \SSS, a\in \AAA_s \nonumber
\end{align}
Throughout our analysis, it is useful to consider the following dual formulation $\textsf{D}(r, p)$ of the optimization problem $\textsf{P}(r, p)$:
\begin{align}
\label{eq:dual}
\textsf{D}(r, p): \min   &\quad\rho  \\
\text{s.t.}   &\quad\rho + \gamma(s) \geq r (s, a) + \sum_{s'\in\SSS} p (s' | s,a) \gamma(s')      &\forall s \in \SSS, a\in \AAA_s \nonumber\\
&\quad\phi, \gamma(s)\text{ free}     & \forall s\in \SSS.\quad~\qquad \nonumber
\end{align}
The following Lemma shows that  any feasible solution to $\textsf{D}(r, p)$ is essentially bounded if the underlying MDP is communicating, which will be crucial in the subsequent analysis.
\begin{lemma}\label{lemma:mc}
	Let $(\rho,\gamma)$ be a feasible solution to the dual problem $\textsf{D}(r, p)$, where $(\SSS, \AAA, p)$ consititute a communicating MDP with diameter $D$. We have $$\max_{s, s'\in \mathcal{S}} \left\{ \gamma(s) - \gamma(s') \right\} \leq 2 D.$$ 
\end{lemma}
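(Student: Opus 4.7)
The plan is to leverage the communicating structure of $(\SSS, \AAA, p)$ by iterating the dual feasibility constraint along the random trajectory of a stationary policy that realizes the diameter bound. Fix $s, s' \in \SSS$ attaining the maximum on the left-hand side. By Definition \ref{def:diameter}, there exists a stationary policy $\pi$ whose expected hitting time from $s'$ to $s$ is at most $D$, and I let $\tau := \min\{t \geq 0 : s_t = s\}$ under the Markov chain $(s_0 = s', s_1, \ldots)$ induced by $\pi$.

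Rewriting feasibility at $(u, \pi(u))$ as $\mathbb{E}[\gamma(s_{t+1}) \mid s_t = u] - \gamma(u) \leq \rho - r(u, \pi(u))$ and telescoping in expectation through the optional stopping theorem (which applies because $\mathbb{E}[\tau] \leq D < \infty$ and $\gamma$ is bounded on a finite state space) should yield
\[
\gamma(s) - \gamma(s') \;=\; \mathbb{E}[\gamma(s_\tau) - \gamma(s_0)] \;\leq\; \rho\,\mathbb{E}[\tau] - \mathbb{E}\!\left[\sum_{t=0}^{\tau-1} r(s_t, \pi(s_t))\right] \;\leq\; \rho D,
\]
where the last inequality uses $r \in [0, 1]$ (so the accumulated reward term is nonnegative) and $\mathbb{E}[\tau] \leq D$. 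It then remains to argue $\rho \leq 2$ to close the factor of two in the claim. Since $r \in [0, 1]$, the optimal long-term average reward $\rho^{\ast}$ of the stationary MDP $(\SSS, \AAA, p, r)$ lies in $[0, 1]$; in the contexts where this lemma is invoked, $(\rho, \gamma)$ arises either as the dual optimum or as the $\epsilon$-optimal output of the EVI subroutine with $\epsilon \leq 1$, giving $\rho \leq \rho^{\ast} + \epsilon \leq 2$.

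The main technical obstacle I anticipate is the optional-stopping step that turns the one-step dual inequality into a $\tau$-step bound, which requires verifying uniform integrability of the stopped process $\{\gamma(s_{t \wedge \tau})\}_t$. This check is standard given the finite state space and $\mathbb{E}[\tau] < \infty$, but it is the only substantive analytic point; dropping the nonnegative accumulated reward, substituting $\mathbb{E}[\tau] \leq D$, and using the bound on $\rho$ are purely arithmetic manipulations.
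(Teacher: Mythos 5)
Your core derivation is correct and takes a genuinely different route from the paper, which does not actually prove the lemma but only cites Section 4.3.1 of \citep{JakschOA10}; that cited argument bounds the span of the extended value iteration iterates $u_i$ by interpreting $u_i(s)$ as an optimal $i$-step cumulative reward, whereas you work directly with an arbitrary feasible dual pair $(\rho,\gamma)$ and run a supermartingale/optional-stopping argument along a diameter-achieving policy. The optional-stopping step is unproblematic (finite state space, bounded increments, $\mathbb{E}[\tau]\leq D<\infty$), and it cleanly yields $\gamma(s)-\gamma(s')\leq \rho\,\mathbb{E}[\tau]-\mathbb{E}[\sum_{t<\tau}r(s_t,\pi(s_t))]\leq \rho D$, using $r\geq 0$, $\rho\geq 0$, and $\mathbb{E}[\tau]\leq D$. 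What your route buys is that it isolates exactly what dual feasibility alone provides, namely the bound $\rho D$ rather than $2D$.

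The step $\rho\leq 2$ is the substantive issue, and you are right to flag it: it is \emph{not} implied by feasibility, and in fact the lemma as stated is false for arbitrary feasible solutions. Take $\SSS=\{1,2\}$ with a single action that deterministically swaps the two states (so $D=1$) and $r\equiv 1/2$; then $\gamma=(0,C)$ with $\rho=C+1/2$ is feasible for every $C>0$, so the span of $\gamma$ is unbounded over the feasible set. Your patch---verifying $\rho\leq 2$ in every context where the lemma is invoked ($\rho=\rho^*\leq 1$ for optimal dual solutions, and $\tilde{\rho}\leq \tilde{\rho}^*+\epsilon\leq 2$ for the EVI output since the algorithm always uses $\epsilon=1/\sqrt{\tau(m)}\leq 1$)---is exactly the right repair, and it makes your argument more careful than the paper's own statement, which claims a generality (``feasible instead of optimal'') that the result does not actually possess. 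If you write this up, state the conclusion as $\max_{s,s'}\{\gamma(s)-\gamma(s')\}\leq \rho D$ for feasible $(\rho,\gamma)$ with $\rho\geq 0$, and deduce the $2D$ bound under the additional hypothesis $\rho\leq 2$; the rest of your argument then goes through verbatim and covers all uses of the lemma in the paper.
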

The Lemma is extracted from Section 4.3.1 of \citep{JakschOA10}, and it is more general than \citep{LS18}, which requires $(\rho, \gamma)$ to be optimal instead of just feasible.

\subsection{Proof of Proposition \ref{prop:benchmark}}\label{app:pfprofbenchmark}
We begin with invoking Lemma \ref{lemma:mc}, which guarantees that for each $t$ there is an optimal solution $(\rho^*_t, \gamma^*_t)$ of $\textsf{D}(r_t, p_t)$ that satisfies $0 \leq \gamma^*_t(s)\leq 2 D_\text{max}$ for all $s\in \SSS$. Recall for each $t$:
\begin{align}
B_{r, t} =\max_{s\in \SSS, a\in \AAA_s}\left| r_{t + 1}(s, a) - r_t(s, a)  \right|,\quad B_{p, t}=\max_{s\in \SSS, a\in \AAA_s }\left\| p_{t + 1}(\cdot | s, a) - p_t ( \cdot  |s, a)  \right\|_1.
\end{align}
Consider two time indexes $t \leq \tau$. We first claim the following two inequalities:
\begin{align}
\rho^*_\tau & \geq \rho^*_t - \sum^{\tau - 1}_{q = t}\left( B_{r, q} + 2 D_\text{max} B_{p, q}\right)\label{eq:benchmark_ineq_1}\\
\rho^*_t &\geq r_\tau (s_\tau, a_\tau)  + \left[ \sum_{s'\in\SSS} p_\tau (s' | s_\tau,a_\tau) \gamma^*_t(s') - \gamma^*_t(s_\tau) \right] - \sum^{\tau - 1}_{q = t}\left( B_{r, q} + 2 D_\text{max} B_{p, q}\right) \label{eq:benchmark_ineq_2}.
\end{align}
The proofs of inequalities (\ref{eq:benchmark_ineq_1}, \ref{eq:benchmark_ineq_2}) are deferred to the end. Now, combining (\ref{eq:benchmark_ineq_1}, \ref{eq:benchmark_ineq_2}) gives
\begin{equation}\label{eq:benchmark_key}
\rho^*_\tau \geq r_\tau (s_\tau, a_\tau)  + \left[ \sum_{s'\in\SSS} p_\tau (s' | s_\tau,a_\tau) \gamma^*_t(s') - \gamma^*_t(s_\tau) \right] - 2\sum^{\tau - 1}_{q = t}\left( B_{r, q} + 2 D_\text{max} B_{p, q}\right).
\end{equation}
Let positive integer $W\leq T$ be a window size, which is specified later. Summing (\ref{eq:benchmark_key}) over $\tau = t, \ldots, t + W - 1$ and taking expectation over $\{(s_\tau, a_\tau)\}^{t+W - 1}_{\tau = t}$ yield
\begin{align}
&\sum^{t-W + 1}_{\tau = t} \rho^*_\tau \geq \mathbb{E}\left[\sum^{t-W + 1}_{\tau = t}  r_\tau (s_\tau, a_\tau) \right] + \mathbb{E}\left[\sum^{t-W }_{\tau = t} p_\tau (s' | s_\tau,a_\tau) \gamma^*_t(s') - \gamma^*_t(s_{\tau+1}) \right] \label{eq:benchmark_step1}\\
+ & \mathbb{E}\left[\sum_{s'\in\SSS} p_{t - W + 1} (s' | s_{t-W + 1},a_{t- W+1}) \gamma^*_t(s')  - \gamma^*_t(s_t)\right] - 2\sum^{t-W + 1}_{\tau = t}\sum^{\tau - 1}_{q = t}\left( B_{r, q} + 2 D_\text{max} B_{p, q}\right)\label{eq:benchmark_step2}\\
&\geq \mathbb{E}\left[\sum^{t-W + 1}_{\tau = t}  r_\tau (s_\tau, a_\tau) \right] -  2 D_\text{max} - 2W\sum^{t + W - 1}_{q = t}\left( B_{r, q} + 2 D_\text{max} B_{p, q}\right)\label{eq:benchmark_key2}.
\end{align}
To arrive at (\ref{eq:benchmark_key2}), note that the second expectation in (\ref{eq:benchmark_step1}), which is a telescoping sum, is equal to 0, since $s_{\tau + 1}$ is distributed as $p( \cdot | s_\tau, a_\tau)$. In addition, we trivially lower bound the first expectation in (\ref{eq:benchmark_step2}) by $-2 D_\text{max}$ by applying Lemma \ref{lemma:mc}. Next, consider partitioning the horizon of $T$ steps into intervals of $W$ time steps, where last interval could have less than $W$ time steps. That is, the first interval is $\{1, \ldots, W\}$, the second is $\{W+1, \ldots, 2W\}$, and so on. Applying the bound (\ref{eq:benchmark_key2}) on each interval and summing the resulting bounds together give
\begin{align}
\sum^T_{t=1} \rho^*_t &\geq \mathbb{E}\left[\sum^T_{t=1} r_t(s_t, a_t)\right] - 2 \lceil \frac{T}{W}\rceil D_\text{max} - 2W\sum^T_{t=1}(B_{r, t} + 2 D_\text{max}B_{p, t})\nonumber\\
&\geq \mathbb{E}\left[\sum^T_{t=1} r_t(s_t, a_t)\right] - \frac{4T D_\text{max}}{W} - 2W (B_r + 2 D_\text{max} B_p).
\end{align}
Choosing $W$ to be any integer in $[\sqrt{T/ (B_r + 2 D_\text{max}B_p)}, 2\sqrt{T/ (B_r + 2 D_\text{max}B_p)}]$ yields the desired inequality in the Theorem. Finally, we go back to proving inequalities (\ref{eq:benchmark_ineq_1},\ref{eq:benchmark_ineq_2}). These inequalities are clearly true when $t = \tau$, so we focus on the case $t < \tau$.

\textbf{Proving inequality (\ref{eq:benchmark_ineq_1}). }It suffices to show that the solution $(\rho^*_\tau + \sum^{\tau - 1}_{q = t}(B_{r, q} + 2 D_\text{max} B_{p, q}), \gamma^*_\tau)$ is feasible to the linear program \textsf{D}$(r_t, p_t)$. To see the feasibility, it suffices to check the constraint of \textsf{D}$(r_t, p_t)$ for each state-action pair $s, a$:
\begin{align*}
& \rho^*_\tau + \sum^{\tau - 1}_{q = t}(B_{r, q} + 2 D_\text{max} B_{p, q})\nonumber\\
\geq & \left[r_\tau(s, a) + \sum^{\tau - 1}_{q = t}B_{r, q}\right] + \left[ - \gamma^*_\tau(s)+ \sum_{s'\in \SSS} p_\tau(s' | s, a)\gamma^*_\tau(s') +\sum^{\tau - 1}_{q = t} 2 D_\text{max} B_{p, q}\right] \nonumber. 
\end{align*}
The feasibility is proved by noting that 
\begin{align}
\left|r_\tau(s, a) - r_t(s,a)\right|& \leq \sum^{\tau - 1}_{q= t} B_{r, q},\label{eq:benchmark_for_r}\\
\left|\sum_{s'\in \SSS} p_\tau(s' | s, a)\gamma^*_\tau(s') - \sum_{s'\in \SSS} p_t(s' | s, a)\gamma^*_\tau(s') \right| &\leq \left\| p_\tau(\cdot | s, a) - p_t(\cdot | s, a)\right\|_1 \left\| \gamma^*_\tau \right\|_\infty \nonumber\\
&\leq \sum^{\tau - 1}_{q = t} B_{p, q} (2 D_\text{max}).\label{eq:benchmark_for_p}
\end{align}

\textbf{Proving inequality (\ref{eq:benchmark_ineq_2}). }We have 
\begin{align}
\rho^*_t &\geq r_t (s_\tau, a_\tau) + \sum_{s'\in\SSS} p_t (s' | s_\tau,a_\tau) \gamma^*_t(s') - \gamma^*_t(s_\tau) \nonumber\\
&\geq r_\tau (s_\tau, a_\tau) + \sum_{s'\in\SSS} p_t (s' | s_\tau,a_\tau) \gamma^*_t(s') - \gamma^*_t(s_\tau)  - \sum^{\tau - 1}_{s = t} B_{r, s}  \label{eq:benchmark_by_r}\\
&\geq r_\tau (s_\tau, a_\tau)  + \sum_{s'\in\SSS} p_\tau (s' | s_\tau,a_\tau) \gamma^*_t(s') - \gamma^*_t(s_\tau) - \sum^{\tau - 1}_{s = t} B_{r, s} - 2D_{\text{max}} \sum^{\tau - 1}_{s = t} B_{p, s} \label{eq:benchmark_by_p},
\end{align} 
where steps (\ref{eq:benchmark_by_r}, \ref{eq:benchmark_by_p}) are by inequalities (\ref{eq:benchmark_for_r}, \ref{eq:benchmark_for_p}). Altogether, the Proposition is proved. $\blacksquare$

\subsection{Extended Value Iteration (EVI) by \citep{JakschOA10}}\label{app:evi}
\begin{algorithm}[!ht]
	\caption{{\sc EVI}$(H^r , H^p; \epsilon)$, mostly extracted from \citep{JakschOA10}}\label{alg:evi}
	\begin{algorithmic}[1]
		\State Initialize VI record $u_0\in \mathbb{R}^{\SSS}$ as $u_0(s) = 0$ for all $s\in \SSS$.
		\For{$i = 0, 1, \ldots$}
		\State For each $s\in \SSS$, compute VI record $u_{i+1}(s) = \max_{a\in \AAA_s}\tilde{\Upsilon}_i(s, a)$, where $$ \tilde{\Upsilon}_i(s, a) = \max_{\dot{r}(s, a)\in H^r(s, a)} \{\dot{r}(s, a)\} + \max_{\dot{p}\in H^p(s, a)}\left\{\sum_{s'\in \SSS} u_i(s')\dot{p}(s')\right\}.$$
		\State Define stationary policy $\tilde{\pi}:\SSS \rightarrow \AAA_s$ as $\tilde{\pi}(s) = \text{argmax}_{a\in \AAA_s}\tilde{\Upsilon}_i(s, a).$
		\State Define optimistic reward $\tilde{r} = \{\tilde{r}(s, a)\}_{s, a}$ with $\tilde{r}(s, a) \in \underset{\dot{r}(s, a)\in H^r(s, a)}{\text{argmax}} \{\dot{r}(s, a)\}$.
		\State Define optimistic distribution $\tilde{p} = \{\tilde{p}(\cdot | s, a)\}_{s, a}$ with $\tilde{p}(\cdot | s, a) \in \underset{\dot{p}\in H^p(s, a)}{\text{argmax}}\left\{\sum_{s'\in \SSS} u_i(s')\dot{p}(s')\right\}$.	
		\State Define optimistic dual variables $\tilde{\rho} = \max_{s\in \SSS}\left\{ u_{i+1}(s) - u_i(s) \right\}$, $\tilde{\gamma}(s) = u_i(s) - \min_{s\in \SSS}u_i(s)$.
		\If{$\max_{s\in \SSS}\left\{ u_{i+1}(s) - u_i(s) \right\} - \min_{s\in \SSS}\left\{ u_{i+1}(s) - u_i(s) \right\} \leq \epsilon$}\label{alg:evi_termination}
		\State Break the \textbf{for} loop.
		\EndIf
		\EndFor
		\State Return policy $\tilde{\pi}$.
		\State Auxiliary output: optimistic reward and state transition distributions $(\tilde{r}, \tilde{p})$, optimistic dual variables $(\tilde{\rho}, \tilde{\gamma})$.	
	\end{algorithmic}
\end{algorithm}
We provide the pseudo-codes of EVI$(H_r, H_p; \epsilon)$ proposed by \citep{JakschOA10} in Algorithm \ref{alg:evi}. By \citep{JakschOA10}, the algorithm converges in finite time when the confidence region $H_p$ contains a transition distribution $p$ such that $(\SSS, \AAA, p)$ constitutes a communicating MDP. The output $(\tilde{\pi}, \tilde{r}, \tilde{p}, \tilde{\rho}, \tilde{\gamma})$ of the EVI$(H_r, H_p; \epsilon)$ satisfies the following two properties \citep{JakschOA10}.
\begin{property}
	\label{property:1}
	The dual variables $(\tilde{\rho}, \tilde{\gamma})$ are optimistic, \ie, $$\tilde{\rho} + \tilde{\gamma}(s) \geq \max_{\dot{r}(s, a) \in H_r(s, a)} \{\dot{r}(s, a)\} + \sum_{s'\in \SSS} \tilde{\gamma}(s') \max_{\dot{p} \in H_p(s, a)} \{\dot{p}(s' | s, a) \}.$$
\end{property}
\begin{property}
	\label{property:2}
	For each state $s\in \SSS$, we have
	$$\tilde{r}(s, \tilde{\pi}(s)) \geq \tilde{\rho} + \tilde{\gamma}(s) - \sum_{s' \in \SSS}\tilde{p}(s' | s, \tilde{\pi}(s)) \tilde{\gamma}(s') - \epsilon.$$	
\end{property}

\textbf{Property 1} ensures the feasibility of the output dual variables $(\tilde{\rho}, \tilde{\gamma})$, with respect to the dual program $\textsf{D}(\dot{r}, \dot{p})$ for any $\dot{r}, \dot{p}$ in the confidence regions $H_{r}, H_p$. The feasibility facilitates the bounding of $\max_{s\in \SSS}\tilde{\gamma}(s)$, which turns out to be useful for bounding the regret arise from switching among different stationary policies. To illustrate, suppose that $H_p$ is so large that it contains a transition distribution $\dot{p}$ under which $(\SSS, \AAA, \dot{p})$ has diameter $D$. By Lemma \ref{lemma:mc}, we have $0\leq \max_{s\in\SSS}\tilde{\gamma}(s) \leq 2D$. 

\textbf{Property 2} ensures the near-optimality of the dual variables $(\tilde{\rho}, \tilde{\gamma})$ to the $(\tilde{r}, \tilde{p})$ optimistically chosen from $H_{r}, H_p$. More precisely, the deterministic policy $\tilde{\pi}$ near-optimal for the MDP with time homogeneous reward function $\tilde{r}$ and time homogeneous transition distribution $\tilde{p}$, under which the policy $\tilde{\pi}$ achieves a long term average reward is at least $\tilde{\rho}^* - \epsilon$. 

\section{Proof of Lemma \ref{lemma:estimation}}
\label{sec:lemma:estimation}
We employ the self-normalizing concentration inequallity \citep{AYPS11}. The following inequality is extracted from Theorem 1 in \citep{AYPS11}, restricted to the case when $d=1$.
\begin{proposition}[\citep{AYPS11}]\label{prop:self-norm}
	Let $\{{\cal F}_q\}^T_{q=1}$ be a filtration. Let $\{\xi_q\}^T_{q=1}$ be a real-valued stochastic process, such that $\xi_q$ is ${\cal F}_q$-measurable, and $\xi_q$ is conditionally $R$-sub-Gaussian, i.e. for all $\lambda \geq 0 $, it holds that $\mathbb{E}[\exp(\lambda \xi_q) | {\cal F}_{q-1}] \leq \exp(\lambda^2 R^2 / 2)$. Let $\{Y_q\}^T_{q=1}$ be a non-negative real-valued stochastic process such that $Y_q$ is ${\cal F}_{q-1}$-measurable. For any $\delta' \in (0, 1)$, it holds that
	\begin{equation*}
		\Pr\left(\frac{\sum^t_{q=1} \xi_q Y_q  }{\max\{1 ,\sum^t_{q=1} Y_q^2 \}} \leq 2 R \sqrt{\frac{\log (T/\delta')}{\max \{1,  \sum^t_{q=1}Y_q^2 \}}} \; \text{ for all $t\in [T]$}\right) \geq 1-\delta'.
	\end{equation*}
	In particular, if $\{Y_q\}^T_{q=1}$ be a $\{0, 1\}$-valued stochastic process, then for any $\delta' \in (0, 1)$, it holds that
	\begin{equation}\label{eq:norm_special}
		\Pr\left(\frac{\sum^t_{q=1} \xi_q Y_q  }{\max\{1 ,\sum^t_{q=1} Y_q \}} \leq 2 R \sqrt{\frac{\log (T/\delta')}{\max \{1,  \sum^t_{q=1}Y_q \}}} \; \text{ for all $t\in [T]$}\right) \geq 1-\delta'.
	\end{equation}
\end{proposition}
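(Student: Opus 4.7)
The plan is the classical argument for self-normalized concentration via exponential supermartingales, followed by a dyadic peeling step to handle the random quadratic variation $V_t := \sum_{q\le t}Y_q^2$ and a union bound over $t \in [T]$ that delivers the $\log(T/\delta')$ factor in the stated form.

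First step: For every deterministic $\lambda \in \mathbb{R}$, define
\begin{equation*}
M_t^\lambda \;=\; \exp\!\Bigl(\lambda S_t \,-\, \tfrac{1}{2}\lambda^2 R^2 V_t\Bigr), \qquad S_t := \sum_{q=1}^t \xi_q Y_q.
\end{equation*}
Because $Y_q$ is $\mathcal{F}_{q-1}$-measurable and $\xi_q \mid \mathcal{F}_{q-1}$ is $R$-sub-Gaussian, the product $\xi_q Y_q \mid \mathcal{F}_{q-1}$ is $R|Y_q|$-sub-Gaussian, so $\mathbb{E}[\exp(\lambda\xi_q Y_q) \mid \mathcal{F}_{q-1}] \le \exp(\tfrac{1}{2}\lambda^2 R^2 Y_q^2)$. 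This yields $\mathbb{E}[M_t^\lambda \mid \mathcal{F}_{t-1}] \le M_{t-1}^\lambda$, making $(M_t^\lambda)_{t \ge 0}$ a non-negative supermartingale with $\mathbb{E}[M_t^\lambda] \le 1$.

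Second step: For fixed $t$ and fixed $\lambda > 0$, Markov's inequality gives $\Pr(\lambda S_t - \tfrac{1}{2}\lambda^2 R^2 V_t \ge \log(1/\delta'')) \le \delta''$. The catch is that the variance-optimal choice $\lambda^\star = \sqrt{2\log(1/\delta'')/(R^2 V_t)}$ is a function of the random $V_t$ and therefore not admissible in the supermartingale inequality. I would resolve this by peeling: partition the range of $V_t$ into dyadic shells $\mathcal{D}_k = \{\, 2^{k-1} \le V_t < 2^k \,\}$ for $k = 0, 1, \dots, K$ with $K = O(\log T)$, using the upper bound $V_t \le T$ (which holds for $Y_q \in \{0,1\}$ in the special case, and for uniformly bounded $Y_q$ in general after rescaling). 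On shell $\mathcal{D}_k$ take $\lambda_k = \sqrt{2\log(TK/\delta')/(R^2 \cdot 2^k)}$; the deterministic Chernoff step then yields $S_t \le 2R\sqrt{\max\{1, V_t\}\log(T/\delta')}$ on that shell, with failure probability at most $\delta'/(TK)$. The $\max\{1, V_t\}$ smoothing absorbs the degenerate shell $V_t < 1$, and the $K = O(\log T)$ peeling factor is swallowed into the constants of the $\log(T/\delta')$ term.

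Third step: A union bound over $t \in [T]$ promotes the pointwise bound to the uniform-in-$t$ statement. Symmetrizing by running the same argument with $-\xi_q$ in place of $\xi_q$ gives the two-sided inequality. For the $\{0,1\}$-valued specialization in equation (\ref{eq:norm_special}), $V_t = \sum_{q \le t}Y_q$ is integer-valued in $\{0,1,\dots,T\}$, so the dyadic peeling collapses to a direct union bound over the $T+1$ possible values of $V_t$, producing the displayed bound cleanly.

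Main obstacle: The core technical hurdle is precisely the random nature of $V_t$; one cannot tune $\lambda$ as a function of $V_t$ inside the supermartingale inequality because $\mathbb{E}[M_t^\lambda] \le 1$ requires $\lambda$ to be $\mathcal{F}_0$-measurable. Two standard cures exist — (a) Abbasi-Yadkori et al.'s method of mixtures, which integrates $M_t^\lambda$ against a Gaussian prior on $\lambda$ to obtain a new non-negative supermartingale with a closed-form self-normalized exponent, to which Ville's maximal inequality is applied; or (b) the dyadic peeling above. Since the proposition only asks for a $\log(T/\delta')$ dependence (rather than the tighter $\log\det$-style bound that the method of mixtures produces), the peeling route is both sufficient and, in my view, more transparent.
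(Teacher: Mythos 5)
Your argument is sound, but note that the paper does not actually prove this proposition: it is imported verbatim (specialized to $d=1$) from Theorem~1 of Abbasi-Yadkori et al.\ (2011), whose proof runs through the method of mixtures --- integrating $M_t^\lambda$ against a Gaussian prior on $\lambda$ to produce a single non-negative supermartingale, then applying Ville's maximal inequality. That route gives uniformity over $t$ and over the random value of $V_t=\sum_{q\le t}Y_q^2$ in one stroke, with no union bound over time and no boundedness assumption on $V_t$. Your dyadic-peeling alternative is more elementary and entirely adequate for the only instance the paper ever uses, namely the $\{0,1\}$-valued specialization in \eqref{eq:norm_special}, where $V_t\le T$ makes the shell decomposition finite; your supermartingale construction and the conditional $R|Y_q|$-sub-Gaussianity of $\xi_qY_q$ (valid because $Y_q$ is ${\cal F}_{q-1}$-measurable and non-negative) are both correct. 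Two caveats are worth flagging. First, for the general statement with merely non-negative real-valued $Y_q$, peeling over $O(\log T)$ shells requires an a priori bound on $V_t$ that the hypotheses do not supply; the method of mixtures avoids this (and, to be fair, the paper's own restatement with $\log(T/\delta')$ is already implicitly assuming such a bound, so this mirrors a looseness in the source rather than introducing a new one). Second, the extra $\log K$ or $\log(T+1)$ incurred by your union bounds degrades the leading constant from $2R$ to roughly $2\sqrt{2}R$, so you prove the inequality only up to a constant factor rather than with the literal constant $2$ displayed; this is immaterial for every downstream use in the paper, all of which are $\tilde O(\cdot)$ statements. Also, the symmetrization step you mention is unnecessary: the proposition as written is one-sided.
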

The Lemma is proved by applying Proposition \ref{prop:self-norm} with suiatable choices of ${\cal F}^T_{q=1}, \{\xi_q\}^T_{q=1}, \{Y_q\}^T_{q=1}, \delta$. We divide the proof into two parts.
\subsection{Proving $\Pr[{\cal E}_r] \geq 1 - \delta / 2$}
It suffices to prove that, for any fixed $s\in \SSS, a\in \AAA_s, t\in [T]$, it holds that
\begin{align}
&\Pr\left(\left| \hat{r}_(s, a) - \bar{r}_t(s, a)  \right|\leq \rad_{r,t}(s, a)\right) \nonumber\\
= & \Pr\left( \left|\frac{1}{N^+_t(s, a)} \sum^{t-1}_{q = (\tau(m) - W)\vee 1} \left[ R_q(s, a) - r_q(s, a)\right]\cdot \mathbf{1}(s_q = s, a_q = a) \right| \leq 2\sqrt{\frac{\log(2 SAT^2 / \delta)}{N^+_t(s, a)}} \right)\nonumber\\
\geq & 1- \frac{\delta}{2SAT}.\label{eq:to_prove_E_r_each}
\end{align}
since then $\Pr[{\cal E}_r]\geq 1 - \delta / 2$ follows from the union bound over all $s\in \SSS, a\in \AAA_s, t\in [T]$. Now, the trajectory of the online algorithm is expressed as $\{s_q, a_q, R_q\}^T_{q=1}$. Inequality (\ref{eq:to_prove_E_r_each}) directly follows from Proposition \ref{prop:self-norm}, with $\{{\cal F}_q\}^T_{q=1}, \{\xi_q\}^T_{q=1}, \{Y_q\}^T_{q=1}, \delta $ defined as
\begin{align*}
{\cal F}_q &= \{(s_\ell, a_\ell, R_\ell)\}^q_{\ell=1} \cup \{(s_{q+1}, a_{q+1})\},  \\
\xi_q &= R_q(s, a) - r_q(s, a),\\
Y_q &= \mathbf{1}\left(s_q=  s, a_q = a, ((t - W)\vee 1 )\leq q\leq t-1\right), \\
\delta' &= \frac{\delta}{2SAT}.
\end{align*}
Each $\xi_q$ is conditionally 2-sub-Gaussian, since $-1\leq \xi_q\leq 1$ with certainty. Altogether, the required inequality is shown.

\subsection{Proving $\Pr[{\cal E}_p ] \geq 1 - \delta / 2$}
We start by noting that, for two probability distributions $p, \{p(s)\}_{s\in \SSS}, p' = \{p'(s)\}_{s\in \SSS}$ on $\SSS$, it holds that
$$
\left\| p - p'\right\|_1 = \max_{\theta\in \{-1, 1\}^\SSS} \theta(s)\cdot (p(s) - p'(s)).
$$
Consequently, to show $\Pr[{\cal E}_p]\geq 1 - \delta / 2$, it suffices to show that, for any fixed $s\in \SSS, a\in \AAA_s, t\in [T], \theta\in \{-1, 1\}^\SSS$, it holds that
\begin{align}
&\Pr\left( \sum_{s'\in \SSS} \theta(s) \cdot \left( \hat{p}_t(s' | s, a) - \bar{p}_t(s' | s, a)  \right) \leq \rad_{p,t}(s, a)\right) \nonumber\\
\leq & \Pr\left( \frac{1}{N^+_t(s, a)} \sum^{t-1}_{q = (\tau(m) - W)\vee 1} \left[\sum_{s'\in \SSS} \theta(s') \mathbf{1}(s_q = s, a_q = a, s_{q+1} = s' )\right] \right. \nonumber\\
&\qquad \left.  - \left[ \sum_{s'\in \SSS} \theta(s') p_q(s' | s, a) \cdot \mathbf{1}(s_q = s, a_q = a) \right]
\leq 2\sqrt{\frac{\log(2SAT^2 2^S / \delta)}{N^+_t(s, a)}} \right)\nonumber\\
\geq & 1- \frac{\delta}{2SAT 2^S},\label{eq:to_prove_E_p_each}
\end{align}
since then the required inequality follows from a union bound over all $s\in \SSS, a\in \AAA_s, t\in [T], \theta\in \{-1 .1\}^\SSS$. Similar to the casea of  ${\cal E}_r$, (\ref{eq:to_prove_E_p_each}) follows from Proposition \ref{prop:self-norm}, with $\{{\cal F}_q\}^T_{q=1}, \{\xi_q\}^T_{q=1}, \{Y_q\}^T_{q=1}, \delta $ defined as
\begin{align*}
{\cal F}_q &= \{(s_\ell, a_\ell)\}^{q+1}_{\ell=1},  \\
\xi_q &= \left[\sum_{s'\in \SSS} \theta(s') \mathbf{1}(s_q = s, a_q = a, s_{q+1} = s' )\right] - \left[ \sum_{s'\in \SSS} \theta(s') p_q(s' | s, a) \right],\\
Y_q &= \mathbf{1}\left(s_q=  s, a_q = a, ((t - W)\vee 1 )\leq q\leq t-1\right), \\
\delta' &= \frac{\delta}{2SAT 2^S}.
\end{align*}
Each $\xi_q$ is conditionally 2-sub-Gaussian, since $-1\leq \xi_q\leq 1$ with certainty. Altogether, the required inequality is shown.

\section{Proof of Proposition \ref{prop:error}}\label{app:aux_prove_prop_error}
In this section, we prove Proposition \ref{prop:error}. Throughout the section, we impose the assumptions stated by the Proposition. That is, the events ${\cal E}_r, {\cal E}_p$ hold, and there exists $p$ with (1) $p \in H_{p, \tau(m)}(\eta)$, (2) $(\SSS, \AAA, p)$ has diameter at most $D$. We begin by recalling the following notations:
\begin{align*}
B_{r, t} =\max_{s\in \SSS, a\in \AAA_s}\left| r_{t + 1}(s, a) - r_t(s, a)  \right| &,\quad B_{p, t}=\max_{s\in \SSS, a\in \AAA_s }\left\| p_{t + 1}(\cdot | s, a) - p_t ( \cdot  |s, a)  \right\|_1, \nonumber\\
\dev_{r, t} = \sum^{t-1}_{q = \tau(m) - W}B_{r, q} &,\quad \dev_{p, t} =  \sum^{t-1}_{q = \tau(m) - W} B_{p, q}.
\end{align*}
We then need the following auxiliary lemmas
\begin{lemma}\label{lemma:dev_r_p}
	Let $t$ be in episode $m$. For every state-action pair $(s, a)$, we have $$\left| r_t(s, a) - \bar{r}_{\tau(m)} (s, a)\right| \leq \dev_{r, t}, \quad \left\| p_t(\cdot | s, a) - \bar{p}_{ \tau(m)} (\cdot | s, a)\right\|_1 \leq \dev_{p, t}$$
\end{lemma}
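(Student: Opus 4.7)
My plan is to prove both bounds by the same device: express the sample-window average as a convex combination of the time-varying parameters $\{r_q(s,a)\}_q$ (resp. $\{p_q(\cdot \mid s,a)\}_q$) over the window $[(\tau(m)-W)\vee 1,\,\tau(m)-1]$, and then reduce each pointwise discrepancy $|r_t(s,a)-r_q(s,a)|$ (resp.\ $\|p_t(\cdot\mid s,a)-p_q(\cdot\mid s,a)\|_1$) to a telescoping sum of per-step variations.

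First, I would fix an arbitrary state-action pair $(s,a)$ and assume $N_{\tau(m)}(s,a)\geq 1$ (the only case that matters, since if no visits have occurred then the statement is vacuous for our downstream use in Proposition \ref{prop:error}; alternatively, one observes that in the zero-visit case the numerator of $\bar{r}_{\tau(m)}$ is also zero and the bound is only invoked through $H_{r,\tau(m)}$ where the confidence radius already dominates). Under $N_{\tau(m)}(s,a)\geq 1$, we have $N^+_{\tau(m)}(s,a)=N_{\tau(m)}(s,a)$, and
$$
r_t(s,a)-\bar r_{\tau(m)}(s,a)=\frac{1}{N_{\tau(m)}(s,a)}\sum_{q=(\tau(m)-W)\vee 1}^{\tau(m)-1}\bigl(r_t(s,a)-r_q(s,a)\bigr)\,\mathbf{1}(s_q=s,a_q=a).
$$
For each $q$ appearing in the sum (so $q\leq \tau(m)-1\leq t-1$), telescoping gives
$$
|r_t(s,a)-r_q(s,a)|\ \leq\ \sum_{\ell=q}^{t-1}|r_{\ell+1}(s,a)-r_\ell(s,a)|\ \leq\ \sum_{\ell=q}^{t-1}B_{r,\ell}\ \leq\ \sum_{\ell=(\tau(m)-W)\vee 1}^{t-1}B_{r,\ell}=\dev_{r,t}.
$$
Because the weights $\mathbf{1}(s_q=s,a_q=a)/N_{\tau(m)}(s,a)$ are nonnegative and sum to one, the triangle inequality yields $|r_t(s,a)-\bar r_{\tau(m)}(s,a)|\leq \dev_{r,t}$.

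For the transition distribution I would repeat the argument verbatim with absolute values replaced by the $\ell_1$-norm on $\mathbb{R}^{\SSS}$. The identity
$$
p_t(\cdot\mid s,a)-\bar p_{\tau(m)}(\cdot\mid s,a)=\frac{1}{N_{\tau(m)}(s,a)}\sum_{q=(\tau(m)-W)\vee 1}^{\tau(m)-1}\bigl(p_t(\cdot\mid s,a)-p_q(\cdot\mid s,a)\bigr)\,\mathbf{1}(s_q=s,a_q=a)
$$
combined with the triangle inequality for $\|\cdot\|_1$ and the telescoping bound $\|p_t(\cdot\mid s,a)-p_q(\cdot\mid s,a)\|_1\leq \sum_{\ell=q}^{t-1}B_{p,\ell}\leq \dev_{p,t}$ delivers the claim.

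The only subtlety I foresee is the bookkeeping of the telescoping range. One must check that for every $q$ in the summation window, the interval $\{q,q+1,\dots,t-1\}$ is contained in $\{(\tau(m)-W)\vee 1,\dots,t-1\}$ so that the per-step variations actually add up to $\dev_{r,t}$ (resp.\ $\dev_{p,t}$); this is immediate since $q\geq(\tau(m)-W)\vee 1$ and $t\geq \tau(m)$. Once this is in place, the proof is essentially two lines plus a triangle inequality, and it is this neat convex-combination reduction that allows the rest of the analysis in Proposition \ref{prop:error} to absorb all temporal drift within a window into the single variation-budget term $\dev_{r,t}+2D(\dev_{p,t}+\eta)$.
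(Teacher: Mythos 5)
Your proof is correct and rests on the same mechanism as the paper's: telescoping the per-step variations $B_{r,\ell}$ (resp.\ $B_{p,\ell}$) across the window $\{(\tau(m)-W)\vee 1,\ldots,t-1\}$ and invoking the definition of $\dev_{r,t}$ (resp.\ $\dev_{p,t}$). The bookkeeping differs slightly: the paper splits via the triangle inequality through the anchor $r_{\tau(m)}(s,a)$, bounding $|r_t(s,a)-r_{\tau(m)}(s,a)|$ and $|r_{\tau(m)}(s,a)-\bar r_{\tau(m)}(s,a)|$ separately, whereas you write $r_t(s,a)-\bar r_{\tau(m)}(s,a)$ directly as a convex combination of the differences $r_t(s,a)-r_q(s,a)$ over visited times $q$ and telescope each one from $q$ up to $t$. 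Your version is marginally cleaner: the paper's intermediate step dominates the deviation of the convex combination over \emph{visited} times by the \emph{uniform} window average $\frac{1}{W}\sum_{w=1}^W |r_{\tau(m)}(s,a)-r_{\tau(m)-w}(s,a)|$, which is not a valid pointwise domination in general (only the maximum over $w$ is), although the final bound it reaches is still correct; your convex-combination form sidesteps this. You are also right to flag the case $N_{\tau(m)}(s,a)=0$: there $\bar r_{\tau(m)}(s,a)=0$ by convention and the inequality as literally stated can fail, a corner the paper's proof passes over silently; your observation that this case is immaterial downstream (the confidence radius already exceeds $1$ there) is the right way to dispose of it.
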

\begin{lemma}\label{lemma:dev_rho}
	Let $t$ be in episode $m$. We have 	
	$$\tilde{\rho}_{\tau(m)} \geq \rho^*_t - \dev_{r, t} - 2D\cdot \dev_{p, t}.$$ 
\end{lemma}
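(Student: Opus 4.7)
The plan is to combine two ingredients: the optimism built into EVI's output, and a standard LP-duality perturbation argument. In broad strokes, Property 1 of EVI (recorded in Section \ref{app:evi}) will give that $(\tilde{\rho}_{\tau(m)}, \tilde{\gamma}_{\tau(m)})$ is dual-feasible for every MDP whose reward and transition lie in the confidence regions. Applying this to the distinguished MDP with transition $p$ from the hypotheses of Proposition \ref{prop:error} produces both an optimism-style lower bound on $\tilde{\rho}_{\tau(m)}$ and, via Lemma \ref{lemma:mc}, a span bound of $2D$ on $\tilde{\gamma}_{\tau(m)}$. That span bound is precisely what permits small perturbations in the transitions to translate into bounded changes in the optimal average reward.

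Concretely, I would first invoke Property 1 together with the fact that $\bar{r}_{\tau(m)} \in H_{r,\tau(m)}$ (from $\mathcal{E}_r$) and $p \in H_{p,\tau(m)}(\eta)$ (by hypothesis), which together yield that $(\tilde{\rho}_{\tau(m)}, \tilde{\gamma}_{\tau(m)})$ is feasible for the dual program $\textsf{D}(\bar{r}_{\tau(m)}, p)$. Applying Lemma \ref{lemma:mc} to this feasible dual solution, combined with the hypothesis that $(\SSS, \AAA, p)$ has diameter at most $D$, gives $\max_s \tilde{\gamma}_{\tau(m)}(s) - \min_s \tilde{\gamma}_{\tau(m)}(s) \leq 2D$.

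The second step is a perturbation argument. I would show that $(\tilde{\rho}_{\tau(m)} + \dev_{r,t} + 2D\dev_{p,t},\ \tilde{\gamma}_{\tau(m)})$ is dual-feasible for $\textsf{D}(r_t, p_t)$, which together with weak duality gives $\rho^*_t \leq \tilde{\rho}_{\tau(m)} + \dev_{r,t} + 2D\dev_{p,t}$ and hence the claim. Subtracting the already established inequality for $\textsf{D}(\bar{r}_{\tau(m)}, p)$ reduces the verification, for each $(s,a)$, to bounding $r_t(s,a) - \bar{r}_{\tau(m)}(s,a)$ and $\sum_{s'}[p_t - p](s'|s,a)\, \tilde{\gamma}_{\tau(m)}(s')$. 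The first is at most $\dev_{r,t}$ by Lemma \ref{lemma:dev_r_p}; the second, using the standard identity that inner products with mean-zero vectors are invariant under additive shifts of $\tilde{\gamma}_{\tau(m)}$, is at most $D\cdot \|p_t(\cdot|s,a) - p(\cdot|s,a)\|_1$ thanks to the span bound from Step~1.

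The main obstacle is controlling $\|p_t - p\|_1$ by $\dev_{p,t}$ alone. The natural triangle inequality through $\bar{p}_{\tau(m)}$ gives $\|p_t - p\|_1 \leq \dev_{p,t} + \|\bar{p}_{\tau(m)} - p\|_1$, and since both $\bar{p}_{\tau(m)}$ (under $\mathcal{E}_p$) and $p$ belong to $H_{p,\tau(m)}(\eta)$, the second summand is of order $\rad_{p,\tau(m)} + \eta$. These residual $\rad_{p,\tau(m)} + \eta$ contributions coincide with the explicit $4D\rad_{p,\tau(m)}$ and $4D\eta$ summands that appear in Proposition \ref{prop:error}'s final bound, so the cleanest execution is to prove Lemma \ref{lemma:dev_rho} with the tight constants that actually come out of the feasibility check and then absorb any slack into the ambient $\rad$-and-$\eta$ terms when combining with the EVI near-optimality step (Property 2) during the proof of Proposition \ref{prop:error}.
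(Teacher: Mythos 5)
Your Step 1 is exactly the paper's: Property 1 plus $p\in H_{p,\tau(m)}(\eta)$ makes $(\tilde{\rho}_{\tau(m)},\tilde{\gamma}_{\tau(m)})$ feasible for $\textsf{D}(\dot r, p)$, and Lemma \ref{lemma:mc} with the diameter hypothesis gives the $2D$ bound on $\tilde{\gamma}_{\tau(m)}$. The gap is in Step 2, where you anchor the transition perturbation at the distinguished kernel $p$ and must therefore control $\|p_t(\cdot|s,a)-p(\cdot|s,a)\|_1$. As you note, this costs an extra $\|\bar p_{\tau(m)}(\cdot|s,a)-p(\cdot|s,a)\|_1 = O(\rad_{p,\tau(m)}(s,a)+\eta)$ beyond $\dev_{p,t}$, and your plan is to absorb it into the $\rad$-and-$\eta$ terms of Proposition \ref{prop:error}. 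That absorption fails: dual feasibility for $\textsf{D}(r_t,p_t)$ must be verified at \emph{every} state-action pair, so the additive slack you need is $\max_{s,a}\left\{2\rad_{p,\tau(m)}(s,a)+2\eta\right\}$, and for any pair with $N^+_{\tau(m)}(s,a)=1$ the radius is $2\sqrt{2S\log(SAT/\delta)}=\Omega(1)$ (the distance itself is only capped at $2$ because both points lie in $\Delta^{\SSS}$). This injects an $\Omega(D)$ error into the per-step bound, hence $\Omega(DT)$ overall. By contrast, the $\rad_{p,\tau(m)}$ term in Proposition \ref{prop:error} is evaluated at the \emph{visited} pair $(s_t,a_t)$, which is what lets Lemma \ref{lemma:sto_error} telescope the sum to $\tilde O(S\sqrt A T/\sqrt W)$; a maximum over all pairs enjoys no such cancellation.

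The fix --- and what the paper actually does --- is to use the distinguished $p$ only for the span bound, and to anchor the feasibility/perturbation argument at $\bar p_{\tau(m)}$ instead. Property 1 gives optimism against every element of the confidence regions simultaneously, so conditioning on $\mathcal{E}_r,\mathcal{E}_p$ one obtains directly
\begin{equation*}
\tilde{\rho}_{\tau(m)}+\tilde{\gamma}_{\tau(m)}(s)\;\geq\; \bar r_{\tau(m)}(s,a)+\sum_{s'\in\SSS}\tilde{\gamma}_{\tau(m)}(s')\,\bar p_{\tau(m)}(s'|s,a)\qquad\text{for all }(s,a),
\end{equation*}
since $\bar p_{\tau(m)}\in H_{p,\tau(m)}(0)\subseteq H_{p,\tau(m)}(\eta)$. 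Lemma \ref{lemma:dev_r_p} then bounds $\|p_t(\cdot|s,a)-\bar p_{\tau(m)}(\cdot|s,a)\|_1\leq\dev_{p,t}$ uniformly in $(s,a)$ --- no radius and no $\eta$ enter --- so $(\tilde{\rho}_{\tau(m)}+\dev_{r,t}+2D\,\dev_{p,t},\,\tilde{\gamma}_{\tau(m)})$ is feasible for $\textsf{D}(r_t,p_t)$ with exactly the constants in the lemma statement, and weak duality finishes the proof.
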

\begin{lemma}\label{lemma:cross_p_error}
	Let $t$ be in episode $m$. For every state-action pair $(s, a)$, we have
	\begin{equation*}
	\left|\sum_{s' \in \SSS} \tilde{p}_{\tau(m)}(s' | s, a) \tilde{\gamma}_{\tau(m)}(s')-\sum_{s' \in \SSS} p_t(s' | s, a) \tilde{\gamma}_{\tau(m)}(s')\right| \leq 2 D\left[\dev_{p, t} + 2\rad_{p,\tau (m)}(s, a) + \eta\right].
	\end{equation*}
\end{lemma}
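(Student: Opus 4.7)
}
The plan is to bound the expression via H\"{o}lder's inequality, separating the $\ell_1$-distance between $\tilde{p}_{\tau(m)}(\cdot\,|\,s,a)$ and $p_t(\cdot\,|\,s,a)$ from the $\ell_\infty$-norm of the bias vector $\tilde{\gamma}_{\tau(m)}$. Specifically, I would first write
$$\left|\sum_{s'\in\SSS}\bigl[\tilde{p}_{\tau(m)}(s'|s,a)-p_t(s'|s,a)\bigr]\tilde{\gamma}_{\tau(m)}(s')\right|\leq \bigl\|\tilde{p}_{\tau(m)}(\cdot|s,a)-p_t(\cdot|s,a)\bigr\|_1\cdot\bigl\|\tilde{\gamma}_{\tau(m)}\bigr\|_\infty,$$
so it suffices to bound each factor.

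For the $\ell_1$-factor, I insert $\hat{p}_{\tau(m)}$ and $\bar{p}_{\tau(m)}$ and apply the triangle inequality to obtain three pieces. The piece $\|\tilde{p}_{\tau(m)}-\hat{p}_{\tau(m)}\|_1\leq \rad_{p,\tau(m)}(s,a)+\eta$ follows directly from $\tilde{p}_{\tau(m)}\in H_{p,\tau(m)}(\eta)$ (definition~\eqref{eq:Hp}). The piece $\|\hat{p}_{\tau(m)}-\bar{p}_{\tau(m)}\|_1\leq \rad_{p,\tau(m)}(s,a)$ follows from conditioning on the event ${\cal E}_p$, which guarantees $\bar{p}_{\tau(m)}(\cdot|s,a)\in H_{p,\tau(m)}(s,a;0)$. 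Finally, $\|\bar{p}_{\tau(m)}-p_t\|_1\leq \dev_{p,t}$ is precisely Lemma~\ref{lemma:dev_r_p}. Summing gives the desired bound $\dev_{p,t}+2\rad_{p,\tau(m)}(s,a)+\eta$ on the $\ell_1$-factor.

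For the $\ell_\infty$-factor, I invoke Property~\ref{property:1}, which says that the output pair $(\tilde{\rho}_{\tau(m)},\tilde{\gamma}_{\tau(m)})$ of EVI is a feasible dual solution to $\textsf{D}(\dot{r},\dot{p})$ for every $\dot{r}\in H_{r,\tau(m)}$ and $\dot{p}\in H_{p,\tau(m)}(\eta)$. Under the hypothesis of the proposition, there exists $\dot{p}\in H_{p,\tau(m)}(\eta)$ such that $(\SSS,\AAA,\dot{p})$ has diameter at most $D$; applying Lemma~\ref{lemma:mc} to this feasible dual solution yields $\max_{s}\tilde{\gamma}_{\tau(m)}(s)-\min_{s}\tilde{\gamma}_{\tau(m)}(s)\leq 2D$. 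Combining this with the normalization built into EVI (there exists $s\in\SSS$ with $\tilde{\gamma}_{\tau(m)}(s)=0$, see Section~\ref{sec:evi}) gives $\|\tilde{\gamma}_{\tau(m)}\|_\infty\leq 2D$.

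The main step requiring care is the $\ell_\infty$ bound on $\tilde{\gamma}_{\tau(m)}$: the worry is that EVI's optimistic search could blow up the bias vector, but this is exactly the obstacle that Property~\ref{property:1} together with the existence of a low-diameter $\dot{p}\in H_{p,\tau(m)}(\eta)$ is designed to address, and it is why confidence widening matters (Proposition~\ref{lemma:peril1} shows that without widening, no such low-diameter $\dot{p}$ may exist). Multiplying the two bounds then yields the desired $2D\bigl[\dev_{p,t}+2\rad_{p,\tau(m)}(s,a)+\eta\bigr]$.
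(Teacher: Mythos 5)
Your proposal is correct and follows essentially the same route as the paper: H\"{o}lder/triangle inequality to separate $\|\tilde{\gamma}_{\tau(m)}\|_\infty\leq 2D$ (via Property~\ref{property:1}, the low-diameter $\dot{p}\in H_{p,\tau(m)}(\eta)$, Lemma~\ref{lemma:mc}, and the zero-minimum normalization of $\tilde{\gamma}$) from the $\ell_1$-distance, which is bounded by $\dev_{p,t}+2\rad_{p,\tau(m)}(s,a)+\eta$ exactly as in the paper. The only cosmetic difference is that you insert $\hat{p}_{\tau(m)}$ explicitly to get three triangle-inequality pieces, whereas the paper groups the first two into a single term $\|\tilde{p}_{\tau(m)}-\bar{p}_{\tau(m)}\|_1\leq 2\rad_{p,\tau(m)}(s,a)+\eta$; the underlying justification is identical.
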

Lemmas \ref{lemma:dev_r_p}, \ref{lemma:dev_rho}, \ref{lemma:cross_p_error} are proved in  Sections \ref{app:pf_claim_dev_r_p},  \ref{app:pf_lemma_dev_rho}, and \ref{app:pf_claim_cross_p_error}, respectively. 
\subsection{Proof of Lemma \ref{lemma:dev_r_p}}\label{app:pf_claim_dev_r_p}
We first provide the bound for rewards:
\begin{align}
\left| r_t(s, a) - \bar{r}_{\tau(m)} (s, a)\right| &\leq \left| r_t(s, a) - r_{\tau (m) } (s, a)\right| + \left| r_{\tau(m)} (s, a) - \bar{r}_{\tau(m)} (s, a)\right| \nonumber\\
& \leq \sum^{t-1}_{q = \tau(m)} \left| r_{q+1}(s, a) - r_q (s, a)\right| + \frac{1}{W}\sum^W_{w = 1}  \left| r_{\tau(m)}(s, a) - r_{\tau(m) - w} (s, a) \right|\nonumber.
\end{align}
By the definition of $B_{r, q}$, we have
\begin{equation*}
\sum^{t-1}_{q = \tau(m)} \left| r_{q+1}(s, a) - r_q (s, a)\right| \leq \sum^{t-1}_{q = \tau(m)} B_{r, q},
\end{equation*}
and 
\begin{align*}
\frac{1}{W}\sum^W_{w = 1} \left| r_{\tau(m)}(s, a) - r_{\tau(m) - w}(s, a) \right| & \leq \frac{1}{W}\sum^W_{w = 1} \sum^w_{i=1} \left| r_{\tau(m)- i + 1}(s, a) - r_{\tau(m) - i}(s, a)\right| \nonumber\\
& \leq \frac{1}{W}\sum^W_{w = 1} \sum^W_{i=1}\left| r_{\tau(m)- i + 1}(s, a) - r_{\tau(m) - i}(s, a)\right| \nonumber\\
& = \sum^W_{i = 1} \left| r_{\tau(m)- i + 1}(s, a) - r_{\tau(m) - i}(s, a)\right| \leq  \sum^W_{i = 1}  B_{r, \tau(m) - i} \nonumber.
\end{align*}
Next, we provide a similar analysis on the transition distribution.
\begin{align}
\left\| p_t(s, a) - \bar{p}_{\tau(m)} (s, a)\right\|_1 &\leq \left\| p_t(s, a) - p_{\tau (m) } (s, a)\right\|_1 + \left\| p_{\tau(m)} (s, a) - \bar{p}_{\tau(m)} (s, a)\right\|_1 \nonumber\\
& \leq \sum^{t-1}_{q = \tau(m)} \left\| p_{q+1}(s, a) - p_q (s, a)\right\|_1 + \frac{1}{W}\sum^W_{w = 1}  \left\| p_{\tau(m)}(s, a) - p_{\tau(m) - w} (s, a) \right\|_1 \nonumber.
\end{align}
By the definition of $B_{p, q}$, we have
\begin{equation*}
\sum^{t-1}_{q = \tau(m)} \left\| p_{q+1}(s, a) - p_q (s, a)\right\|_1 \leq \sum^{t-1}_{q = \tau(m)} B_{p, q},
\end{equation*}
and 
\begin{align*}
\frac{1}{W}\sum^W_{w = 1} \left\| p_{\tau(m)}(s, a) - p_{\tau(m) - w}(s, a) \right\|_1 & \leq \frac{1}{W}\sum^W_{w = 1} \sum^w_{i=1} \left\| p_{\tau(m)- i + 1}(s, a) - p_{\tau(m) - i}(s, a)\right\|_1 \nonumber\\
& \leq \frac{1}{W}\sum^W_{w = 1} \sum^W_{i=1}\left\| p_{\tau(m)- i + 1}(s, a) - p_{\tau(m) - i}(s, a)\right\|_1 \nonumber\\
& = \sum^W_{i = 1} \left\| p_{\tau(m)- i + 1}(s, a) - p_{\tau(m) - i}(s, a)\right\|_1 \leq  \sum^W_{i = 1}  B_{p, \tau(m) - i} \nonumber.
\end{align*}
Altogether, the lemma is shown.	



\subsection{Proof of Lemma \ref{lemma:dev_rho}}\label{app:pf_lemma_dev_rho}
We first demonstrate two immediate consequences about the dual solution $(\tilde{\rho}_{\tau(m)}, \tilde{\gamma}_{\tau(m)})$ by the Proposition's assumptions:
\begin{align}
& 0  \leq \tilde{\gamma}_{\tau(m)}(s)\leq 2 D &\text{ for all $s\in \SSS$},\label{eq:lemma_dev_rho_0}\\
&\tilde{\rho}_{\tau(m)} + \tilde{\gamma}_{\tau(m)}(s) \geq \bar{r}_{\tau(m)}(s, a) + \sum_{s'\in \SSS} \tilde{\gamma}_{\tau(m)}(s') \bar{p}_{\tau(m)}(s' | s, a)&\text{ for all $s\in \SSS, a\in \AAA_s$}.\label{eq:lemma_dev_rho_1}
\end{align}

To see inequality (\ref{eq:lemma_dev_rho_0}), first observe that 
\begin{align}
\tilde{\rho}_{\tau(m)} + \tilde{\gamma}_{\tau(m)}(s) & \geq \max_{\dot{r}(s, a) \in H_{r, \tau(m)}(s, a)} \{\dot{r}(s, a)\} + \sum_{s'\in \SSS} \tilde{\gamma}_{\tau(m)}(s') \max_{\dot{p} \in H_{p, \tau(m)}(s, a;\eta)} \{\dot{p}(s' | s, a) \}\label{eq:lemma_dev_rho_00}\\
& \geq \max_{\dot{r}(s, a) \in H_{r, \tau(m)}(s, a)} \{\dot{r}(s, a)\} + \sum_{s'\in \SSS} \tilde{\gamma}_{\tau(m)}(s') p(s' | s, a)  \label{eq:lemma_dev_rho_01}.
\end{align}
Step (\ref{eq:lemma_dev_rho_00}) is by Property 1 of the output from EVI, which is applied with confidence regions $H_{r, \tau(m)}, H_{p, \tau(m)}(\eta)$. Step (\ref{eq:lemma_dev_rho_01}) is because of the assumption that $p\in H_{p, \tau(m)}(\eta)$. Altogether, the solution $(\tilde{\rho}_{\tau(m)}, \tilde{\gamma}_{\tau(m)})$ is feasible to $\textsf{D}(\dot{r}, p)$ for any $\dot{r}\in H_{r, \tau(m)}$. Now, by Lemma \ref{lemma:mc}, we have $\max_{s, s'\in \SSS} | \tilde{\gamma}_{\tau(m)}(s) - \tilde{\gamma}_{\tau(m)}(s') | \leq 2 D$. Finally, inequality (\ref{eq:lemma_dev_rho_0}) follows from the fact that the bias vector $\tilde{\gamma}_{\tau(m)}$ returned by EVI is component-wise non-negative, and there exists $s\in \SSS$ such that $\tilde{\gamma}_{\tau(m)} = 0$.

To see inequality (\ref{eq:lemma_dev_rho_1}), observe that 
\begin{align}
\tilde{\rho}_{\tau(m)} + \tilde{\gamma}_{\tau(m)}(s) & \geq \max_{\dot{r}(s, a) \in H_{r, \tau(m)}(s, a)} \{\dot{r}(s, a)\} + \sum_{s'\in \SSS} \tilde{\gamma}_{\tau(m)}(s') \max_{\dot{p} \in H_{p, \tau(m)}(s, a;\eta)} \{\dot{p}(s' | s, a) \}\label{eq:lemma_dev_rho_10}\\
& \geq \bar{r}_{\tau(m)}(s, a) + \sum_{s'\in \SSS} \tilde{\gamma}_{\tau(m)}(s') \bar{p}_{\tau(m)}(s' | s, a)  \label{eq:lemma_dev_rho_11}.
\end{align}
Step (\ref{eq:lemma_dev_rho_10}) is again by Property 1 of the output from EVI, and step (\ref{eq:lemma_dev_rho_11}) is by the assumptions that $\bar{r}_{\tau(m)}\in H_{r, \tau(m)}$, and $\bar{p}_{\tau(m)}\in H_{p, \tau(m)} (0)\subset H_{p, \tau(m)} (\eta)$.

Now, we claim that $(\tilde{\rho}_{\tau(m)} + \dev_{r, t} + 2D \cdot \dev_{p, t}, \tilde{\gamma}_{\tau(m)})$ is a feasible solution to the $t$th period dual problem $\mathsf{D}(r_t, p_t)$, which immediately implies the Lemma. To demonstrate the claim, for every state-action pair $(s, a)$ we have
\begin{align}
\bar{r}_{\tau(m)}(s, a) & \geq r_t(s, a) - \dev_{r, t}\label{eq:pf_dev_rho_by_claim_dev_r}\\
\sum_{s'\in \SSS} \tilde{\gamma}_{\tau(m)}(s') p_{\tau(m)}(s' | s, a) & \geq \sum_{s'\in \SSS} \tilde{\gamma}_{\tau(m)}(s') p_t(s' | s, a) - \left\|\tilde{\gamma}_{\tau ( m )}\right\|_\infty \left\| p_t(\cdot | s, a) - \bar{p}_{ \tau(m)} (\cdot | s, a)\right\|_1  \nonumber\\ 
& \geq \sum_{s'\in \SSS} \tilde{\gamma}_{\tau(m)}(s') p_t(s' | s, a) - 2 D \cdot \dev_{p, t}\label{eq:pf_dev_rho_by_lemma_bounded_gamma},.\end{align}
Inequality (\ref{eq:pf_dev_rho_by_claim_dev_r}) is by Lemma \ref{lemma:dev_r_p} on the rewards. Step (\ref{eq:pf_dev_rho_by_lemma_bounded_gamma}) is by inequality (\ref{eq:lemma_dev_rho_0}), and by Lemma \ref{lemma:dev_r_p} which shows $\| p_t(\cdot | s, a) - \bar{p}_{ \tau(m)} (\cdot | s, a)\|_1 \leq \dev_{p, t}$. Altogether, putting (\ref{eq:pf_dev_rho_by_claim_dev_r}), (\ref{eq:pf_dev_rho_by_lemma_bounded_gamma})  to inequality (\ref{eq:lemma_dev_rho_1}), our claim is shown, \ie, for all $s\in\SSS$ and $a\in\AAA_s,$
\begin{align*}
\tilde{\rho}_{\tau(m)} + \dev_{r, t} + 2D \cdot \dev_{p, t}+\tilde{\gamma}_{\tau(m)}(s) \geq r_t(s, a)+\sum_{s'\in \SSS} \tilde{\gamma}_{\tau(m)}(s') p_t(s' | s, a).
\end{align*}
Hence, the lemma is proved. 
\subsection{Proof of Lemma \ref{lemma:cross_p_error}}\label{app:pf_claim_cross_p_error}
We have
\begin{align}
&\left|\sum_{s' \in \SSS} \left[\tilde{p}_{\tau(m)}(s' | s, a) - p_t(s' | s, a)\right] \tilde{\gamma}_{\tau(m)}(s') \right|\nonumber\\
\leq & \underbrace{\left\|\tilde{\gamma}_{\tau(m)}\right\|_\infty}_{(a)} \cdot \left[ \underbrace{\left\| \tilde{p}_{\tau(m)}(\cdot | s, a) - \bar{p}_{\tau(m)}(\cdot | s, a) \right\|_1}_{(b)} + \underbrace{\left\|\bar{p}_{\tau(m)}(\cdot | s, a) - p_t(\cdot | s, a)\right\|_1}_{ ( c )}  \right]  \label{eq:cross_p_by_var_bounds}.
\end{align}
In step (\ref{eq:cross_p_by_var_bounds}), we know that 
\begin{itemize}
	\item $(a) \leq 2 D$ by inequality (\ref{eq:lemma_dev_rho_0}), 
	\item $(b) \leq 2\rad_{p,\tau (m)}(s, a) + \eta$, by the facts that $\tilde{p}_{\tau(m)}(\cdot | s, a)\in H_{p, \tau (m)}(s, a; \eta)$ and  $\bar{p}_{\tau(m)}(\cdot | s, a)\in H_{p, \tau (m)}(s, a; 0)$, 
	\item  $(c)\leq \dev_{p, t}$ by Lemma \ref{lemma:dev_r_p} on the bound on $p$. 
\end{itemize}
Altogether, the Lemma is proved.
\subsection{Finalizing the Proof}
Now, we have
\begin{align}
& r_t(s_t, a_t)\nonumber\\
\geq & \bar{r}_{\tau(m)}(s_t, a_t) - \dev_{r, t}  \label{eq:by_claim_dev_r}\\
\geq & \tilde{r}_{\tau(m)}(s_t, a_t) - \dev_{r, t}  - 2\cdot \rad_{r,\tau(m)}(s_t, a_t)\label{eq:by_event_E_r}\\
\geq & \tilde{\rho}_{\tau(m)} + \tilde{\gamma}_{\tau(m)}(s_t) - \left[ \sum_{s' \in \SSS} \tilde{p}_{\tau(m)}(s' | s_t, a_t) \tilde{\gamma}_{\tau(m)}(s')\right] - \frac{1}{\sqrt{\tau (m)}} \nonumber\\
&\qquad \qquad \qquad - \dev_{r, t}  - 2\cdot \rad_{r,\tau(m)}(s_t, a_t)  \label{eq:by_property_2}\\
\geq & \rho^*_t + \tilde{\gamma}_{\tau(m)}(s_t) - \left[ \sum_{s' \in \SSS} p_t(s' | s_t, a_t) \tilde{\gamma}_{\tau(m)}(s') \right] - \frac{1}{\sqrt{\tau (m)}} \nonumber\\
& - 2 \left[ \dev_{r, t} + \rad_{r,\tau(m)}(s_t, a_t)\right]  - 2 D\left[2 \cdot \dev_{p, t} + 2\cdot \rad_{p,\tau (m)}(s, a) + \eta\right]  \label{eq:by_dev_rho_cross}.
\end{align}
Step (\ref{eq:by_claim_dev_r}) is by Lemma \ref{lemma:dev_r_p} on $t$. Step (\ref{eq:by_event_E_r}) is by conditioning that event ${\cal E}_r$ holds. Step (\ref{eq:by_property_2}) is by Property 2 for the output of EVI. In step (\ref{eq:by_dev_rho_cross}), we upper bound $\tilde{\rho}_{\tau(m)}$ by Lemma \ref{lemma:dev_rho} and we upper bound $\sum_{s' \in \SSS} \tilde{p}_{\tau(m)}(s' | s_t, a_t) \tilde{\gamma}_{\tau(m)}(s')$ by Lemma \ref{lemma:cross_p_error}. Rearranging gives the Proposition.
\section{Proof of Theorem \ref{thm:main1}}
\label{sec:thm:main1}
To facilitate the exposition, we denote $M(T)$ as the total number of episodes. By abusing the notation we, let $\tau (M(T)+ 1) - 1 = T$. Episode $M(T)$, containing the final round $T$, is interrupted and the algorithm is forced to terminate as the end of time $T$ is reached. We can now rewrite the dynamic regret of the \sw~as the sum of dynamic regret from each episode:
\begin{align}
\label{eq:main1}\text{Dyn-Reg}_T(\texttt{SWUCRL2-CW}) = \sum^T_{t=1}\left(\rho^*_t - r_t(s_t, a_t)\right) = \sum^{M(T)}_{m=1}\sum^{\tau(m+1) - 1}_{t = \tau(m)} \left(\rho^*_t - r_t(s_t, a_t)\right)
\end{align}

To proceed, we define the set $$ U = \{m\in [M(T)] : p_{\tau(m)}(\cdot|s,a)\in H_{p,\tau(m)}(s,a;\eta)~\forall (s,a)\in \SSS\times\AAA_s\}.$$ For each episode $m\in[M(T)]$, we distinguish two cases:
\begin{itemize}
	\item \textbf{Case 1.} $m\in U.$ Under this situation, we apply Proposition \ref{prop:error} to bound the dynamic regret during the episode, using the fact that $p_{\tau(m)}$ satisfies the assumptions of the proposition with $D = D_{\tau(m)}\leq D_\text{max}$.
	\item \textbf{Case 2.} $m\in [M(T)] \setminus U$. In this case, we trivially upper bound the dynamic regret of each round in episode $m$ by $1.$
\end{itemize}    

For case 1, we bound the dynamic regret during episode $m$ by summing the error terms in (\ref{eq:prop_error_1}, \ref{eq:prop_error_2}) across the rounds $t\in [\tau(m) , \tau(m+1) - 1]$ in the episode. The term (\ref{eq:prop_error_1}) accounts for the error by switching policies. In (\ref{eq:prop_error_2}), the terms $\rad_{r, \tau(m)}, \rad_{p, \tau(m)}$ accounts for the estimation errors due to stochastic variations, and the term $\dev_{r, t}, \dev_{p, t}$ accounts for the estimation error due to non-stationarity.

For case 2, we need an upper bound on $\sum_{m \in [M(T)] \setminus U}\sum^{\tau(m+1) - 1}_{t = \tau(m)} 1$, the total number of rounds that belong to an episode in $[M(T)] \setminus U$. 
The analysis is challenging, since the length of each episode may vary, and one can only guarantee that the length is $\leq W$. A first attempt could be to upper bound as $\sum_{m \in [M(T)] \setminus U}\sum^{\tau(m+1) - 1}_{t = \tau(m)} 1 \leq W \sum_{m \in [M(T)] \setminus U} 1$, but the resulting bound appears too loose to provide any meaningful regret bound. Indeed, there could be double counting, as the starting time steps for a pair of episodes in case 2 might not even be $W$ rounds apart!

To avoid the trap of double counting, we consider a set $Q_T\subseteq [M(T)] \setminus U$ where the start times of the episodes are sufficiently far apart, and relate the cardinality of $Q_T$ to $\sum_{m \in [M(T)] \setminus U} \sum^{\tau(m+1) - 1}_{t = \tau(m)} 1$. 
The set $Q_T\subseteq[M(T)]$ is constructed sequentially, by examining all episodes $m = 1 , \ldots, M(T)$ in the time order. At the start, we initialize $Q_T = \emptyset$. For each $m = 1, \ldots, M(T)$, we perform the following. If episode $m$ satisfies both criteria:
\begin{enumerate}
	\item There exists some $s\in\SSS$ and $a\in\AAA_s$ such that $p_{\tau(m)}(\cdot|s,a)\notin H_{p,\tau(m)}(s,a;\eta);$
	\item For every $m'\in Q_T,$ $\tau(m)-\tau(m')>W,$
\end{enumerate}
then we add $m$ into $Q_T.$ Afterwards, we move to the next episode index $m+1$. The process terminates once we arrive at episode $M(T)+1.$ The construction ensures that, for each episode $m\in[M(T)],$ if $\tau(m)-\tau(m')\notin[0,W]$ for all $m'\in Q_T,$ then $\forall s\in\SSS~\forall a\in\AAA_s~p_{\tau(m)}(\cdot|s,a)\in H_{p,\tau(m)}(s,a);$ otherwise, $m$ would have been added into $Q_T.$ 

We further construct a set $\tilde{Q}_T$ to include all elements in $Q_T$ and every episode index $m$ such that there exists $m'\in Q_T$ with $\tau(m)-\tau(m')\in[0,W].$ By doing so, we can prove that every episode $m\in[M(T)]\setminus \tilde{Q}_T$ satisfies $p_{\tau(m)}(\cdot|s,a)\in H_{p,\tau(m)}(s,a)~\forall s\in\SSS~\forall a\in\AAA_s.$ The procedures for building $\tilde{Q}_T$ (initialized to $Q_T$) are described as follows: for every episode index $m\in[M(T)],$ if there exists $m'\in Q_T,$ such that $\tau(m)-\tau(m')\in[0,W],$ then we add $m$ to $\tilde{Q}_T.$ Formally,
\begin{align*}
\tilde{Q}_T=Q_T\cup\left\{m\in [M(T)]:\exists m'\in Q_T~\tau(m)-\tau(m')\in[0,W]\right\}.
\end{align*}
\begin{figure}[t]
	\centering
	\includegraphics[width=13cm,height=1.8cm]{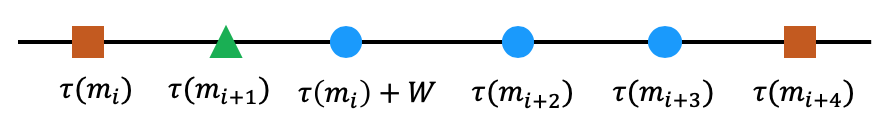}
	\caption{Both episodes $m_i$ and $m_{i+4}$ belong to $Q_T$ (and thus $\tilde{Q}_T$) because $p_{\tau(m_{i})}\notin H_{p,\tau(m_{i})}(\eta)$ and $p_{\tau(m_{i+4})}\notin H_{p,\tau(m_{i+4})}(\eta).$ $m_{i+1}$ is added to $\tilde{Q}_T$ (but not $Q_T$) because $\tau(m_{i+1})-\tau(m_i)\in[0,W].$ $m_{i+2}$ and $m_{i+3}$ belong to neither of $Q_T$ nor $\tilde{Q}_T$ as $p_{\tau(m_{i+2})}\in H_{p,\tau(m_{i+2})}(\eta)$ and $p_{\tau(m_{i+3})}\in H_{p,\tau(m_{i+3})}(\eta).$}
	\label{fig:Q}
\end{figure}
We can formalize the properties of $Q_T$ and $\tilde{Q}_T$ as follows.
\begin{lemma}
	\label{lemma:Q_size}
	Conditioned on $\mathcal{E}_p,$
	$|Q_T|\leq{B_p}/{\eta}.$
\end{lemma}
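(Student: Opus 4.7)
\textbf{Proof plan for Lemma \ref{lemma:Q_size}.} The plan is to show that each $m \in Q_T$ must ``consume'' at least $\eta$ worth of transition variation budget from an interval of length at most $W$ ending just before $\tau(m)$, and that the construction of $Q_T$ forces these intervals to be disjoint. Summing the consumptions then gives the bound.

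First, fix any $m \in Q_T$. By construction, there is a state-action pair $(s,a)$ with $p_{\tau(m)}(\cdot \mid s,a) \notin H_{p,\tau(m)}(s,a;\eta)$, so by the definition of $H_{p,\tau(m)}(s,a;\eta)$ in eqn.~\eqref{eq:Hp},
\[
\bigl\| p_{\tau(m)}(\cdot \mid s,a) - \hat{p}_{\tau(m)}(\cdot \mid s,a) \bigr\|_1 > \rad_{p,\tau(m)}(s,a) + \eta.
\]
Conditioning on $\mathcal{E}_p$ gives $\bar{p}_{\tau(m)}(\cdot \mid s,a) \in H_{p,\tau(m)}(s,a;0)$, i.e., $\|\bar{p}_{\tau(m)}(\cdot\mid s,a) - \hat{p}_{\tau(m)}(\cdot\mid s,a)\|_1 \leq \rad_{p,\tau(m)}(s,a)$. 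Combining with the triangle inequality,
\[
\eta < \bigl\| p_{\tau(m)}(\cdot \mid s,a) - \bar{p}_{\tau(m)}(\cdot \mid s,a) \bigr\|_1 \leq \sum_{q = (\tau(m)-W)\vee 1}^{\tau(m)-1} B_{p,q},
\]
where the last inequality follows by applying Lemma \ref{lemma:dev_r_p} at $t = \tau(m)$ (the first summand in its proof vanishes since $t = \tau(m)$).

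Second, I will observe that the intervals $I_m := [(\tau(m)-W)\vee 1,\ \tau(m)-1]$ for $m \in Q_T$ are pairwise disjoint. Indeed, by the second criterion in the construction of $Q_T$, any two distinct $m,m' \in Q_T$ with $\tau(m') < \tau(m)$ satisfy $\tau(m) - \tau(m') > W$, so $(\tau(m)-W)\vee 1 > \tau(m') > \tau(m')-1 = \max I_{m'}$; hence $\min I_m > \max I_{m'}$. Summing the displayed inequality over $m \in Q_T$ and using the disjointness yields
\[
|Q_T| \cdot \eta < \sum_{m \in Q_T} \sum_{q = (\tau(m)-W)\vee 1}^{\tau(m)-1} B_{p,q} \leq \sum_{q=1}^{T-1} B_{p,q} = B_p,
\]
so $|Q_T| \leq B_p/\eta$, as claimed.

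The only subtle step is the disjointness argument: one needs to be careful to use the inductive nature of the construction of $Q_T$ (the second criterion is checked against all earlier members of $Q_T$, not against members of $\tilde Q_T$), which is exactly what ensures successive start times are strictly more than $W$ apart. Everything else is a direct chain of the triangle inequality, the concentration bound from $\mathcal{E}_p$, and the variation bound from Lemma \ref{lemma:dev_r_p}.
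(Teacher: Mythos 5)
Your proof is correct and follows essentially the same approach as the paper's: each $m\in Q_T$ is charged more than $\eta$ of transition variation within the length-$\le W$ window preceding $\tau(m)$ (via $\mathcal{E}_p$ and the triangle inequality), and the second construction criterion makes these windows disjoint so the charges sum to at most $B_p$. The only cosmetic difference is that the paper argues by contradiction to extract a single witness time $t_m$ with $\|p_{\tau(m)}(\cdot|s,a)-p_{t_m}(\cdot|s,a)\|_1>\eta$ and charges only $[t_m,\tau(m)-1]$, whereas you argue directly through $\bar{p}_{\tau(m)}$ and Lemma \ref{lemma:dev_r_p} and charge the full window; both yield the same bound.
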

\begin{lemma}
	\label{lemma:Q_tilde}
	For any episode $m\notin\tilde{Q}_T,$ we have $p_{\tau(m)}(\cdot|s,a)\in H_{p,\tau(m)}(s,a;\eta)$ for all $s\in\SSS$ and $a\in\AAA_s.$
\end{lemma}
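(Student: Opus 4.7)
\textbf{Proof proposal for Lemma \ref{lemma:Q_tilde}.} My plan is to argue by contrapositive: I will take an arbitrary episode index $m \in [M(T)]$ for which the conclusion fails, i.e.\ there exists some state-action pair $(s,a)$ with $p_{\tau(m)}(\cdot | s,a) \notin H_{p,\tau(m)}(s,a;\eta)$, and show that such an $m$ must lie in $\tilde{Q}_T$. The argument is essentially an unpacking of the sequential construction of $Q_T$ followed by the definition of $\tilde{Q}_T$, so it is mostly bookkeeping rather than new analysis.

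The first step is to trace through what happens when the construction process inspects episode $m$. By assumption, criterion 1 of the construction (existence of a state-action pair with $p_{\tau(m)}(\cdot|s,a) \notin H_{p,\tau(m)}(s,a;\eta)$) is satisfied at that moment. Then exactly one of the following two cases occurs. Case (a): criterion 2 is also satisfied, meaning that for every $m' \in Q_T$ already added (necessarily with $m' < m$) we have $\tau(m) - \tau(m') > W$. In this case the construction adds $m$ to $Q_T$, and since $Q_T \subseteq \tilde{Q}_T$ by definition, we conclude $m \in \tilde{Q}_T$. Case (b): criterion 2 fails, so there exists some $m' \in Q_T$ with $m' < m$ and $\tau(m) - \tau(m') \leq W$. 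Because episodes are indexed in time order and $m' < m$ implies $\tau(m') < \tau(m)$, we in fact get $\tau(m) - \tau(m') \in (0, W] \subseteq [0,W]$. By the extension rule in the definition of $\tilde{Q}_T$, this forces $m \in \tilde{Q}_T$ as well.

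Combining the two cases gives the contrapositive statement, which is exactly Lemma \ref{lemma:Q_tilde}. The only subtle point to be careful about is that ``$Q_T$'' in criterion 2 refers to the partial set at the moment $m$ is examined (so only $m' < m$ are relevant), but because every later addition to $Q_T$ also satisfies $m' < m$ from $m$'s point of view, this distinction is harmless: if criterion 2 fails at the time of examining $m$, the offending $m'$ remains in the final $Q_T$, which is what the definition of $\tilde{Q}_T$ uses. I do not anticipate any real obstacle beyond this small ordering check; the lemma is a direct consequence of the construction.
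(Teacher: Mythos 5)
Your proposal is correct and follows essentially the same route as the paper: the paper argues by contradiction (if $m\notin\tilde{Q}_T$ yet some $p_{\tau(m)}(\cdot|s,a)\notin H_{p,\tau(m)}(s,a;\eta)$, then criterion 1 holds and criterion 2 must hold too, else $m$ would already be in $\tilde{Q}_T$, so $m\in Q_T\subseteq\tilde{Q}_T$), which is just the contradiction-form of your two-case contrapositive. Your extra remark about the partial versus final $Q_T$ is a valid clarification of the same bookkeeping the paper leaves implicit.
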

The proofs of Lemmas \ref{lemma:Q_size} and \ref{lemma:Q_tilde} are presented in Sections \ref{sec:lemma:Q_size} and \ref{sec:lemma:Q_tilde}, respectively.

Together with eqn. \eqref{eq:main1}, we can further decompose the dynamic regret of the \sw~as
\begin{align}
\nonumber&\text{Dyn-Reg}_T(\texttt{SWUCRL2-CW})\\
\nonumber=&\sum_{m\in\tilde{Q}_T}\sum^{\tau(m+1) - 1}_{t = \tau(m)} \left(\rho^*_t - r_t(s_t, a_t)\right)+\sum_{m\in[M_T]\setminus\tilde{Q}_T}\sum^{\tau(m+1) - 1}_{t = \tau(m)} \left(\rho^*_t - r_t(s_t, a_t)\right)\\
\tag{$\spadesuit$}\label{eq:Q_error}	\leq &\sum_{m\in\tilde{Q}_T}\sum^{\tau(m+1) - 1}_{t = \tau(m)} \left(\rho^*_t - r_t(s_t, a_t)\right)\\
&\quad + \sum_{m\in[M_T]\setminus\tilde{Q}_T}\sum^{\tau(m+1) - 1}_{t = \tau(m)} \left\{ \left[ \sum_{s' \in \SSS} p_t(s' | s_t, a_t) \tilde{\gamma}_{\tau(m)}(s') - \tilde{\gamma}_{\tau(m)}(s_t)\right]   + \frac{1}{\sqrt{\tau (m)}}\right\}\tag{$\clubsuit$}\label{eq:episode_error}\\
& \quad + \sum_{m\in[M_T]\setminus\tilde{Q}_T}\sum^{\tau(m+1) - 1}_{t = \tau(m)}\left( 2  \dev_{r, t} + 4 D_{\text{max}} \cdot \dev_{p, t} + 2 D_{\text{max}}\eta \right)\tag{$\vardiamond$} \label{eq:adv_error}\\
& \quad  + \sum_{m\in[M_T]\setminus\tilde{Q}_T}\sum^{\tau(m+1) - 1}_{t = \tau(m)}\left[2 \rad_{r,\tau(m)}(s_t, a_t) + 4 D_{\text{max}}\cdot \rad_{p,\tau (m)}(s_t, a_t)  \right]\tag{$\varheart$} \label{eq:sto_error},
\end{align}
where the last step makes use of Lemma \ref{lemma:Q_tilde} and Proposition \ref{prop:error}.  We accomplish the promised dynamic regret bound by the following four Lemmas that bound the dynamic regret terms (\ref{eq:Q_error}, \ref{eq:episode_error}, \ref{eq:adv_error}, \ref{eq:sto_error}).
\begin{lemma}
	\label{lemma:Q_error}
	Conditioned on $\mathcal{E}_p,$ we have $$\eqref{eq:Q_error}=O\left(\frac{B_pW}{\eta}\right).$$
\end{lemma}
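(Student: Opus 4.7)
The plan is to bound $(\spadesuit)$ by combining the trivial per-round bound on the instantaneous regret with a covering argument that controls the total time span of episodes in $\tilde{Q}_T$. Specifically, since both $\rho^*_t$ and $r_t(s_t,a_t)$ lie in $[0,1]$, each summand satisfies $\rho^*_t - r_t(s_t,a_t) \leq 1$. Thus it suffices to upper bound $\sum_{m\in\tilde{Q}_T}(\tau(m+1)-\tau(m))$, the total number of time steps belonging to episodes in $\tilde{Q}_T$.

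To control this total length, I would carry out a covering argument that ``charges'' each episode in $\tilde{Q}_T$ to some parent episode in $Q_T$. By the very definition of $\tilde{Q}_T$, every $m\in\tilde{Q}_T$ admits an $m'\in Q_T$ with $\tau(m)-\tau(m')\in[0,W]$ (taking $m'=m$ when $m\in Q_T$ itself). Combined with the episode-stopping criterion of \sw, which guarantees $\tau(m+1)-\tau(m)\leq W$ for every episode, this means the entire time interval $[\tau(m),\tau(m+1)-1]$ occupied by such an $m$ is contained in $[\tau(m'),\tau(m')+2W-1]$. Because episodes are pairwise disjoint, the total number of time steps charged to any fixed parent $m'\in Q_T$ is at most $2W$.

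Finally, I would invoke Lemma~\ref{lemma:Q_size}, which, conditioned on $\mathcal{E}_p$, gives $|Q_T|\leq B_p/\eta$. Putting the pieces together yields
\begin{equation*}
\eqref{eq:Q_error} \;\leq\; \sum_{m\in\tilde{Q}_T}(\tau(m+1)-\tau(m)) \;\leq\; 2W\,|Q_T| \;\leq\; \frac{2B_p W}{\eta} \;=\; O\!\left(\frac{B_p W}{\eta}\right),
\end{equation*}
as required.

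The main obstacle is setting up the covering argument cleanly, in particular avoiding double counting: an $m\in\tilde{Q}_T$ may be within $W$ of several parents in $Q_T$, but since we only need an upper bound, charging each $m$ to any one valid parent suffices. The essential ingredient that makes the bound work is the window-based stopping rule (each episode has length at most $W$), which is what turns a constraint on start times into a constraint on total time coverage of $2W$ per parent.
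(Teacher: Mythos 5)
Your proposal is correct and follows essentially the same route as the paper: the paper likewise bounds each summand by $1$, partitions $\tilde{Q}_T$ into groups charged to parents in $Q_T$, uses the window-based stopping rule and the offset $\tau(m)-\tau(m')\in[0,W]$ to bound each group's total length by $2W$, and concludes via Lemma~\ref{lemma:Q_size} that \eqref{eq:Q_error} is at most $2B_pW/\eta$.
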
 
\begin{lemma}\label{lemma:episode_errror}
	Conditioned on events ${\cal E}_r, {\cal E}_p$, we have with probability at least $1 - O(\delta)$ that 
	$$\eqref{eq:episode_error} = \tilde{O}( D_\text{max} [M(T) + \sqrt{T}] ) =\tilde{O}\left( D_\text{max} \left[\frac{SAT}{W} + \sqrt{T}\right] \right).$$
\end{lemma}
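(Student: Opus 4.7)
\textbf{Proof proposal for Lemma \ref{lemma:episode_errror}.} The plan is to bound the bias vector uniformly, then split the expression inside (\ref{eq:episode_error}) into a deterministic telescoping part, a martingale-difference part, and the small-error accumulation $\sum_{m}\sum_{t}\tau(m)^{-1/2}$, and finally to bound the episode count $M(T)$.

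First, I would establish that for every $m\in[M(T)]\setminus\tilde{Q}_T$, the bias vector satisfies $\max_{s\in\SSS}\tilde\gamma_{\tau(m)}(s)\leq 2D_{\max}$. By Lemma \ref{lemma:Q_tilde}, for such $m$ one has $p_{\tau(m)}(\cdot\mid s,a)\in H_{p,\tau(m)}(s,a;\eta)$ for all $(s,a)$, and by Assumption \ref{ass:communicating} the MDP $(\SSS,\AAA,p_{\tau(m)})$ has diameter at most $D_{\max}$. Mimicking the argument for inequality (\ref{eq:lemma_dev_rho_0}) in the proof of Lemma \ref{lemma:dev_rho} (using Property 1 of EVI together with Lemma \ref{lemma:mc}) gives the claimed bound.

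Next, for each $t\in[\tau(m),\tau(m+1)-1]$ with $m\notin\tilde{Q}_T$, I would decompose
\begin{align*}
\sum_{s'\in\SSS}p_t(s'\mid s_t,a_t)\tilde\gamma_{\tau(m)}(s')-\tilde\gamma_{\tau(m)}(s_t) = \underbrace{\bigl[\tilde\gamma_{\tau(m)}(s_{t+1})-\tilde\gamma_{\tau(m)}(s_t)\bigr]}_{\text{telescoping}} + \underbrace{\bigl[\textstyle\sum_{s'}p_t(s'\mid s_t,a_t)\tilde\gamma_{\tau(m)}(s')-\tilde\gamma_{\tau(m)}(s_{t+1})\bigr]}_{\text{martingale increment}}.
\end{align*}
Summing the telescoping term over $t$ within a fixed episode $m$ yields $\tilde\gamma_{\tau(m)}(s_{\tau(m+1)})-\tilde\gamma_{\tau(m)}(s_{\tau(m)})\leq 2D_{\max}$, and summing over all episodes contributes at most $2D_{\max} M(T)$. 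For the martingale increments, since $s_{t+1}\sim p_t(\cdot\mid s_t,a_t)$ each increment has zero conditional mean and magnitude at most $2\|\tilde\gamma_{\tau(m)}\|_\infty\leq 4D_{\max}$; by Azuma-Hoeffding, the total contribution is $O(D_{\max}\sqrt{T\log(1/\delta)})$ with probability at least $1-\delta$.

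For the $\tau(m)^{-1/2}$ term, I would use that episodes are disjoint and $\tau(m)\leq t\leq \tau(m+1)-1$ to bound $\sum_{m}(\tau(m+1)-\tau(m))\tau(m)^{-1/2}\leq\sum_{t=1}^{T}O(t^{-1/2})=O(\sqrt{T})$, handling the small-$t$ boundary crudely. Finally, I would justify $M(T)=\tilde O(SAT/W)$ by the standard episode-termination accounting: episodes end either because $t$ is a multiple of $W$ (contributing at most $T/W$ episodes) or because some state-action count doubled within the window, and within any $W$-length block each of the $SA$ pairs can trigger at most $O(\log W)$ doublings, yielding $\tilde O(SAT/W)$ in total. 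Combining these estimates gives (\ref{eq:episode_error})$=\tilde O(D_{\max}[M(T)+\sqrt{T}])=\tilde O(D_{\max}[SAT/W+\sqrt{T}])$.

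The main obstacle I anticipate is the clean accounting of $M(T)$: specifically, controlling the doubling-triggered episodes under the sliding-window count $N_t(s,a)$ (which, unlike the UCRL2 cumulative count, can decrease as old samples expire from the window), so care is needed to argue that within any window each $(s,a)$ can force only logarithmically many episode restarts.
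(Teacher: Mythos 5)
Your proposal follows essentially the same route as the paper's proof: the same telescoping-plus-martingale decomposition of (\ref{eq:episode_error}) with the bias vector bounded by $2D_{\max}$ via Lemma \ref{lemma:Q_tilde}, Property \ref{property:1} and Lemma \ref{lemma:mc}, an Azuma--Hoeffding bound for the zero-mean increments, the crude $O(\sqrt{T})$ bound on the $\tau(m)^{-1/2}$ terms, and the same block-of-$W$ accounting for $M(T)$. The one step you flag as the main obstacle---that each $(s,a)$ can trigger only $O(\log W)$ doubling restarts per $W$-block even though the sliding-window count $N_t(s,a)$ is non-monotone---is resolved in the paper's Lemma \ref{lemma:number_episode} exactly along the lines you anticipate: all episodes begun in the same $W$-block start within one window of each other, so $N_{\tau(m^\ell)}(s,a)$ still counts every earlier in-block visit, which forces the cumulative visit count to at least double at each such restart and yields $L\leq 2+\log_2 W$.
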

\begin{lemma}\label{lemma:adv_error} 
	With certainty, $$\eqref{eq:adv_error} = O\left((B_r + D_\text{max} B_p) W + D_{\text{max}} T\eta\right).$$
\end{lemma}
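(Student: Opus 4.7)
\textbf{Proof plan for Lemma \ref{lemma:adv_error}.} The plan is to split the sum into three parts, bound the $\eta$-contribution trivially, and bound the $\dev_{r,t}$ and $\dev_{p,t}$ contributions by a double-counting (exchange-of-summation) argument that exploits the fact that every episode has length at most $W$. Since the sum is over a subset $[M(T)] \setminus \tilde{Q}_T$ of all episodes, enlarging to a sum over all episodes only strengthens the inequality, so I will work with the full sum.

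The easy piece is the $\eta$-term: the episodes $\{[\tau(m), \tau(m+1)-1]\}_m$ partition $[T]$, so
$$\sum_{m=1}^{M(T)} \sum_{t=\tau(m)}^{\tau(m+1)-1} 2 D_{\max}\eta \;=\; 2 D_{\max} \eta \cdot T,$$
which matches the $D_{\max} T \eta$ term in the claimed bound.

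The main work is to show
$$\sum_{m=1}^{M(T)} \sum_{t=\tau(m)}^{\tau(m+1)-1} \dev_{r,t} \;\leq\; 2 W \, B_r, \qquad \sum_{m=1}^{M(T)} \sum_{t=\tau(m)}^{\tau(m+1)-1} \dev_{p,t} \;\leq\; 2 W \, B_p,$$
after which plugging in the coefficients $2$ and $4 D_{\max}$ yields the desired $O((B_r + D_{\max} B_p)W)$ bound. For the reward variation (the transition case is identical), I would swap the order of summation: by definition $\dev_{r,t} = \sum_{q=(\tau(m)-W)\vee 1}^{t-1} B_{r,q}$, so
$$\sum_m \sum_{t=\tau(m)}^{\tau(m+1)-1} \dev_{r,t} \;=\; \sum_{q=1}^{T-1} B_{r,q}\cdot N(q),$$
where $N(q)$ counts the pairs $(m,t)$ with $t$ in episode $m$ and $q\in[(\tau(m)-W)\vee 1, t-1]$.

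The key obstacle, and the heart of the argument, is proving $N(q)\leq 2W$ for every $q$. The two controlling facts I would use are: (i) the algorithm's stopping rule guarantees every episode length is at most $W$, so $t \leq \tau(m)+W-1$; and (ii) the condition $q \geq \tau(m)-W$ forces $\tau(m) \leq q+W$. Combining, $t \leq \tau(m)+W-1 \leq q + 2W - 1$, and also $t \geq q+1$. Since each $t\in[T]$ belongs to exactly one episode, the count of eligible $t$'s is at most $|[q+1,q+2W-1]| = 2W-1 \leq 2W$. Hence
$$\sum_m \sum_{t=\tau(m)}^{\tau(m+1)-1} \dev_{r,t} \;\leq\; 2W \sum_{q=1}^{T-1} B_{r,q} \;=\; 2 W B_r,$$
and analogously for $\dev_{p,t}$. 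Collecting the three pieces gives $(\vardiamond) = O((B_r + D_{\max}B_p)W + D_{\max} T \eta)$, which is the statement of the lemma. Since the bound holds for the sum over all episodes, it certainly holds when restricted to $[M(T)]\setminus \tilde{Q}_T$; no additional probabilistic event is needed, so the conclusion is deterministic (``with certainty''), as claimed.
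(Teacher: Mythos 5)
Your proposal is correct and follows essentially the same route as the paper's proof: enlarge the sum to all time steps (using non-negativity of $\dev_{r,t}$, $\dev_{p,t}$), bound the $\eta$-term by $2D_{\max}\eta T$, and bound the variation terms by exchanging the order of summation and showing each $B_{r,q}$ (resp. $B_{p,q}$) is counted at most $2W$ times because every episode lasts at most $W$ steps. The paper phrases the multiplicity count via the block structure induced by the ``$t$ is a multiple of $W$'' termination rule, whereas you derive the same $2W$ bound directly from the episode-length bound and the window constraint $q\geq \tau(m)-W$; these are equivalent.
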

\begin{lemma}\label{lemma:sto_error}
	With certainty, we have 
	$$\eqref{eq:sto_error} = \tilde{O}\left( \frac{D_\text{max} S\sqrt{A} T}{\sqrt{W}}\right) .$$
\end{lemma}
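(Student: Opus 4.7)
\textbf{Proof plan for Lemma \ref{lemma:sto_error}.} The plan is to reduce the sum to a standard UCRL2-type pigeonhole argument, handled one sliding-window worth of time at a time, and then sum over the $\lceil T/W\rceil$ non-overlapping blocks. Since $\rad_{r,\tau(m)}(s,a)=\tilde O(1/\sqrt{N^+_{\tau(m)}(s,a)})$ and $\rad_{p,\tau(m)}(s,a)=\tilde O(\sqrt{S/N^+_{\tau(m)}(s,a)})$, after pulling out constants and the multiplicative factor $D_\text{max}\sqrt{S}$, it suffices to establish
\[
\sum_{m=1}^{M(T)}\sum_{t=\tau(m)}^{\tau(m+1)-1}\frac{1}{\sqrt{N^+_{\tau(m)}(s_t,a_t)}}\;=\;\tilde O\!\left(\frac{\sqrt{SA}\,T}{\sqrt{W}}\right),
\]
from which the claimed $\tilde O(D_\text{max} S\sqrt{A}\,T/\sqrt{W})$ bound follows (the reward term is of smaller order by a $\sqrt{S}$ factor, so it is absorbed).

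First, I would replace $N^+_{\tau(m)}(s_t,a_t)$ by $N^+_t(s_t,a_t)$ up to a constant factor. This is the standard consequence of the episode-termination rule in line \ref{alg:swr-ucrl2-while} of Algorithm \ref{alg:swr-ucrl2}: inside episode $m$ one has $\nu_{\tau(m)}(s,a)\leq N^+_{\tau(m)}(s,a)$, so $N^+_t(s,a)\leq N^+_{\tau(m)}(s,a)+\nu_{\tau(m)}(s,a)\leq 2N^+_{\tau(m)}(s,a)$ for every $t$ in episode $m$ and every state-action pair $(s,a)$. This gives $1/\sqrt{N^+_{\tau(m)}(s_t,a_t)}\leq \sqrt{2}/\sqrt{N^+_t(s_t,a_t)}$, which is what lets us swap the episodic snapshot count for the running count.

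Second, I would partition $[T]$ into $\lceil T/W\rceil$ consecutive blocks of length at most $W$. Fix one such block $B$. Because $N^+_t(s,a)$ in \eqref{eq:N_sa} only counts the $W$ most recent visits, for any $t\in B$ the quantity $N^+_t(s_t,a_t)$ is at least the number of visits to $(s_t,a_t)$ strictly between the start of $B$ and $t-1$. For each pair $(s,a)$ let $n_B(s,a)$ be the number of visits during $B$. Standard pigeonholing gives
\[
\sum_{t\in B}\frac{1}{\sqrt{N^+_t(s_t,a_t)}}\leq \sum_{(s,a)}\sum_{j=1}^{n_B(s,a)}\frac{1}{\sqrt{j}}\leq 2\sum_{(s,a)}\sqrt{n_B(s,a)}\leq 2\sqrt{SA\cdot\!\!\sum_{(s,a)}\!n_B(s,a)}\leq 2\sqrt{SAW},
\]
where the second-to-last step is Cauchy–Schwarz and the last uses $\sum_{(s,a)}n_B(s,a)\leq |B|\leq W$.

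Finally, summing over all $\lceil T/W\rceil$ blocks yields $\tilde O(\sqrt{SAW}\cdot T/W)=\tilde O(\sqrt{SA}\,T/\sqrt{W})$, and multiplying by $D_\text{max}\sqrt{S}$ (absorbed from $\rad_p$) and the logarithmic factors gives the claimed $\tilde O(D_\text{max} S\sqrt{A}\,T/\sqrt{W})$. The only mildly delicate point is that an episode may straddle block boundaries, but this costs at most one extra episode per block, contributing an $O(M(T))$ lower-order term that is already dominated by the $\tilde O(SAT/W)$ slack absorbed into \eqref{eq:episode_error}; so no essential obstacle arises here.
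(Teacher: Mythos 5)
Your proposal is correct and follows essentially the same route as the paper: both decompose the horizon into $\lceil T/W\rceil$ blocks of length $W$ (within which episodes are confined by the first termination criterion, so counts inside a block lie in the sliding window), apply a pigeonhole argument per state-action pair, and finish with Cauchy--Schwarz to get $O(\sqrt{SAW})$ per block. The only (immaterial) difference is that you convert the episode-snapshot count $N^+_{\tau(m)}(s_t,a_t)$ to the running count $N^+_t(s_t,a_t)$ via the doubling criterion and pigeonhole per time step, whereas the paper keeps $N^+_{\tau(m)}$ and applies Lemma 19 of \citep{JakschOA10} to the episode-level visit counts $\nu_m(s,a)$; both hinge on the same two observations and yield the same bound.
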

The proofs of Lemmas \ref{lemma:Q_error}, \ref{lemma:episode_errror}, \ref{lemma:adv_error}, and \ref{lemma:sto_error} are presented in Sections \ref{sec:lemma:Q_error}, \ref{app:pf_lemma_episode_error}, \ref{app:pf_lemma_adv_error}, and \ref{app:pf_lemma_sto_error}, respectively. Putting all these pieces together, we have the dynamic regret of the \sw~is upper bounded as
\begin{align*}
\tilde{O}\left(\frac{B_pW}{\eta}+B_r W + D_\text{max}\left[ B_p W + \frac{S\sqrt{A}T}{\sqrt{W}} + T\eta + \frac{SAT}{W}  + \sqrt{T}\right]\right),
\end{align*}
and by setting $W$ and $\eta$ accordingly, we can conclude the proof. 
\subsection{Proof of Lemma \ref{lemma:Q_size}}
\label{sec:lemma:Q_size} 
We first claim that, for every episode $m\in Q_T$, there exists some state-action pair $(s,a)$ and some time step $t_m\in [(\tau(m)-W \vee 1),\tau(m)-1]$ such that 
\begin{align}
\label{eq:Q_size1}
\|p_{\tau(m)}(\cdot|s,a)-p_{t_m}(\cdot|s,a)\|_1 >\eta.
\end{align}
For contradiction sake, suppose the otherwise, that is, $\| p_{\tau(m)}(\cdot | s, a) - p_t(\cdot | s, a) \|_1 \leq \eta$ for every state-action pair $s, a$ and every time step $t\in [(\tau(m)-W \vee 1),\tau(m)-1]$. For each state-action pair $(s, a)$, consider the following cases on $N_{\tau(m)}(s, a) = \sum^{\tau(m) - 1}_{q = (\tau(m) - W)\vee 1} \mathbf{1}(s_q = s, a_q = a)$:
\begin{itemize}
	\item \textbf{Case 1: $N_{\tau(m)}(s, a) = 0$.} Then $\hat{p}_{\tau(m)}(\cdot | s, a) = \mathbf{0}^\SSS$, and clearly we have $$\|p_{\tau(m)}(\cdot | s, a) - \hat{p}_{\tau(m)}(\cdot | s, a) \|_1 = 1 < \rad p_{\tau(m)}(s, a) < \rad p_{\tau(m)}(s,a) + \eta.$$ 
	\item \textbf{Case 2:  $N_{\tau(m)}(s, a) > 0$.} Then we have the following inequalities:
	\begin{align}
	&\|p_{\tau(m)}(\cdot|s,a)-\bar{p}_{\tau(m)} (\cdot | s, a)\|_1 \nonumber\\
	=& \left\|\frac{1}{N^+_{\tau(m)}(s, a)}  \sum^{\tau(m)-1}_{q =  (\tau(m) - W) \vee 1 } \left[p_{\tau(m)}(\cdot|s,a)- p_q(\cdot | s, a)\right] \cdot \mathbf{1}(s_q = s, a_q = a)\right\|_1\label{eq:Q_size_by_bar_p_def}\\
	\leq&\frac{1}{N^+_{\tau(m)}(s, a)} \sum^{\tau(m)-1}_{q =  (\tau(m) - W) \vee 1 } \left\|p_{\tau(m)}(\cdot|s,a)-p_q(\cdot| s, a) \right\|_1 \cdot \mathbf{1}(s_q = s, a_q = a) \leq \eta.\label{eq:Q_size_by_tri_ineq}
	\end{align}
	Step (\ref{eq:Q_size_by_bar_p_def}) is by the definition of $\bar{p}_{\tau(m)}(\cdot | s, a)$, and step (\ref{eq:Q_size_by_tri_ineq}) is by the triangle inequality.
	Consequently, we have
	\begin{align}
	&\|p_{\tau(m)}(\cdot|s,a)-\hat{p}_{\tau(m)} (\cdot | s, a)\|_1\nonumber\\
	\leq&\|\bar{p}_{\tau(m)}(\cdot|s,a)-\hat{p}_{\tau(m)} (\cdot | s, a)\|_1+\|p_{\tau(m)}(\cdot|s,a)-\bar{p}_{\tau(m)} (\cdot | s, a)\|_1\label{eq:Q_size_by_E_p}\\
	\leq &\rad p_{\tau(m)}(s, a)+\eta. \nonumber
	\end{align} 
	Step (\ref{eq:Q_size_by_E_p}) is true since we condition on the event ${\cal E}_p$, 
\end{itemize}
Combining the two cases, we have shown that $p_{\tau(m)}(\cdot | s, a) \in H_{p, \tau(m)}(s, a, \eta)$ for all $s\in \SSS, a\in \AAA_s$, contradicting to the fact that $m\in Q_T$. Altogether, our claim on inequality (\ref{eq:Q_size1}) is established.

Finally, we provide an upper bound to $|Q_T|$ using (\ref{eq:Q_size1}):
\begin{align}
B_p=&\sum_{t\in[T-1]}\max_{s\in \SSS, a\in \AAA_s }\left\{ \left\| p_{t + 1}(\cdot | s, a) - p_t ( \cdot  |s, a)  \right\|_1\right\}\nonumber \\
\geq&\sum_{m\in Q_T}\sum_{q=t_m}^{\tau(m)-1}\max_{s\in \SSS, a\in \AAA_s }\left\{ \left\| p_{q + 1}(\cdot | s, a) - p_q ( \cdot  |s, a)  \right\|_1\right\}\label{eq:Q_size2} \\
\geq & \sum_{m\in Q_T} \max_{s\in \SSS, a\in \AAA_s }\left\{ \left\|  \sum_{q=t_m}^{\tau(m)-1} (p_{q + 1}(\cdot | s, a) - p_q ( \cdot  |s, a) ) \right\|_1\right\}\label{eq:Q_size3} \\
>& |Q_T|  \eta. \label{eq:Q_size4}
\end{align}
Step \eqref{eq:Q_size2} follows by the second criterion of the construction of $Q_T,$ which ensures that for distinct $m, m'\in Q_T$, the time intervals $[t_m,\tau(m)]$, $[t_{m'},\tau(m')]$ are disjoint. Step (\ref{eq:Q_size3}) is by applying the triangle inequality, for each $m\in Q_T$, on the state-action pair $(s, a)$ that maximizes the term $\|  \sum_{q=t_m}^{\tau(m)-1} (p_{q + 1}(\cdot | s, a) - p_q ( \cdot  |s, a) ) \|_1 = \| (p_{\tau(m)}(\cdot | s, a) - p_{t_m} ( \cdot  |s, a) ) \|_1$. Step (\ref{eq:Q_size4}) is by applying the claimed inequality (\ref{eq:Q_size1}) on each $m\in Q_T$. Altogether, the Lemma is proved.  $\blacksquare$


\subsection{Proof of Lemma \ref{lemma:Q_tilde}}
\label{sec:lemma:Q_tilde} 
We prove by contradiction. Suppose there exists an episode $m\notin\tilde{Q}_T,$ a state $s\in\SSS,$ and an action $a\in\AAA_s$ such that
$p_{\tau(m)}(\cdot|s,a)\notin H_{p,\tau(m)}(s,a;\eta),$ then $m$ should have been added to $Q_T.$  To see this, we first note that episode $m$ trivially satisfies criterion 1 in the construction of $Q_T.$ Moreover, at the time when $m$ is examined, we know that any $m'$ has been added to $Q_T$ should satisfy $\tau(m)-\tau(m')>W$ as otherwise $m$ would have been added to $\tilde{Q}_T.$ Therefore, we have prove $m\in Q_T\subseteq\tilde{Q}_T,$ which is clearly a contradiction.
\subsection{Proof of Lemma \ref{lemma:Q_error}}
\label{sec:lemma:Q_error}
Denote $Q_T=\{m_1,\ldots,m_{|Q_T|}\}.$ By construction, for every element $m\in\tilde{Q}_T,$ there exists an unique $m'\in Q_T$ such that 
\begin{align}\tau(m)-\tau(m')\in[0,W].\end{align} 
We can thereby partition the elements of $\tilde{Q}_T$ into $|Q_T|$ disjoint subsets $\tilde{Q}_T(m_1),\ldots,\tilde{Q}_T(m_{|Q_T|})$ such that 
\begin{enumerate}
	\item Each element $m\in\tilde{Q}_T$ belongs to exactly one $\tilde{Q}_T(m')$ for some $m'\in Q_T.$
	\item Each element $m\in\tilde{Q}_T(m')$ satisfies $\tau(m)-\tau(m')\in[0,W].$
\end{enumerate}
We bound \eqref{eq:Q_error} from above as
\begin{align}
\nonumber&\sum_{m'\in{Q}_T}\sum_{m\in \tilde{Q}_T(m')}\sum^{\tau(m+1) - 1}_{t = \tau(m)} \left(\rho^*_t - r_t(s_t, a_t)\right)\\
\label{eq:Q_error1}\leq&\sum_{m'\in{Q}_T}\sum_{m\in \tilde{Q}_T(m')}\sum^{\tau(m+1) - 1}_{t = \tau(m)} 1\\
\nonumber=&\sum_{m'\in{Q}_T}\sum_{m\in \tilde{Q}_T(m')}\left(\tau(m+1)-\tau(m)\right)\\
\label{eq:Q_error2}\leq&\sum_{m'\in{Q}_T}\left(\max_{m\in \tilde{Q}_T(m')} \tau(m+1)-\tau(m')\right)\\
\nonumber=&\sum_{m'\in{Q}_T}\left[\max_{m\in \tilde{Q}_T(m')}\left( \tau(m+1)-\tau(m)+\tau(m)\right)-\tau(m')\right]\\
\nonumber\leq&\sum_{m'\in{Q}_T}\left[\max_{m\in \tilde{Q}_T(m')}\left( \tau(m+1)-\tau(m)\right)+\max_{m\in \tilde{Q}_T(m')}\tau(m)-\tau(m')\right]\\
\label{eq:Q_error3}\leq&\sum_{m'\in{Q}_T}\left[W+W\right]\\
\nonumber=&2W|Q_T|\\
\nonumber\leq & 2B_pW/\eta.
\end{align}
Here, inequality \eqref{eq:Q_error1} holds by boundedness of rewards, inequality \eqref{eq:Q_error2} follows from the fact that episodes are mutually disjoint, inequality \eqref{eq:Q_error3} makes the observations that each episode can last for at most $W$ time steps (imposed by the \sw) as well as criterion 2 of the construction of $\tilde{Q}_T(m')$'s, and the last step uses Lemma \ref{lemma:Q_size}.
\subsection{Proof of Lemma \ref{lemma:episode_errror}}\label{app:pf_lemma_episode_error}
We first give an upper bound for $M(T),$ the total number of the episodes.
\begin{lemma}\label{lemma:number_episode}
	Conditioned on events ${\cal E}_r, {\cal E}_p$, we have $M(T) \leq SA( 2+ \log_2 W) T / W = \tilde{O}\left(SAT / W\right)$ with certainty.
\end{lemma}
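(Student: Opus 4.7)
The plan is to classify each episode by the criterion that causes its termination and then bound the two types separately. Inspecting the \texttt{while} loop of Algorithm~\ref{alg:swr-ucrl2}, an episode $m$ ends for one of two reasons: (Type~A) the current time $t$ is a multiple of $W$, or (Type~B) the ``doubling'' condition $\nu_m(s,a) \geq N^+_{\tau(m)}(s,a)$ is triggered by some state-action pair $(s,a)$. The Type~A count is trivially at most $\lceil T/W \rceil$, since any triggering time must be a multiple of $W$ lying in $[T]$. All effort therefore concentrates on Type~B.

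For Type~B, I would associate each such episode with a pair $(s,a)$ whose counter triggered the doubling, and partition $[T]$ into the blocks $W_i = [(i-1)W+1, iW]$ for $i=1,\ldots,\lceil T/W\rceil$. The goal is to show that for every fixed block $W_i$ and every fixed pair $(s,a)$, the number $k$ of Type~B episodes whose starting time $\tau(m_j)\in W_i$ and whose termination is charged to $(s,a)$ satisfies $k \leq 2 + \log_2 W$. Multiplying by $SA$ and by $\lceil T/W\rceil$ and adding the Type~A contribution yields the claimed $\tilde{O}(SAT/W)$ bound (with the specific constant $SA(2+\log_2 W)$ absorbing the additive $\lceil T/W\rceil$ Type~A term, using $SA\geq 1$).

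The heart of the argument—and the main obstacle—is proving the within-block bound on $k$, because the sliding-window counter $N^+_{\tau(\cdot)}(s,a)$ is not monotone in $m$, unlike the standard UCRL2 setting. The key observation is that for any $j<k$, since $\tau(m_{j+1})\leq iW$ and $\tau(m_1)\geq (i-1)W+1$, we have $\tau(m_{j+1})-W \leq (i-1)W < \tau(m_1)$, so the sliding window $[(\tau(m_{j+1})-W)\vee 1,\,\tau(m_{j+1})-1]$ entirely contains episodes $m_1,\ldots,m_j$. This gives $N^+_{\tau(m_{j+1})}(s,a)\geq \sum_{i=1}^j \nu_{m_i}(s,a)$, which combined with the doubling condition $\nu_{m_{j+1}}(s,a)\geq N^+_{\tau(m_{j+1})}(s,a)$ yields the geometric growth $\sum_{i=1}^{j+1}\nu_{m_i}(s,a)\geq 2\sum_{i=1}^j\nu_{m_i}(s,a)$. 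Since $\nu_{m_1}(s,a)\geq 1$, iterating gives $\sum_{i=1}^k \nu_{m_i}(s,a)\geq 2^{k-1}$. On the other hand, all these visits occur inside $[(i-1)W+1,(i+1)W-1]$, whose length is less than $2W$, so $2^{k-1}<2W$ and $k\leq 2+\log_2 W$, completing the step. The crucial structural fact exploited here is the matching of the episode-length cap and the sliding-window length (both equal to $W$), which keeps the ``look-back'' window anchored before the block starts.
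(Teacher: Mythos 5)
Your proof is correct and follows essentially the same route as the paper's: partition the horizon into length-$W$ blocks, bound the first-criterion terminations by $T/W$, and for each block and each pair $(s,a)$ exploit the fact that the sliding window at a later doubling trigger contains all earlier triggered episodes in that block, yielding geometric growth of the visit counts and hence at most $2+\log_2 W$ triggers per pair per block. The only cosmetic difference is that you iterate the cumulative sum $\sum_{i}\nu_{m_i}(s,a)\geq 2^{k-1}$ and cap it by $2W$, whereas the paper bounds the last count $\psi^{L}=2^{L-2}\leq W$; both give the same conclusion.
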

\begin{proof}
	First, to demonstrate the bound for $M(T)$, it suffices to show that there are at most $SA( 2 + \log_2 W)$ many episodes in each of the following cases: (1) between time steps $1$ and $W$, (2) between time steps $j W$ and $(j + 1)W$, for any $j\in \{1, \ldots, \lfloor T/ W\rfloor - 1 \}$, (3) between time steps $\lfloor T/ W\rfloor \cdot W$ and $T$. We focus on case (2), and the edge cases (1, 3) can be analyzed similarly.
	
	Between time steps $j W$ and $(j + 1)W$, a new episode $m+1$ is started when the second criterion $\nu_m(s_t, \tilde{\pi}_m(s_t)) < N^+_{\tau(m)}(s_t, \tilde{\pi}_m(s_t))$ is violated during the current episode $m$. We first illustrate how the second criterion is invoked for a fixed state-action pair $(s, a)$, and then bound the number of invocations due to $(s, a)$. Now, let's denote $m^1, \ldots, m^L$ as the episode indexes, where $j W \leq \tau(m^1) < \tau(m^2) < \ldots < \tau (m^L) < (j + 1)W$, and the second criterion for $(s, a)$ is invoked during $m^\ell$ for $1\leq \ell \leq L$. That is, for each $\ell\in \{1, \ldots, L\}$, the DM breaks the \textbf{while} loop for episode $m^\ell$ because $\nu_{m^\ell}(s, a) = N^+_{\tau(m^\ell)}(s, a)$, leading to the new episode $m^\ell +1$. 
	
	To demonstrate our claimed bound for $M(T)$, we show that $L \leq 2 + \log_2 W$ as follows. To ease the notation, let's denote $\psi^\ell = \nu_{m^\ell}(s, a)$. We first observe that $\psi^1 = N^+_{\tau(m^1)}(s, a) \geq 1$. Next, we note that for $\ell \in \{2, \ldots L\}$, we have $\psi^\ell \geq \sum^{\ell - 1}_{k  = 1}\psi^k$. \footnote{We proceed slightly differently from the stationary case, where the corresponding $N_t(s, a)$ is non-decreasing in $t$ \citep{JakschOA10}, which is clearly not true here due to the use of sliding windows} Indeed, we know that for each $\ell$ we have $(\tau( m ^\ell + 1) - 1) - \tau(m^1) \leq W$, by our assumption on $m^1, \ldots, m^\ell$. Consequently, the counting sum in $N_{\tau(m^\ell)}(s, a)$, which counts the occurrences of $(s, a)$ in the previous $W$ time steps, must have counted those occurrences corresponding to $\psi^1, \ldots, \psi^{\ell - 1}$. The worst case sequence of $\psi^1, \psi^2, \ldots, \psi^L$ that yields the largest $L$ is when $\psi^1 = \psi^2 = 1$, $\psi^3 = 2$, and more generally $\psi^\ell = 2^{\ell - 2}$ for $\ell \geq 2$. Since $\psi^\ell \leq W$ for all $W$, we clearly have $L \leq 2 +\log_2 W$, proving our claimed bound on $L$.
	Altogether, during the $T$ time steps, there are at most $(SAT(2 +\log_2 W )) / W$ episodes due to the second criterion and $T/W$ due to the first, leading to our desired bound on $M(T)$.	
\end{proof}

Next, we establish the bound for \eqref{eq:episode_error}.  By Lemma \ref{lemma:Q_tilde}, we know that $\tilde{\gamma}_{\tau (m)}(s) \in [0, 2 D_\text{max}]$ for all $m\in[M(T)]\setminus \tilde{Q}_T$ and $s.$ For each episode $m\in[M(T)]\setminus \tilde{Q}_T$, we have 
\begin{align}
&\sum^{\tau(m+1) - 1}_{t = \tau(m)}  \left[ \sum_{s' \in \SSS} p_t(s' | s_t, a_t) \tilde{\gamma}_{\tau(m)}(s') - \tilde{\gamma}_{\tau(m)}(s_t)\right] \nonumber\\
= & \underbrace{- \tilde{\gamma}_{\tau(m)}(s_{\tau(m)}) + \tilde{\gamma}_{\tau(m)}(s_{\tau(m) + 1})}_{\leq 2 D_\text{max}} +\sum^{\tau(m+1) - 1}_{t = \tau(m)}  \underbrace{\left[ \sum_{s' \in \SSS} p_t(s' | s_t, a_t) \tilde{\gamma}_{\tau(m)}(s') - \tilde{\gamma}_{\tau(m)}(s_{t+1})\right]}_{= Y_t}.\label{eq:pf_episode_error_terms}
\end{align}
Summing (\ref{eq:pf_episode_error_terms}) over $m\in[M(T)]\setminus \tilde{Q}_T$ we get 
\begin{align}
\nonumber&\sum_{m\in[M(T)]\setminus \tilde{Q}_T}\sum^{\tau(m+1) - 1}_{t = \tau(m)}  \left[ \sum_{s' \in \SSS} p_t(s' | s_t, a_t) \tilde{\gamma}_{\tau(m)}(s') - \tilde{\gamma}_{\tau(m)}(s_t)\right]  \\
\leq &2 D_\text{max}\cdot \left(M(T)-|\tilde{Q}_T|\right)+\sum_{m\in[M(T)]\setminus \tilde{Q}_T}\sum^{\tau(m+1) - 1}_{t = \tau(m)}Y_t.
\end{align}
Define the filtration ${\cal H}_{t-1} = \{(s_q, a_q, R_q(s_q, a_q))\}^t_{q=1}$. Now, we know that $\mathbb{E}[Y_t | {\cal H}_{t-1}] = 0$, $Y_t$ is $\sigma({\cal H}_t)$-measurable, and $| Y_t | \leq 2 D_\text{max}$. Therefore, we can apply the Hoeffding inequality \citep{Hoeffding63} to show that 
$$
\sum_{m\in[M(T)]\setminus \tilde{Q}_T}\sum^{\tau(m+1) - 1}_{t = \tau(m)} \left[ \sum_{s' \in \SSS} p_t(s' | s_t, a_t) \tilde{\gamma}_{\tau(m)}(s') - \tilde{\gamma}_{\tau(m)}(s_t)\right]   = O\left(D_\text{max} \cdot  M(T) + D_\text{max} \sqrt{T\log \frac{1}{\delta}}\right)
$$
with probability $1 - O(\delta),$ where we use the facts that $M(T)-|\tilde{Q}_T|\leq M(T)$ and $\sum_{m\in[M(T)]\setminus \tilde{Q}_T}\sum^{\tau(m+1) - 1}_{t = \tau(m)}1\leq T.$ Finally, note that
$$
\sum_{m\in[M(T)]\setminus \tilde{Q}_T}\sum^{\tau(m+1) - 1}_{t = \tau(m)}\frac{1}{\sqrt{\tau (m)}}\leq\sum^{M(T)}_{m=1}\sum^{\tau(m+1) - 1}_{t = \tau(m)} \frac{1}{\sqrt{\tau (m)}} \leq \sum^{\lceil T / W \rceil}_{i = 1} \frac{W}{\sqrt{i W}} = O(\sqrt{T}).
$$
Hence, the Lemma is proved.
\subsection{Proof of Lemma \ref{lemma:adv_error}}\label{app:pf_lemma_adv_error}
We first note that
$$\sum_{m\in[M_T]\setminus\tilde{Q}_T}\sum^{\tau(m+1) - 1}_{t = \tau(m)}\left(2  \dev_{r, t} + 4 D_{\text{max}} \cdot \dev_{p, t} + 2 D_{\text{max}}\eta \right)\leq\sum_{t=1}^T\left(2  \dev_{r, t} + 4 D_{\text{max}} \cdot \dev_{p, t} + 2 D_{\text{max}}\eta \right),$$
since $\dev_{r, t}\geq 0$ and $\dev_{p, t}\geq 0$ for all $t$. 
We can thus work with the latter quantity. 

We first bound $\sum^T_{t=1} \dev_{r, t}$. Now, recall the definition that, for time $t$ in episode $m$, we have defined $\dev_{r, t} = \sum^{t-1}_{q = \tau(m) - W} B_{r, q}$. Clearly, for $i W \leq q <  (i+1) W$, the summand $B_{r, q}$ only appears in $\dev_{r, t}$ for $ i W \leq q < t \leq (i + 2) W $, since each episode is contained in $\{i' W , \ldots, (i' + 1) W\}$ by our episode termination criteria ($t$ is a multiple of $W$) of the \sw. Altogether, we have
\begin{equation}\label{eq:pf_lemma_adv_error_r}
2\sum^T_{t=1} \dev_{r, t} \leq 2\sum^{T-1}_{t=1} B_{r, t}  W  = 2 B_r W . 
\end{equation}
Next, we bound $\sum^T_{t=1} \dev_{p, t}$. Now, we know that $\tau(m+1) - \tau(m)\leq W$ by our episode termination criteria ($t$ is a multiple of $W$) of the \sw. Consequently, 
\begin{equation}\label{eq:pf_lemma_adv_error_p}
4 D_\text{max} \sum^T_{t=1} \dev_{p, t} \leq 4 D_\text{max}\sum^{T-1}_{t=1} B_{p, t}  W = 4 D_\text{max} B_p W . 
\end{equation}
Finally, combining (\ref{eq:pf_lemma_adv_error_r}, \ref{eq:pf_lemma_adv_error_p}) with $2 D_\text{max} \sum^T_{t=1} \eta  $, the Lemma is established.
\subsection{Proof of Lemma \ref{lemma:sto_error}}\label{app:pf_lemma_sto_error}
Due to non-negativity of $\rad r_{t}(s,a)$'s and $\rad p_{t}(s,a)$'s, we have
\begin{align}
&\sum_{m\in[M_T]\setminus\tilde{Q}_T}\sum^{\tau(m+1) - 1}_{t = \tau(m)} \rad r_{\tau(m)}(s_t, a_t)\leq\sum^{M(T)}_{m=1}\sum^{\tau(m+1) - 1}_{t = \tau(m)} \rad r_{\tau(m)}(s_t, a_t),\\
&\sum_{m\in[M_T]\setminus\tilde{Q}_T}\sum^{\tau(m+1) - 1}_{t = \tau(m)} \rad p_{\tau(m)}(s_t, a_t)\leq\sum^{M(T)}_{m=1}\sum^{\tau(m+1) - 1}_{t = \tau(m)} \rad p_{\tau(m)}(s_t, a_t)
\end{align}
We thus first show that, with certainty,
\begin{align}
&\sum^{M(T)}_{m=1}\sum^{\tau(m+1) - 1}_{t = \tau(m)} \rad r_{\tau(m)}(s_t, a_t) = \sum^{M(T)}_{m=1}\sum^{\tau(m+1) - 1}_{t = \tau(m)}2\sqrt{\frac{ 2\ln ( SA T / \delta ) }{N^+_{\tau (m)} (s_t, a_t)}} = \tilde{O}\left(\frac{\sqrt{SA}T}{ \sqrt{W}}\right), \label{eq:pf_sto_error_for_r}\\
&\sum^{M(T)}_{m=1}\sum^{\tau(m+1) - 1}_{t = \tau(m)} \rad p_{\tau(m)}(s_t, a_t) = \sum^{M(T)}_{m=1}\sum^{\tau(m+1) - 1}_{t = \tau(m)}2\sqrt{\frac{2S\log\left(ATW/\delta\right)}{N^+_{\tau (m)} (s_t, a_t)}} = \tilde{O}\left(\frac{S\sqrt{A}T}{ \sqrt{W}}\right). \label{eq:pf_sto_error_for_p}
\end{align}
We analyze by considering the dynamics of the algorithm in each consecutive block of $W$ time steps, in a way similar to the proof of Lemma  \ref{lemma:episode_errror}. Consider the episodes indexes $m_0, m_1 \ldots, m_{\lceil T / W\rceil}, m_{\lceil T / W\rceil + 1},$ where $\tau(m_0) = 1$, and $\tau(m_j) = j W$ for $j\in \{1, \ldots, \lceil T / W\rceil\}$, and $m_{\lceil T / W\rceil + 1} = m(T) + 1$ (so that $\tau(m_{\lceil T / W\rceil + 1} - 1)$ is the time index for the last episode in the horizon).  

To prove (\ref{eq:pf_sto_error_for_r}, \ref{eq:pf_sto_error_for_p}), it suffices to show that, for each $j\in \{0, 1, \ldots, \lfloor T / W \rfloor\}$, we have
\begin{equation}\label{eq:pf_sto_error_suffice}
\sum^{m_{j+1} - 1}_{m = m_j} \sum^{\tau(m + 1) - 1}_{t = \tau(m)} \frac{1}{\sqrt{N^+_{\tau (m)} (s_t, a_t)}} = O\left( \sqrt{SAW} \right).
\end{equation}
Without loss of generality, we assume that $j \in \{1, \ldots, \lfloor T / W \rfloor - 1\}$, and the edge cases of $j = 0, \lfloor T / W \rfloor$ can be analyzed similarly. 

Now, we fix a state-action pair $(s, a)$ and focus on the summands in (\ref{eq:pf_sto_error_suffice}):
\begin{equation}\label{eq:pf_sto_error_1st}
\sum^{m_{j+1} - 1}_{m = m_j} \sum^{\tau(m + 1) - 1}_{t = \tau(m)} \frac{\mathbf{1}((s_t, a_t) = (s, a))}{\sqrt{N^+_{\tau (m)} (s_t, a_t)}} = \sum^{m_{j+1} - 1}_{m = m_j} \frac{\nu_m(s, a)}{\sqrt{N^+_{\tau (m)} (s, a)}} 
\end{equation}
It is important to observe that:
\begin{enumerate}
	\item It holds that $\nu_{m_j}(s, a) \leq N_{\tau (m_j)} (s, a)$, by the episode termination criteria of the \sw, 
	\item For $m \in \{m_j + 1, \ldots, m_{j+1} - 1\}$, we have $\sum^{m - 1}_{m' = m_j}\nu_{m'}(s, a) \leq N_{\tau (m)} (s, a)$ . Indeed, we know that episodes $m_j, \ldots, m_{j+1} - 1$ are inside the time interval $\{jW , \ldots, (j+1)W\}$. Consequently, the counts of ``$(s_t, a_t) = (s, a)$'' associated with $\{\nu_{m'}(s, a)\}^{m - 1}_{m' = m_j}$ are contained in the $W$ time steps preceding $\tau(m)$, hence the desired inequality. 
\end{enumerate}
With these two observations, we have 
\begin{align}
\eqref{eq:pf_sto_error_1st} &\leq \frac{\nu_{m_j}(s, a)}{\sqrt{\max\{\nu_{m_j}(s, a), 1\}}} + \sum^{m_{j+1} - 1}_{m = m_j + 1} \frac{\nu_m(s, a)}{\sqrt{ \max\{ \sum^{m - 1}_{m' = m_j}\nu_{m'}(s, a) ,  1\} }} \nonumber\\
&\leq \sqrt{\max\{\nu_{m_j}(s, a), 1\}} + (\sqrt{2} + 1)\sqrt{\max\left\{\sum^{m_{j+1} - 1}_{m = m_j} \nu_{m}(s, a), 1\right\}} \label{eq:pf_sto_error_2nd}\\
&\leq (\sqrt{2} + 2)\sqrt{\max\left\{\sum^{(j+1)W - 1}_{t = jW} \mathbf{1}((s_t,a_t) = (s, a)), 1\right\}}\label{eq:pf_sto_error_3rd}.
\end{align}
Step (\ref{eq:pf_sto_error_2nd}) is by Lemma 19 in \citep{JakschOA10}, which bounds the sum in the previous line. Step (\ref{eq:pf_sto_error_3rd}) is by the fact that episodes $m_j, \ldots, m_{j+1} - 1$ partitions the time interval $jW, \ldots, (j+1)W  -1$, and $\nu_{m}(s, a)$ counts the occurrences of $(s_t, a_t) = (s, a)$ in episode $m$. Finally, observe that $(\ref{eq:pf_sto_error_1st}) = 0$ if $\nu_m(s, a) = 0$ for all $m\in \{m_j, \ldots, m_{j + 1} - 1\}$. Thus, we can refine the bound in 
(\ref{eq:pf_sto_error_3rd}) to 
\begin{equation}\label{eq:pf_sto_error_4th}
\eqref{eq:pf_sto_error_1st}\leq (\sqrt{2} + 2)\sqrt{\sum^{(j+1)W - 1}_{t = jW} \mathbf{1}((s_t,a_t) = (s, a))}.
\end{equation}
The required inequality (\ref{eq:pf_sto_error_suffice}) is finally proved by summing (\ref{eq:pf_sto_error_4th}) over $s\in \SSS a\in \AAA$ and applying Cauchy Schwartz:
\begin{align*}
&\sum^{m_{j+1} - 1}_{m = m_j} \sum^{\tau(m + 1) - 1}_{t = \tau(m)} \frac{1}{\sqrt{N^+_{\tau (m)} (s_t, a_t)}} = \sum_{s\in \SSS, a\in \AAA_s}\sum^{m_{j+1} - 1}_{m = m_j} \frac{\nu_m(s, a)}{\sqrt{N^+_{\tau (m)} (s, a)}}\nonumber\\ 
\leq & (\sqrt{2} + 2) \sum_{s\in \SSS, a\in \AAA_s} \sqrt{\sum^{(j+1)W - 1}_{t = jW} \mathbf{1}((s_t,a_t) = (s, a))} \nonumber\\
\leq &(\sqrt{2} + 2)\sqrt{SAW} \nonumber.
\end{align*}
Altogether, the Lemma is proved.
\section{Proof of Theorem \ref{thm:borl}}\label{sec:thm:borl}
To begin, we consider the following regret decomposition, for any choice of $(W^{\dag},\eta^{\dag})\in J,$ we have
\begin{align}
\nonumber\text{Dyn-Reg}_T(\texttt{BORL})=&\sum_{i=1}^{\lceil T/H\rceil}\E\left[\sum_{t=(i-1)H+1}^{i\cdot H\wedge T}\rho_t^*-\reward_i\left(W_i,\eta_i,s_{(i-1)H+1}\right)\right]\\
\nonumber=&\sum_{i=1}^{\lceil T/H\rceil}\E\left[\sum_{t=(i-1)H+1}^{i\cdot H\wedge T}\rho_t^*-\reward_i\left(W^{\dag},\eta^{\dag},s_{(i-1)H+1}\right)\right]\\
\label{eq:borl_regret1}+&\sum_{i=1}^{\lceil T/H\rceil}\E\left[\sum_{i=1}^{\lceil T/H\rceil}\reward_i\left(W^{\dag},\eta^{\dag},s_{(i-1)H+1}\right)-\reward_i\left(W_i,\eta_i,s_{(i-1)H+1}\right)\right].
\end{align}
For the first term in eqn. \eqref{eq:borl_regret1}, we can apply the results from Theorem \ref{thm:main1} to each block $i\in\lceil T/H\rceil,$ \ie,
\begin{align}
\sum_{t=(i-1)H+1}^{i\cdot H\wedge T}\left[\rho_t^*-\reward\left(W^{\dag},\eta^{\dag},s_{(i-1)H+1}\right)\right]=&\tilde{O}\left(\frac{B_p(i)W^{\dag}}{\eta^{\dag}}+B_r(i) W^{\dag}+ D_\text{max}\left[B_p(i) W^{\dag} +\frac{S\sqrt{A}H}{\sqrt{W^{\dag}}} + H\eta^{\dag} + \frac{SAH}{W^{\dag}}  + \sqrt{H}\right]\right),\end{align}
where we have defined $$B_r(i)=\left(\sum_{t=(i-1)H+1}^{i\cdot H\wedge T}B_{r,t}\right),\quad B_p(i)=\left(\sum_{t=(i-1)H+1}^{i\cdot H\wedge T}B_{p,t}\right)$$ for brevity. For the second term, it captures the additional rewards of the DM were it uses the fixed parameters $(W^{\dag},\eta^{\dag})$ throughout w.r.t. the trajectory on the starting states of each block by the \borl,  \ie, $s_{1},\ldots,s_{(i-1)H+1},\ldots,s_{(\lceil T/H\rceil-1)H+1},$ and this is exactly the regret of the EXP3.P algorithm when it is applied to a $\Delta$-arm adaptive adversarial bandit problem with reward from $[0,H].$ Therefore, for any choice of $(W^{\dag},\eta^{\dag}),$ we can upper bound this by $$\tilde{O}\left(H\sqrt{\Delta T/H}\right)=\tilde{O}\left(\sqrt{TH}\right)$$ as $\Delta=O\left(\ln^2 T\right).$ Summing these two, the regret of the \borl~is 
\begin{align}
\label{eq:borl_regret2}
\text{Dyn-Reg}_T(\texttt{BORL}) = \tilde{O}\left(\frac{B_pW^{\dag}}{\eta^{\dag}}+B_r W^{\dag} + D_\text{max}\left[ B_p W^{\dag} + \frac{S\sqrt{A}T}{\sqrt{W^{\dag}}} + T\eta^{\dag} + \frac{SAT}{W^{\dag}}  + \sqrt{TH}\right]\right).
\end{align}
By virtue of the EXP3.P, the \borl~is able to adapt to any choice of $(W^{\dag},\eta{\dag})\in J.$ Note that 
\begin{align}
&H\geq W*=\frac{3S^{\frac{2}{3}}A^{\frac{1}{2}} T^{\frac{1}{2}}}{(B_r + B_p+1)^{\frac{1}{2}}}\geq \frac{3T^{\frac{1}{2}}}{(3T)^{\frac{1}{2}}}\geq1,\\
&S^{\frac{1}{3}}A^{\frac{1}{4}}\geq\eta^*=\frac{(B_p+1)^{\frac{1}{2}}S^{\frac{1}{3}}A^{\frac{1}{4}}}{(B_r + B_p+1)^{\frac{1}{4}}T^{\frac{1}{4}}}\geq\frac{S^{\frac{1}{3}}A^{\frac{1}{4}}}{2T^{\frac{1}{2}}}=S^{\frac{1}{3}}A^{\frac{1}{4}}\Phi.
\end{align}
Therefore, there must exists some $j^{\dag}$ and $k^{\dag}$ such that
\begin{align}
H^{j^{\dag}/\Delta_W}\leq W^*\leq H^{(j^{\dag}+1)/\Delta_W},\quad S^{\frac{1}{3}}A^{\frac{1}{4}}\Phi^{k^{\dag}/\Delta_{\eta}}\geq\eta^*\geq S^{\frac{1}{3}}A^{\frac{1}{4}}\Phi^{(k^{\dag}+1)/\Delta_{\eta}}
\end{align}
By adapting $W^{\dag}$ to $H^{j^{\dag}/\Delta_W}$ and $\eta^{\dag}$ to $\Phi^{k^{\dag}/\Delta_{\eta}},$ we further upper bound eqn. \eqref{eq:borl_regret2} as
\begin{align}
&\nonumber\text{Dyn-Reg}_T(\texttt{BORL})=\tilde{O}\left(D_{\max}(B_r+B_p+1)^{\frac{1}{4}}S^{\frac{2}{3}}A^{\frac{1}{2}}T^{\frac{3}{4}}\right).
\end{align} 
where we use $H^{1/\Delta_W}=\exp(1)$ and $\Phi^{1/\Delta_{\eta}}=\exp(-1)$ in the last step.

\section{Proof of Proposition \ref{prop:diameter}}\label{sec:prop:diameter}
We first show the following lemma.
\begin{lemma}\label{lemma:diameter_Hp}
	Conditioned on ${\cal E}_p,$ there exists a state transition distribution $p\in H_{p,t}(0)$ such that for every pair of states $s,s'\in\SSS,$ 
	$$p(s'|s,a_{(s,s')})\geq \zeta$$ for every time step $t\in[T].$
\end{lemma}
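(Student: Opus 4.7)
The plan is to exhibit an explicit $p$ by reusing $\bar p_t$ whenever it is well defined, and substituting the true $p_t$ in the degenerate cases. Concretely, for each state-action pair $(s,a)$ I would set
\[
p(\cdot\mid s,a) \;=\; \begin{cases} \bar p_t(\cdot\mid s,a) & \text{if } N_t(s,a)\ge 1,\\ p_t(\cdot\mid s,a) & \text{if } N_t(s,a)=0. \end{cases}
\]
Membership of this $p$ in $H_{p,t}(0)$ has to be checked on both branches. When $N_t(s,a)\ge 1$, the quantity $\bar p_t(\cdot\mid s,a)$ is a bona fide probability distribution (a convex combination of the $p_q(\cdot\mid s,a)$), and event ${\cal E}_p$ directly yields $\bar p_t(\cdot\mid s,a)\in H_{p,t}(s,a;0)$. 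When $N_t(s,a)=0$, the confidence radius $\rad_{p,t}(s,a)=2\sqrt{2S\log(SAT/\delta)}$ is at least $2$ for any reasonable choice of $\delta$ (in particular $\delta=T^{-1}$) and $S\ge 1$, so $H_{p,t}(s,a;0)$ contains the entire simplex $\Delta^{\SSS}$, and the choice $p_t(\cdot\mid s,a)$ is trivially admissible.

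It then remains to verify the uniform lower bound $p(s'\mid s,a_{(s,s')})\ge \zeta$ for every pair $(s,s')\in\SSS\times\SSS$. If $N_t(s,a_{(s,s')})=0$, one invokes Assumption~\ref{ass:alternative} directly on $p_t$: $p(s'\mid s,a_{(s,s')})=p_t(s'\mid s,a_{(s,s')})\ge\zeta$. Otherwise, by the definition in eqn.~(\ref{eq:p_bar}),
\[
\bar p_t(s'\mid s,a_{(s,s')}) \;=\; \frac{1}{N_t(s,a_{(s,s')})}\sum_{q=(\tau(m)-W)\vee 1}^{t-1} p_q(s'\mid s,a_{(s,s')})\,\mathbf 1(s_q=s, a_q=a_{(s,s')}),
\]
and Assumption~\ref{ass:alternative} gives $p_q(s'\mid s,a_{(s,s')})\ge\zeta$ for \emph{every} $q\in[T]$, so the convex average is itself at least $\zeta$.

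I expect the only mild obstacle to be a careful treatment of the unvisited-pair edge case $N_t(s,a)=0$, which requires noting that $\rad_{p,t}(s,a)\ge 2$ so that $H_{p,t}(s,a;0)$ covers the full simplex; everything else is a direct application of Assumption~\ref{ass:alternative} together with ${\cal E}_p$.
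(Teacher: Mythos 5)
Your proposal is correct and follows essentially the same argument as the paper's proof: the same case split on $N_t(s,a_{(s,s')})=0$ versus $N_t(s,a_{(s,s')})>0$, using the true $p_t$ in the unvisited case (where the confidence radius makes $H_{p,t}(s,a;0)$ contain the whole simplex) and $\bar p_t$ in the visited case (membership from ${\cal E}_p$, and the lower bound $\zeta$ from the convex-combination structure of $\bar p_t$ together with Assumption \ref{ass:alternative}).
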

The proof of the lemma is provided in Section \ref{sec:lemma:diameter_Hp}. We then consider the state transition distribution $p\in H_{p,t}(0)$ specified in Lemma \ref{lemma:diameter_Hp}. For an arbitrary state $s'\in\SSS,$ we consider the policy $\pi$ such that $\pi(s)=a_{(s,s')}$ for all $s\in\SSS$ (see Assumption \ref{ass:alternative} for the definition of $a_{(s, s')}$). Starting from an arbitrary state $s\in\SSS,$ the policy either hits state $s'$ in the next step, which happens with probability at least $\zeta,$ or it transits to another state $s''\neq s',$ which would then hit state $s'$ in the next step with probability at least $\zeta.$ Therefore, the hitting process stochstically dominates the Bernoulli process with success probability $\zeta,$ and thus the expected hitting time is at most $1/\zeta.$
\subsection{Proof of Lemma \ref{lemma:diameter_Hp}}\label{sec:lemma:diameter_Hp}
First, we recall the definition of defintion of confidence region $H_{p,t}(s,a;0)$ in eqn. \ref{eq:Hp},
\begin{equation*} 
H_{p, t}(s, a; 0) = \left\{ \dot{p} \in \Delta^\SSS :\left\| \dot{p}(\cdot|s,a) - \hat{p}_t (\cdot | s, a) \right\|_1 \leq \rad_{p,t}(s, a)\right\}.
\end{equation*}
For every pair of states $s,s'\in\SSS,$ we construct $p$ by distinguishing the following two cases:
\begin{itemize}
	\item If $N_t(s,a_{(s,s')})=0,$ then by definition, $\rad_{p,t}(s, a_{(s,s')})\geq1,$ therefore every probability distribution $\bar{p}$ on $\SSS$ belongs to $H_{p, t}(s, a ; 0)$. Setting $p(\cdot | s, a_{(s, s')}) = p_t(\cdot |s, a_{(s, s')})$ for any $t$ satisfies the requirement in the Lemma.
	\item If $N_t(s,a_{(s,s')})>0,$ then we know from Assumption \ref{ass:alternative} and eqn. \ref{eq:p_bar} that
		$$\bar{p}_t(s'|s, a_{(s,s')}) = \frac{1}{N^+_t(s, a_{(s,s')})} \sum^{t-1}_{q =  (\tau(m) - W) \vee 1 } p_q(s' | s, a_{(s,s')}) \mathbf{1}(s_q = s, a_q = a_{(s,s')})\geq\zeta.$$
	By conditioning on $\cal{E}_p$, we know that $\bar{p}_t(\cdot|s,a_{(s,s')})\in H_{p,t}(s,a_{(s,s')};0),$ and we can thus set $p(\cdot|s,a_{(s,s')})=\bar{p}_t(\cdot | s,a_{(s,s')}).$
\end{itemize}
Combining the above cases, the transition probability distribution $p$ satisfies the stated inequality in the Lemma, and we conclude the proof.

\section{Proof of Proposition \ref{prop:pse}}\label{sec:prop:pse}
We first show that for any time step $t\in[T],$ we have $\rho^*_t - \rho^{*\pse}_t=-l\cdot\E[X_t].$ From Section \ref{sec:mdp_lp}, we have $\rho^*_t$ is equal to the optimal value of the following linear program $\textsf{P}(r_t, p_t);$ while $\rho^{*\pse}_t$ is equal to the optimal value of the following linear program $\textsf{P}(r_t^{\pse}, p_t).$ The two linear programs has the same set of constraints, and follows from eqn. \eqref{eq:pse1}, the only difference is that the objective value of $\textsf{P}(r_t^{\pse}, p_t)$ is $l\cdot\E[X_t]$ more than that of $\textsf{P}(r_t, p_t)$ (note that the summation of $x(s,a)$ over $s\in\SSS$ and $a\in\AAA_s$ is 1 from the second constraint of the linear program \eqref{eq:primal}). Therefore, we have
\begin{align}\label{eq:pse2}\sum_{t=1}^T\left(\rho^*_t - \rho^{*\pse}_t\right)=\sum_{t=1}^T-l\cdot\E[X_t].\end{align}

Next, conditioned on any demand realizations $X_1,\ldots,X_T,$ we can show by induction that the trajectory generated by $\Pi$ on $\mathcal{M}$ and $\mathcal{M}^{\pse}$ are exactly the same as they use the same sequence of state transition distributions. Therefore, 
\begin{align}\label{eq:pse3}
	\sum_{t=1}^T\E[r_t(s_t(\mathcal{M}),a_t(\mathcal{M}))|\{X_s\}_{s=1}^t]-	\sum_{t=1}^T\E[r^{\pse}_t(s_t(\mathcal{M}^{\pse}),a_t(\mathcal{M}^{\pse}))|\{X_s\}_{s=1}^t]=\sum_{t=1}^T-l\cdot X_t.
\end{align}
Taking expectation on both sides of eqn. \eqref{eq:pse3}, and combining this with eqn. \eqref{eq:pse2}, we can conclude the statement.
\end{APPENDIX}
\end{document}